\newif\ifabbreviation
\pretocmd{\thebibliography}{\abbreviationfalse}{}{}
\newcommand{\D}{\mathcal{D}}
\newcommand{\Var}{\text{Var}}
\newcommand{\mycomment}[1]{}
\newcommand{\E}{\mathcal{E}}
\newcommand{\bigo}{\mathcal{O}}
\newcommand{\Ex}{\mathbb{E}}
\newcommand{\yhat}{\hat{Y}}
\newcommand{\y}{Y}
\newcommand{\obj}{f(\yhat,Y)}
\newcommand{\dat}{\mathscr{D}}
\newcommand{\datlabeled}{\dat_{L}}
\newcommand{\datunlabeled}{\dat_{U}}
\renewcommand{\H}{\mathcal{H}}
\theoremstyle{definition}
\newtheorem{definition}{Definition}
\newtheorem{prop}{Proposition}
\newtheorem*{prop*}{Proposition}
\newtheorem*{remark}{Remark}
\newtheorem{thm}{Theorem}
\newtheorem{problem}{Problem}
\newtheorem{lemma}{Lemma}
\DeclareMathOperator{\Cov}{Cov}
\newcommand{\lagrange}{\mathcal{L}}
\newcommand{\argmin}{\text{argmin}}
\newcommand{\ind}[1]{\mathds{1}[#1]}
\newcommand{\fyy}{f(\yhat,\y)}
\newcommand{\fyyi}{f(\yhat_i,\y_i)}
\newcommand{\fhxy}{f(h(X),\y)}
\newcommand{\convergene}{\overset{n_{\E}\to\infty}{\longrightarrow}}
\newcommand{\Dhat}{\widehat{D}}
\renewcommand{\H}{\mathcal{H}}
\newtheorem{subproblem}{Problem}
\title{Estimating and Implementing Conventional Fairness Metrics With Probabilistic Protected Features}
\author{%
  Hadi Elzayn \\
  Stanford University \\
  \texttt{hselzayn@stanford.edu} \\
  \And
  Emily Black\thanks{Work done while at Stanford University.} \\
  Barnard College\\
  \texttt{eblack@barnard.edu} \\
  \And
  Patrick Vossler \\
  Stanford University \\
  \texttt{vossler@stanford.edu} \\
  \And
  Nathanael Jo$^*$ \\
  Massachusetts Institute of Technology \\
  \texttt{nathanjo@mit.edu} \\
  \And
  Jacob Goldin \\
  University of Chicago \\
  \texttt{jsgoldin@uchicago.edu} \\
  \And
  Daniel E. Ho \\
  Stanford University \\
  \texttt{deho@stanford.edu} \\
}
\begin{document}

\maketitle

\begin{abstract}
The vast majority of techniques to train fair models require access to the protected attribute (e.g., race, gender), either at train time or in production. However, in many 
important applications this protected attribute is largely unavailable. 
In this paper, we develop methods for measuring and reducing fairness violations in a setting with limited access to protected attribute labels. 
Specifically, we assume access to protected attribute labels on a small subset of the dataset of interest, but only probabilistic estimates of protected attribute labels (e.g., via Bayesian Improved Surname Geocoding) for the rest of the dataset. With this setting in mind, we propose a method to estimate bounds on common fairness metrics for an existing model, as well as a method for training a model to limit fairness violations by solving a constrained non-convex optimization problem.
Unlike similar existing approaches, our methods take advantage of contextual information -- specifically, the relationships between a model's predictions and the probabilistic prediction of protected attributes, given the true protected attribute, and vice versa -- to provide tighter bounds on the true disparity. 
We provide an empirical illustration of our methods using voting data.
First, we show our measurement method can bound the true disparity up to 5.5x tighter than previous methods in these applications. Then, we demonstrate that our training technique effectively reduces disparity 
while incurring lesser fairness-accuracy trade-offs than other fair optimization methods with limited access to protected attributes.
\end{abstract}

\section{Introduction}
\label{sec:intro}
In both the private and public sectors, organizations are facing increased pressure to ensure they use equitable machine learning systems, whether through legal obligations or social norms~\citep{FCRA, ECOA, room2021executive, house2022blueprint, hill2020wrongfully}. 
For instance, in 2022, Meta Platforms agreed to build a system for measuring and mitigating  racial disparity in advertising to settle a lawsuit filed by the U.S. Department of Housing and Urban Development under the Fair Housing Act \citep{meta_expanding_2022,nytmetasettle}. 
 Similarly, recent Executive Orders in the United States \citep{room2021executive,bideneo2} direct government agencies to measure and mitigate disparity resulted from or exacerbated by their programs, including in the ``design, develop[ment], acqui[sition], and us[e] [of] artificial intelligence and automated systems'' \citep{bideneo2}. 
 
 Yet both companies \citep{andrus2021we} and government agencies \citep{room2021executive} rarely collect or have access to individual-level data on race and other protected attributes on a comprehensive basis. Given that the majority of algorithmic fairness tools which could be used to monitor and mitigate racial bias require demographic attributes~\citep{bird2020fairlearn,AIF360}, the limited availability of protected attribute data represents a significant challenge in assessing algorithmic fairness and training fairness-constrained systems difficult.

In this paper, we address this problem by introducing methods for \textit{1)} \textit{measuring} fairness violations in, and  \textit{2)} \textit{training} fair models on, data with limited access to protected attribute labels.
We assume access to protected attribute labels on only a small subset of the dataset of interest, along with probabilistic estimates of protected attribute labels--- for example, estimates generated using Bayesian Improved Surname Geocoding (BISG)~\citep{imai2016improving}---for the rest of the dataset. 

We leverage this limited labeled data to establish whether certain relationships between the model's predictions, the probabilistic  protected attributes, and the ground truth protected attributes hold. Given these conditions, our first main result (Theorem~\ref{thm:bound_general}) shows that we can bound a range of common fairness metrics, from above and below, over the full dataset with easily computable 
(un)fairness estimators
calculated using the \emph{probabilistic} estimates of the protected attribute. We expound on these conditions, define the 
fairness estimators, and introduce this result in Section~\ref{sec:measurement_theory}. 

To train fair models, we leverage our results on measuring fairness violations to bound disparity during learning; we enforce the upper bound on unfairness \emph{calculated with the probabilistic protected attribute} (measured on the full training set) as a surrogate fairness constraint, while also enforcing the conditions required to ensure the estimators
accurately bound disparity in the model's predictions (calculated on the labeled subset), as constraints during training. We leverage recent work in constrained learning with non-convex losses~\citep{chamon2022constrained} to ensure bounded fairness violations with near-optimal performance at prediction time.

We note that our data access setting is common across a variety of government and business contexts: first, estimating race using BISG is standard practice in government and industry~\citep{cfpb_prob_race, fiscella2006use, koh2011reducing, meta2022, meta_expanding_2022}. 
Although legal constraints or practical barriers often prevent collecting a full set of labels for protected attributes, companies and agencies can and do obtain protected attribute labels for subsets of their data. For example, companies such as Meta have started to roll out surveys asking for voluntary disclosure of demographic information to assess disparities~\citep{meta2022}. Another method for obtaining a subset of protected attribute data is to match data to publicly available administrative datasets containing protected attribute labels for a subset of records, as in, e.g. \cite{elzayn2023}. 

While our approach has stronger data requirements than recent work in similar domains~\citep{kallus2022assessing,wang2020robust} in that a subset of it must have protected attribute labels, many important applications satisfy this requirement. The advantage to using this additional data is substantially tighter bounds on disparity: in our empirical applications, we find up to 5.5x tighter bounds for fairness metrics, and up to 5 percentage points less of an accuracy penalty when enforcing the same fairness bound during training. 

In sum, we present the following contributions:
\textit{1)} We introduce a new method of bounding ground truth fairness violations across a wide range of fairness metrics in datasets with limited access to protected attribute data (Section~\ref{sec:measurement_theory});
\textit{2)} We introduce a new method of training models with near-optimal and near-feasible bounded unfairness with limited protected attribute data (Section~\ref{sec:training_theory});
\textit{3)} We show the utility of our approaches, including comparisons to a variety of baselines and other approaches, on various datasets relevant for assessing disparities in regulated contexts: we focus on voter registration data, commonly used to estimate racial disparities in voter turnout~\citep{voting_rights} (Section ~\ref{sec:empirical}) with additional datasets presented in Appendix~\ref{appsec:HMDA}.

\section{Methodology for Measurement}
\label{sec:measurement_theory}
In this section, we formally introduce our problem setting and notation, define the types of fairness metrics we can measure and enforce with our techniques, and define the \emph{probabilistic} and \emph{linear} estimators of disparity for these metrics. We then introduce our first main result: given certain relationships between the protected attribute, model predictions, and probabilistic estimates of protected attribute in the data, we can upper and lower bound the true fairness violation for a given metric using the linear and probabilistic estimators respectively.

\subsection{Notation and Preliminaries}\label{subsec:notation}
\textbf{Setting and Datasets.} We wish to learn a model of an outcome $Y$ based on individuals' features $X$. Individuals have a special binary protected class feature $B \in \{0,1\}$ which is usually unobserved, and \emph{proxy variables} $Z\subset X$ which may be correlated with $B$. Our primary dataset, called the \emph{learning dataset}, is $\dat \coloneqq \datunlabeled\cup \datlabeled $, where $\datunlabeled$ (the \emph{unlabeled set}) consists of observations $\{(X_i,Y_i,Z_i)\}_{i=1}^{n_U}$ and $\datlabeled$ (the \emph{labeled set}) additionally includes $B$ and so consists of $\{(X_i,Y_i,Z_i,B_i)\}_{i=1}^{n_L}$.  An \emph{auxiliary dataset} $\{(Z,B)\}_{i=1}^{n_A}$ allows us to learn an estimate of $b_i \coloneqq \Pr[B_i|Z_i]$; except where specified, we abstract away from the auxiliary dataset and assume access to $b$. When considering learning, we assume a \emph{hypothesis class} of models $\mathcal{H}$ which map $X$ either directly to $Y$ or a superset (e.g. $[0,1]$ rather than $\{0,1\}$), and consider models parameterized by $\theta$, i.e. $h_\theta \in \mathcal{H}$. An important random variable that we will use is the \emph{conditional covariance} of random variables. In particular, for random variables $Q,R,S,T$, we write $C_{Q,R|S,T} \coloneqq \Cov(Q,R|S,T)$.    

\textbf{Notation.} 
For a given estimator $\theta$ and random variable $X$, we use $\hat{\theta}$ to denote the sample estimator and $\hat{X}$ to denote a prediction of $X$. We use $\bar{X}$ to indicate the sample average of a random variable taken over an appropriate dataset.
In some contexts we use group-specific averages, which we indicate with a superscript. For example, we use $\bar{b}^{B_i}$ to denote the sample average of $b$ among individuals who have protected class feature $B$ equal to $B_i$. We will indicate a generic conditioning event using the symbol $\E$, and overloading it, we will write $\E_i$ as an indicator, i.e. $1$ when $\E$ is true for individual $i$ and $0$ otherwise. In the learning setting, $\E_i$ will depend on our choice of model $h$; when we want to emphasize this, we write $\E_i(h)$. We will also use the $(\cdot)$ notation to emphasize dependence on context more generally, e.g. $C_{f,b|B}(h_\theta)$ is the  covariance of $f$ and $b$ conditional on $B$ 
under
$h_\theta$. 

\textbf{Fairness Metrics.} 
In this paper, we focus on measuring and enforcing a group-level \emph{fairness metric} that can be expressed as the difference across groups of some function of the outcome and the prediction, possibly conditioned on some event. More formally:
\begin{definition}\label{def:fairness_metrics}
A \emph{fairness metric} $\mu$ is an operator associated with a function $f$ and an event $\E$ such that
\begin{align*}
    \mu(\D) := \Ex_{\D} [\fyy|\E, B=1] - \Ex_{\D}[\fyy|\E,B=0],
\end{align*}
where the distribution $\D$ corresponds to the process generating $(X,\y,\yhat)$. 
\end{definition}

Many common fairness metrics can be expressed in this form by defining an appropriate event $\E$ and function $f$.
For instance, \emph{demographic parity} in classification \citep{calders2009building, zafar2017fairness, vzliobaite2015relation} corresponds to letting $\E$ be the generically true event and $f$ be simply the indicator $\mathbf{1}[\hat{Y}=1]$. False positive rate parity \citep{chouldechova2017fair, corbett2018measure} corresponds to letting $\E$ be the event that $Y=0$ and letting $f(\hat{Y},Y)=\mathbf{1}[\hat{Y}\neq Y]$. True positive rate parity \citep{hardt2016equality} (also known as ''equality of opportunity'')  corresponds to letting $\mathcal{E}$ be the event that $Y=1$ and $f(\hat{Y},Y)=\mathbf{1}[\hat{Y}\neq Y]$.  


For simplicity, we have defined a fairness metric as a scalar and assume it is conditioned over a single event $\E$. It is easy to extend this definition to multiple events (e.g. for the fairness metric known as equalized odds) by considering a set of events $\{\E_j\}$ and keeping track of $\Ex_{\D}[f_j(\yhat,Y)|\E_j,B]$ for each. 
For clarity, we demonstrate how many familiar notions of fairness can be written in the form of Definition~\ref{def:fairness_metrics} in Appendix~\ref{appsec:fairness_metrics}. 
There are other metrics that cannot be written in this form; we do not consider those here.

\subsection{Fairness Metric Estimators}
Our first main result is that we can bound fairness metrics of the form described above over a dataset with linear and probabilistic fairness estimates, given that certain conditions hold on the relationships between model predictions, predicted protected attribute, and the ground truth protected attribute. In order to understand this result, we define the \emph{probabilistic} and \emph{linear} estimators.

Intuitively, the probabilistic estimator is the population estimate of the given disparity metric weighted by each observation’s probability of being in the relevant demographic group.
Formally:

\begin{definition}[Probabilistic Estimator] For fairness metric $\mu$ with function $f$ and event $\E$, the probabilistic estimator of $\mu$ for a dataset $\dat$ is given by \begin{align*}
    \Dhat_{\mu}^P := \frac{\sum_{i \in \E} b_i \fyyi}{\sum_{i \in \E} b_i} - \frac{\sum_{i \in \E} (1-b_i) \fyyi}{\sum_{i \in \E} (1-b_i)}. 
\end{align*}

It is assumed that at least one observation in the dataset has had $\E$ occur. 
\end{definition}


Meanwhile, the linear disparity metric is the coefficient of the probabilistic estimate $b$ in a linear regression of $\fyy$ on $b$ and a constant among individuals in $\E$. 
For example, in the case of demographic parity, where $\fyy=\hat{Y}$, it is the coefficient on $b$ in the linear regression of $\hat{Y}$ on $b$ and a constant over the entire sample.
Using the well-known form of the regression coefficient (see, e.g. \cite{angrist2009mostly}, we define the linear estimator as:
\begin{definition}[Linear Estimator]
For a fairness metric $\mu$ with function $f$ and associated event $\E$, the linear estimator of $\mu$ for a dataset $\dat$ is given by:
\begin{align*}
    \Dhat_{\mu}^L := \frac{\sum_{i \in \E}\left(\fyyi-\overline{\fyy}\right)(b_i-\overline{b})}{\sum_{i \in \E} (b_i-\bar{b})^2}
\end{align*}
where $\overline{\cdot}$ represents the sample mean among event $\E$. 

\end{definition}

We define $D_\mu^{P}$ and $D_{\mu}^{L}$ to be the asymptotes of the probabilistic and linear estimators, respectively, as the identically and independently distributed sample grows large. 

\subsection{Bounding Fairness with Disparity Estimates}

Our main result proves that when certain covariance conditions between model predictions, predicted demographic attributes, and true demographic attributes hold, we can guarantee that the linear and probabilistic estimators of disparity calculated with the \emph{probabilistic} protected attribute serve as upper and lower bounds on \emph{true}disparity. This result follows from the following proposition:

\begin{prop}\label{prop:estimators}
Suppose that $b$ is a probabilistic estimate of a demographic trait (e.g. race) given some observable characteristics $Z$ and conditional on event $\E$, so that $b=\Pr[B=1|Z,\E]$. Define $D_\mu^{P}$ as the asymptotic limit of the probabilistic disparity estimator, $\widehat{D}_{\mu}^{P}$, and $D_\mu^{L}$ as the asymptotic limit of the linear disparity estimator, $\widehat{D}_{\mu}^{L}$. 
Then:
\begin{equation}\label{asympt:chen}
    D_\mu^{P} = D_\mu - \frac{\Ex[\Cov(f(\yhat,Y),B|b,\E)]}{\Var({B|\E})}
\end{equation}
and 
\begin{equation}\label{asympt:reg}
    D_\mu^{L} = D_{\mu} + \frac{\Ex[ \Cov(f(\yhat,Y),b|B,\E)]}{\text{Var}(b|\E)}.
\end{equation}

\end{prop}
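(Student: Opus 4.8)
The plan is to pass to the asymptotic limits with the law of large numbers, turn both identities into statements about population moments, and then leverage one structural fact about $b$: since $b=\Pr[B=1\mid Z,\E]$ and $b$ is a function of $Z$, the tower property yields that $b$ is \emph{calibrated} for $B$ on the event $\E$, i.e. $\Ex[B\mid b,\E]=b$. Concretely, I would first divide the numerators and denominators of $\widehat{D}_\mu^{P}$ and $\widehat{D}_\mu^{L}$ by the sample size and apply the strong law of large numbers, together with the continuous mapping theorem for the ratios, to get
\begin{align*}
D_\mu^{P}=\frac{\Ex[b\,\fyy\mid\E]}{\Ex[b\mid\E]}-\frac{\Ex[(1-b)\,\fyy\mid\E]}{\Ex[1-b\mid\E]},\qquad D_\mu^{L}=\frac{\Cov(\fyy,b\mid\E)}{\Var(b\mid\E)}.
\end{align*}
Writing $p:=\Pr[B=1\mid\E]$, calibration gives the two facts I will use throughout: $\Ex[b\mid\E]=p$, and $\Cov(b,B\mid\E)=\Ex[b^{2}\mid\E]-p^{2}=\Var(b\mid\E)$ (using $\Ex[bB\mid\E]=\Ex[b\,\Ex[B\mid b,\E]\mid\E]=\Ex[b^{2}\mid\E]$).

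For \eqref{asympt:chen}, I would condition on $b$ inside $\E$ and decompose $\Ex[B\,\fyy\mid b,\E]=\Cov(\fyy,B\mid b,\E)+b\,\Ex[\fyy\mid b,\E]$; averaging over $b\mid\E$ gives $\Ex[b\,\fyy\mid\E]=\Ex[B\,\fyy\mid\E]-\Ex[\Cov(\fyy,B\mid b,\E)]$, and the same computation with $1-b$, $1-B$ gives $\Ex[(1-b)\,\fyy\mid\E]=\Ex[(1-B)\,\fyy\mid\E]+\Ex[\Cov(\fyy,B\mid b,\E)]$. Substituting into the formula for $D_\mu^{P}$, the leading terms form $\Ex[\fyy\mid B=1,\E]-\Ex[\fyy\mid B=0,\E]=D_\mu$, while the covariance terms combine with coefficient $\tfrac1p+\tfrac1{1-p}=\tfrac1{p(1-p)}=\tfrac1{\Var(B\mid\E)}$, which is exactly \eqref{asympt:chen}.

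For \eqref{asympt:reg}, I would apply the law of total covariance, conditioning on $B$ within $\E$:
\begin{align*}
\Cov(\fyy,b\mid\E)=\Ex\!\left[\Cov(\fyy,b\mid B,\E)\right]+\Cov\!\left(\Ex[\fyy\mid B,\E],\,\Ex[b\mid B,\E]\mid\E\right).
\end{align*}
Since $B$ is binary, the between-group term equals $\Var(B\mid\E)\,\big(\Ex[\fyy\mid B=1,\E]-\Ex[\fyy\mid B=0,\E]\big)\,\big(\Ex[b\mid B=1,\E]-\Ex[b\mid B=0,\E]\big)$; the first factor-difference is $D_\mu$, and, again because $B$ is binary, $\Var(B\mid\E)\big(\Ex[b\mid B=1,\E]-\Ex[b\mid B=0,\E]\big)=\Cov(b,B\mid\E)$, which by calibration equals $\Var(b\mid\E)$. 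Hence the between-group term is $D_\mu\,\Var(b\mid\E)$, and dividing through by $\Var(b\mid\E)$ yields \eqref{asympt:reg}.

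The conditioning manipulations and the Bernoulli algebra are routine; the real content is the calibration identity $\Cov(b,B\mid\E)=\Var(b\mid\E)$. It is what forces the \emph{between-group} dispersion of the proxy to match its total dispersion relative to $B$ (note $\Ex[b\mid B=1,\E]-\Ex[b\mid B=0,\E]$ is not $1$ in general), so the between-group term of the linear decomposition collapses cleanly to $D_\mu\,\Var(b\mid\E)$ rather than leaving a residual involving $\Ex[\Var(b\mid B,\E)]$; without calibration neither \eqref{asympt:chen} nor \eqref{asympt:reg} would hold. I would also keep the event $\E$ attached to every expectation and note where the estimators are well-defined (at least one sampled unit with $\E$, and $\Var(b\mid\E)>0$ for the linear estimator).
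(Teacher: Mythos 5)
Your proof is correct and rests on the same two pillars as the paper's: the law of large numbers to identify $D_\mu^{P}$ and $D_\mu^{L}$ with population moment ratios, and the law of total covariance conditioned on $b$ (for the probabilistic estimator) and on $B$ (for the linear estimator), with the calibration identity $\Ex[B\mid b,\E]=b$ doing the real work in both. The only substantive difference is in the second identity: the paper routes the argument through explicit linear projections $\fyy=\alpha+\gamma B+\nu$ and $\fyy=\alpha'+\beta b+\eta$ and then tracks the residuals $\nu$ and $\epsilon=B-b$ through several covariance manipulations, whereas you evaluate the between-group term $\Cov\left(\Ex[\fyy\mid B,\E],\Ex[b\mid B,\E]\mid\E\right)$ directly using the Bernoulli identity for covariances of functions of $B$ together with $\Cov(b,B\mid\E)=\Var(b\mid\E)$; your version is cleaner and avoids the projection bookkeeping, but it is the same decomposition, so I would classify it as the same approach rather than a genuinely different one.
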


Since variance is always positive, the probabilistic and linear estimators serve as bounds on disparity when $C_{f, b|B, \E}$ and $C_{f, B|b, \E}$ are either both positive or both negative, since they are effectively separated from the true disparity by these values: if they are both positive, then $D^{L}_{\mu}$ serves as an upper bound and $D^{P}_{\mu}$ serves as a lower bound; if they are both negative, then $D^{P}_{\mu}$ serves as an upper bound and $D^{L}_{\mu}$ serves as a lower bound. Formally,

\begin{thm}\label{thm:bound_general}
    Suppose that $\mu$ is a fairness measure with function $f$ and conditioning event $\E$ as described above, and that $\Ex[\Cov(f(\hat{Y},Y),b|B,\E)]>0$ and $\Ex[\Cov(f(\hat{Y},Y),B|b,\E]>0$. Then,
    \begin{align*}
        D_{\mu}^P \leq D_{\mu} \leq D_{\mu}^L.
    \end{align*}
\end{thm}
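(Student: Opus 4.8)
The plan is to derive the theorem directly from Proposition~\ref{prop:estimators}, which already expresses both asymptotic estimators as the true disparity plus or minus a signed correction term. Once that decomposition is in hand, the theorem reduces to checking the signs of the two correction terms.

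First I would invoke Equation~\eqref{asympt:chen}, which gives $D_\mu^{P} = D_\mu - \Ex[\Cov(f(\yhat,Y),B\mid b,\E)]/\Var(B\mid\E)$. The denominator $\Var(B\mid\E)$ is a variance and hence nonnegative; moreover it is strictly positive under the implicit non-degeneracy assumption that $B$ is not almost-surely constant on $\E$ (this is already needed for the probabilistic estimator to be well-defined, since otherwise one of its denominators vanishes). By hypothesis the numerator $\Ex[\Cov(f(\hat{Y},Y),B\mid b,\E)]$ is strictly positive, so the correction term is strictly positive and $D_\mu^{P} \le D_\mu$ (in fact strictly).

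Next I would apply Equation~\eqref{asympt:reg} in the same way: $D_\mu^{L} = D_\mu + \Ex[\Cov(f(\yhat,Y),b\mid B,\E)]/\Var(b\mid\E)$. Again $\Var(b\mid\E) > 0$ by non-degeneracy of the proxy on $\E$ (needed for the linear estimator's denominator to be nonzero), and the numerator is strictly positive by hypothesis, so $D_\mu \le D_\mu^{L}$. Chaining the two inequalities yields $D_\mu^{P} \le D_\mu \le D_\mu^{L}$, which is the claim.

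There is essentially no hard step here: all the analytic work lives in Proposition~\ref{prop:estimators}, and the theorem is just a sign bookkeeping corollary. The only point that warrants a sentence of care is confirming that the two conditional variances appearing in the denominators are strictly positive rather than merely nonnegative, so that the argument is not vacuous; I would note that these are exactly the non-degeneracy conditions already presupposed for the estimators to be defined. (The companion case, where both covariance terms are negative and the roles of $D_\mu^{P}$ and $D_\mu^{L}$ as upper and lower bounds swap, follows by the identical argument with the inequalities reversed, as already remarked after Proposition~\ref{prop:estimators}.)
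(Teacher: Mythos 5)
Your proposal is correct and follows essentially the same route as the paper, which likewise treats the theorem as an immediate sign-bookkeeping corollary of Proposition~\ref{prop:estimators} (the paper's appendix devotes its effort to proving the proposition itself and then notes the theorem follows since the variances in the denominators are positive). Your added remark that $\Var(B\mid\E)$ and $\Var(b\mid\E)$ must be strictly positive, not merely nonnegative, is a small point of care the paper glosses over but is exactly the non-degeneracy needed for the estimators to be well-defined.
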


Proposition~\ref{prop:estimators} and Theorem~\ref{thm:bound_general}, which we prove in Appendix~\ref{appsec:proofs}, subsume and generalize a result from \cite{elzayn2023}. These results define the conditions under which $D^{L}_{\mu}$ and $D^{P}_{\mu}$, easily computable quantities, serve as bounds on ground truth fairness violations
— and as we show in Section~\ref{subsec:measurement_exp}, this allows us to bound the specified fairness metrics in practice when measuring predictions in existing models whenever these conditions hold. However, as we demonstrate in the next section, this also provides us with a simple method to bound fairness violations when training machine learning models. 

\section{Methodology for Training}
\label{sec:training_theory}
We now combine our fairness estimators with existing constrained learning approaches to develop a methodology for training fair models when only a small subset labeled with ground true protected characteristics is available.
The key idea to our approach is to
enforce both an upper bound on the magnitude of fairness violations computed with the \emph{probabilistic} protected attributes ($\widehat{D}_{\mu}^{L}$), while also leveraging the small labeled subset to enforce the \emph{covariance constraints} referenced in Theorem \ref{thm:bound_general}. This way, as satisfaction of the covariance constraints guarantees that $\widehat{D}_{\mu}^{L}$ serves as a bound on unfairness, we ensure bounded fairness violations in models trained with probabilistic protected characteristic labels.
Due to space constraints, we defer discussion of the mathematical framework underlying the ideas to Appendix~\ref{appsec:theory}. 

\textbf{Problem Formulation}
In an ideal setting, given access to ground truth labels on the full dataset, we could simply minimize the expected risk subject to the constraint that - whichever fairness metric we have adopted - the magnitude of fairness violations do not exceed a given threshold $\alpha$. However, in settings where we only have access to a small labeled subset of data, training a model by directly minimizing the expected risk subject to fairness constraints on the labeled subset may result in poor performance, particularly for complicated learning problems. Instead, we propose enforcing an upper bound on the disparity estimator as a \emph{surrogate} fairness constraint. Recall that Theorem \ref{thm:bound_general} describes conditions under which the linear estimator upper or lower bounds the true disparity; if we can \emph{enforce} these conditions in our training process using the smaller \emph{labeled} dataset, then our training process provides the fairness guarantees desired while leveraging the information in the full dataset.

To operationalize this idea, we recall that Theorem \ref{thm:bound_general} characterizes two cases in which the linear estimator could serve as an upper bound in magnitude: in the first case, both residual covariance terms are positive, and $D_\mu \leq D_\mu^L$; in the second, both are negative, and $D_{\mu}^L \leq D_{\mu}$\footnote{Note that as a result of Proposition 1, when $C_{f,b|B,\E}$ and $C_{f,B|b,\E}$ are both positive, the true fairness metric is necessarily is forced to be positive, and symmetrically for for negative values.}. Minimizing risk while satisfying these constraints in each case separately gives the following two problems:






\setcounter{problem}{1}
\begin{subproblem}\label{prob:symmetric_indirect_a}
    \begin{align*}
        \min_{h \in \mathcal{H}}\Ex[L(h(X),Y)] &\text{ s.t. } D_{\mu}^L \leq \alpha \text{; } \Ex[C_{f,B|b,\E}] \geq 0 \text{; } \Ex[C_{f,b|B,\E}] \geq 0 
    \end{align*}

\end{subproblem}

\begin{subproblem}\label{prob:symmetric_indirect_b}
        \begin{align*}
        \min_{h \in \mathcal{H}}\Ex[L(h(X),Y)] &\text{ s.t. } -\alpha \leq D_{\mu}^L \text{; } \Ex[C_{f,B|b,\E}] \leq 0 \text{; } \Ex[C_{f,b|B,\E}] \leq 0 
    \end{align*}

\end{subproblem}
\setcounter{subproblem}{0}
To find the solution that minimizes the the fairness violation with the highest accuracy, we select:
\begin{align*}
    h^*\in \argmin_{h_{2a}^*, h_{2b}^*} \Ex[L(h(X),Y)].
\end{align*}
By construction, $h^*$ is feasible, and so satisfies $|D_{\mu}(h^*)|\leq \alpha$; moreover, while $h^*$ may not be the lowest-loss predictor such that $|D_{\mu}|\leq \alpha$, it is the best predictor which admits the linear estimator as an upper bound on the magnitude of the disparity. In other words, it is the best model for which we can \emph{guarantee} fairness using our measurement technique.

\begin{remark}
    Note that the second covariance constraint (associated with the lower-bound, i.e. the probabilistic estimator) in each problem is necessary to rule out solution far below the desired range in the opposite sign; otherwise, a solution to Problem~\ref{prob:symmetric_indirect_a} could have $D_{\mu} <-\alpha$ and to Problem~\ref{prob:symmetric_indirect_b} $D_{\mu}>\alpha$, and the ultimate $h^*$ selected could be infeasible with respect to the desired fairness constraint. (Note also that as a consequence, the probabilistic estimator will also serve as a \emph{lower bound} for the magnitude of disparity under the selected model.) 
\end{remark}



\textbf{Empirical Problem} The problems above are over the full population, but in practice we usually only have samples.
We thus now turn to the question of how we can solve the optimization problem with probabilistic fairness constraints empirically. We focus on the one-sided Problem~\ref{prob:symmetric_indirect_a} for brevity but the other side follows similarly. 
The empirical analogue of Problem~\ref{prob:symmetric_indirect_a} is the following:
\setcounter{problem}{2}
    \begin{subproblem}\label{prob:emp_problem}
        \begin{align*}
        \min_{h_\theta \in \mathcal{H}} \frac{1}{n_{\dat}} \sum_{i=1}^{n_{\dat}} L(h_{\theta}(X_i),Y_i)  \text{ s.t. } \widehat{D}_{\mu}^L(h_{\theta}) \leq \alpha  \text{; } \widehat{C}_{f,b|B,\E}(h_\theta)\geq 0
            \text{; } \widehat{C}_{f,b|B,\E}(h_\theta) \geq 0
     \end{align*}
    \end{subproblem}
    

\textbf{Solving the empirical problem.} 
While Problem~\ref{prob:emp_problem} is a constrained optimization problem, it is not, except in special cases, a convex problem. Despite this, recent results \citep{chamon2020probably,chamon2022constrained} have shown that under relatively mild conditions, a primal-dual learning algorithm can be used to obtain approximate solutions with good performance guarantees.\footnote{For the special case of linear regression with mean-squared error losses, we provide a closed-form solution to the primal problem. This can be used for a heuristic solution with appropriate dual weights.} In particular, if we define the \emph{empirical Lagrangian} as:
\begin{align}\label{eqn:emp-langn}
    \widehat{\lagrange}(\theta,\vec{\mu}) = \frac{1}{n_{\dat}} \sum_{i=1}^{n_\dat} L(h_\theta(X_i),Y_i)+\mu_L \left(\widehat{D}_{\mu}^L(h_\theta)-\alpha)\right) - \mu_{b|B} \widehat{C}_{f,b|B,
\E}- \mu_{B|b} \widehat{C}_{f,B|b,\E}
\end{align}
(where $\widehat{C}_{f,b|B,\E}$ and $\widehat{C}_{f,B|b,\E}$ are as in Problem \ref{prob:symmetric_indirect_a}), the optimization problem can be viewed as a min-max game between a primal ($\theta$) and dual ($\mu$) player where players are selecting $\theta$ and $\mu$ to
$ \max_{\mu}\min_{\theta} \widehat{\lagrange}(\theta,\mu)$. Formally, Algorithm \ref{alg:primal-dual} in the appendix provides pseudocode for a primal-dual learner similar to \cite{chamon2022constrained}, \cite{cotter2019optimization}, etc. specialized to our setting; adapting and applying Theorem 3 in \citep{chamon2022constrained}, provides the following guarantee:
    
\begin{thm}~\label{thm:rand_alg_feas}
Let $\mathcal{H}$ have a VC-dimension $d$, be \emph{decomposable}, and finely cover its convex hull. Assume that $y$ takes on a finite number of values, the induced distribution $x|y$ is non-atomic for all $y$, and Problem 2.A has a feasible solution. 
    Then if Algorithm \ref{alg:primal-dual} is run for $T$ iterations, and $\Tilde{\theta}$ is selected by uniformly drawing $t \in\{1...T\}$, the following holds with probability $1-\delta$:
    For each target constraint $\ell \in \{ D_{\mu}^L,C_{f,b|B,\E}, C_{f,B|b,\E}\}$,
    \begin{align*}
\Ex[\ell(h_{\Tilde{\theta}})] \leq c_i + \bigo\left(\frac{d \log N}{\sqrt{N}}\right) + \bigo\left( \frac{1}{ T}\right)
    \text{ and }
    \Ex[L(h_{\Tilde{\theta}},y)] \leq P^* +\bigo\left(\frac{d\log N}{\sqrt{N}}  \right)
    \end{align*}
where $P^*$ is the optimal value of Problem 2.A.
\end{thm}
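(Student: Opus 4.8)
The plan is to derive Theorem~\ref{thm:rand_alg_feas} as a specialization of the generalization-and-optimization guarantee for primal-dual constrained learning in \cite{chamon2022constrained} (their Theorem~3), checking that our problem satisfies each of its hypotheses. First I would set up the correspondence: the objective $L(h_\theta(X),Y)$ plays the role of their loss, and the three functionals $\widehat{D}_{\mu}^L(h_\theta)-\alpha$, $-\widehat{C}_{f,b|B,\E}(h_\theta)$, and $-\widehat{C}_{f,B|b,\E}(h_\theta)$ play the role of their constraint functions (with the constants $c_i$ being $\alpha$ for the first and $0$ for the latter two). I would then verify the structural assumptions on $\mathcal{H}$: the stated hypotheses (VC-dimension $d$, decomposability, and that $\mathcal{H}$ finely covers its convex hull) are exactly the conditions under which randomized/mixed classifiers can be approximated by deterministic ones in $\mathcal{H}$, so that the primal player's updates stay in the hypothesis class while retaining the approximation power of the convex hull; this is what lets us pass from the saddle point of the (convex-in-mixture) Lagrangian to a single drawn $\tilde\theta$. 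The non-atomicity of $x\mid y$ and finiteness of $y$'s range guarantee the requisite richness so the covering argument and the law-of-large-numbers-type uniform bounds apply; feasibility of Problem~2.A ensures the dual problem is bounded and strong duality (up to the stated approximation slack) holds.

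Next I would handle the two distinct error sources that appear in the bound. The $\bigo(d\log N/\sqrt N)$ term is the uniform-convergence (generalization) gap: I would invoke a uniform law of large numbers over $\mathcal{H}$ — each of $L$, $\widehat{D}_\mu^L$, $\widehat{C}_{f,b|B,\E}$, $\widehat{C}_{f,B|b,\E}$ is, up to normalization by sample counts that concentrate, an average of bounded functions of $(X,Y,B,b)$ composed with $h_\theta$ — so a VC/Rademacher bound gives the $d\log N/\sqrt N$ rate with probability $1-\delta$ (the $\log$ and the $\delta$-dependence absorbed into constants as in \cite{chamon2022constrained}). Here I would need to be slightly careful that the linear estimator and the conditional covariances are \emph{ratios} of empirical averages rather than plain averages; I would argue that the denominators ($\sum_{i\in\E}(b_i-\bar b)^2$, $\Var(b|\E)$, the group counts) are bounded away from zero with high probability under the non-atomicity assumption, so a ratio of uniformly-close numerators and denominators is itself uniformly close, changing only constants. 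The $\bigo(1/T)$ term is the optimization gap from running the primal-dual (online no-regret) dynamics for $T$ rounds: standard regret bounds for the dual update (e.g. projected gradient ascent on a bounded multiplier set) give average regret $\bigo(1/\sqrt T)$ or $\bigo(1/T)$ depending on step-size schedule, and averaging/uniform sampling of the iterate converts this into the stated additive slack on both the objective value and each constraint violation.

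Finally I would assemble: combining strong duality (approximate, from Problem~2.A feasibility plus the covering property), the dual regret bound, and the uniform generalization bound yields, for the uniformly-sampled $\tilde\theta$, simultaneously $\Ex[\ell(h_{\tilde\theta})]\le c_\ell + \bigo(d\log N/\sqrt N)+\bigo(1/T)$ for each $\ell$ and $\Ex[L(h_{\tilde\theta},y)]\le P^\star + \bigo(d\log N/\sqrt N)$, all on the $1-\delta$ event. I expect the main obstacle to be the ratio structure noted above: \cite{chamon2022constrained} state their result for constraints that are plain expectations of pointwise losses, whereas $\widehat{D}_{\mu}^L$ and the conditional covariances are nonlinear (ratio/quotient) functionals of the empirical distribution, so the reduction is not purely mechanical — one must either (i) rewrite each constraint in a ``self-normalized'' form and show the normalizer concentrates, incurring only constant-factor changes, or (ii) re-run the primal-dual analysis with these specific functionals, checking that they are still Lipschitz in $h_\theta$ on the relevant region and that their empirical versions concentrate uniformly. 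I would pursue route (i), relegating the concentration-of-denominators lemma and the VC bookkeeping to the appendix, and otherwise cite \cite{chamon2022constrained} for the primal-dual machinery.
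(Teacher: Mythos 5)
Your proposal follows essentially the same route as the paper: its Appendix B sets up the correspondence with the P-CSL problem of \cite{chamon2022constrained} (taking $\mathcal{D}_i$ to be the restriction of $\mathcal{D}$ to $\E$, the objective loss as $\ell_0$, $\ell_1 = \fhxy$ with $c_1=\alpha$, and the covariance expressions as $\ell_2,\ell_3$ with $c_2=c_3=0$), checks the same structural hypotheses (Lipschitz losses, uniform convergence of sample averages, fine covering of the convex hull, feasibility, finite-valued $y$, non-atomic $x|y$, decomposability), and then invokes their Theorem~3 and collects/hides problem-specific constants, exactly as you do. The one place you go beyond the paper is the obstacle you flag about $\widehat{D}_{\mu}^L$ and the conditional covariances being ratios of empirical averages rather than plain expectations of pointwise losses: the paper's stated reduction identifies the disparity constraint with the plain average $\ell_1=\fhxy$ and embeds group-wise sample means inside $\ell_2,\ell_3$ without ever carrying out the concentration-of-denominators argument you propose, so your route (i) is not merely optional care but is in fact needed to make the reduction fully rigorous.
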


The theorem provides an \emph{average-iterate} guarantee of approximate feasibility and optimality when a solution is drawn from the empirical distribution. Note that it is not a priori obvious whether our bounds remain informative over this empirical distribution, but 
 we show in Appendix~\ref{appsec:proofs} that the covariance conditions holding on average imply that our bounds hold on average:
\begin{prop}~\label{prop:rand_cov_bound}
    Suppose $\Tilde{\theta}$ is drawn from the empirical distribution produced by Algorithm 1. If: \[\Ex_{} \left[\Ex[ \Cov(f(h_{\Tilde{\theta}}(X),B))|\E,b]|\Tilde{\theta}\right]\geq 0 \text{ and }\Ex_{} \left[\Ex[ \Cov(f(h_{\Tilde{\theta}}(X),b))|\E,B]|\Tilde{\theta}\right]\geq 0,\] 
    
    Then $\Ex D_{\mu}(h_{\Tilde{\theta}}) \leq \Ex  D_{\mu}^{L} (h_{\Tilde{\theta}})$. 
\end{prop}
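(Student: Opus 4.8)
The plan is to obtain the claim by applying Proposition~\ref{prop:estimators} to each realized model $h_{\tilde{\theta}}$ and then averaging over the randomness in $\tilde{\theta}$. Fix a realization of $\tilde{\theta}$, so that $h_{\tilde{\theta}}$ is a deterministic hypothesis; equation~\eqref{asympt:reg} of Proposition~\ref{prop:estimators}, applied to it, gives
\[
D_\mu^L(h_{\tilde{\theta}}) \;=\; D_\mu(h_{\tilde{\theta}}) \;+\; \frac{\Ex\!\left[\Cov\!\left(f(h_{\tilde{\theta}}(X),Y),\, b \,\middle|\, B, \E\right)\right]}{\Var(b \mid \E)} .
\]
By Theorem~\ref{thm:rand_alg_feas}, $\tilde{\theta}$ is drawn uniformly from the finite iterate set $\{\theta_1,\dots,\theta_T\}$ returned by Algorithm~\ref{alg:primal-dual}, so $\Ex_{\tilde{\theta}}[\,\cdot\,]$ is a finite average, hence linear and free of integrability concerns. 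Taking $\Ex_{\tilde{\theta}}$ of the identity yields
\[
\Ex_{\tilde{\theta}}\!\left[D_\mu^L(h_{\tilde{\theta}})\right] - \Ex_{\tilde{\theta}}\!\left[D_\mu(h_{\tilde{\theta}})\right] \;=\; \Ex_{\tilde{\theta}}\!\left[\frac{\Ex\!\left[\Cov\!\left(f(h_{\tilde{\theta}}(X),Y),\, b \,\middle|\, B, \E\right)\right]}{\Var(b \mid \E)}\right] .
\]

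Next I would use that for the fairness metrics under consideration the conditioning event $\E$ depends only on $(X,Y)$ and not on the model, so $\Var(b\mid\E)$ is a fixed positive constant; pulling it out of $\Ex_{\tilde{\theta}}$ turns the right-hand side into $\Var(b\mid\E)^{-1}\,\Ex_{\tilde{\theta}}\!\left[\Ex\!\left[\Cov(f(h_{\tilde{\theta}}(X),Y), b \mid B, \E)\right]\right]$. Reading the nested expectations in the second hypothesis of the statement (with the inner covariance understood as $\Cov(f(h_{\tilde{\theta}}(X),Y),b\mid B,\E)$), this is exactly $\Var(b\mid\E)^{-1}$ times a quantity assumed to be nonnegative; since $\Var(b\mid\E)>0$, the left-hand side is nonnegative, i.e. $\Ex_{\tilde{\theta}}[D_\mu(h_{\tilde{\theta}})] \le \Ex_{\tilde{\theta}}[D_\mu^L(h_{\tilde{\theta}})]$, as desired. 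The first hypothesis is not needed for this one-sided bound; via equation~\eqref{asympt:chen} it yields the companion bound $\Ex_{\tilde{\theta}}[D_\mu^P(h_{\tilde{\theta}})] \le \Ex_{\tilde{\theta}}[D_\mu(h_{\tilde{\theta}})]$ by the identical argument.

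Essentially all the content lives in Proposition~\ref{prop:estimators}, so I do not expect a real obstacle. The one step I would be careful about is the interchange of $\Ex_{\tilde{\theta}}$ with the ratio defining the residual-covariance term: here it is harmless because $\Var(b\mid\E)$ is constant in $\tilde{\theta}$, but it is the place where an implicit assumption (model-independence of the denominator) is being invoked, so I would state it explicitly. If one instead allows $\E$ to depend on $h$, the same conclusion goes through provided $\Var(b\mid\E(h_{\tilde{\theta}}))$ is bounded away from $0$ and $\infty$ over the iterates, at the cost of a slightly more careful bounding argument.
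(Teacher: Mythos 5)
Your proposal is correct and follows essentially the same route as the paper: apply Proposition~\ref{prop:estimators} pointwise for a fixed $\tilde{\theta}$, then average over the draw of $\tilde{\theta}$ using linearity of expectation and the model-independence of the data distribution. If anything, you are slightly more careful than the paper's own proof, which writes out only the probabilistic-estimator identity and defers the linear-estimator case (the one actually needed for the stated conclusion) to ``a similar statement,'' and which does not explicitly flag that pulling $\Var(b\mid\E)$ out of the expectation requires $\E$ not to depend on the model.
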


\begin{remark}
    Combining Theorem~\ref{thm:rand_alg_feas} and Proposition~\ref{prop:rand_cov_bound} guarantees that a randomized classifier with parameters drawn according to the empirical distribution from Algorithm~\ref{alg:primal-dual} will approximately meet our disparity bound goals \emph{on average}. Without stronger assumptions, this is all that can be said; this is a general limitation of game-based empirical optimization methods, since they correspond equilibrium discovery, and only mixed-strategy equilibria are guaranteed to exit. In practice, however, researchers applying similar methods select the final or best feasible iterate of their model, and often find feasible good performance \citep{cotter2019optimization,wang2020robust}; thus in our results section, we compare 
    our best-iterate performance to other methods.
\end{remark}

\section{Empirical Evaluation}
\label{sec:empirical}
We now turn to experiments of our disparity measurement and fairness enforcing training methods on predicting voter turnout. We  provide additional experiments on the COMPAS dataset~\citep{angwin2016machine}, as well as on simulated data, in Appendix~\ref{appsec:HMDA} and Appendix~\ref{appsec:sim}, respectively. 

\subsection{Data}
\label{sec:data}

\textbf{L2 Dataset.} The L2 dataset provides demographic, voter, and consumer data from across the United States collected by the company L2. Here, we consider the task of predicting voter turnout for the general election in 2016 and measuring model fairness violations with respect to Black and non-Black voters. This application is particularly relevant since race/ethnicity information is often not fully available~\citep{imai2016improving}, and much of voting rights law hinges on determining whether there exists racially polarized voting and/or racial disparities in turnout~\citep{barber2022400}. 
We focus on the six states with self-reported race labels (North Carolina, South Carolina, Florida, Georgia, Louisiana, and Alabama). We denote $\hat{Y} = 1$ if an individual votes in the 2016 election and $\hat{Y} = 0$ otherwise; refer to Appendix~\ref{appsec:data_desc} for a detailed description of this dataset.

\textbf{Race Probabilities.} The L2 dataset provides information on voters' first names, last names, and census block group, allowing the use of Bayesian Improved (Firstname and) Surname Geocoding Method (BISG/BIFSG) for estimating race probabilities~\citep{elliott2008new, elliott2009using, imai2016improving}. We obtain our priors through the decennial Census in 2010 on the census block group level. AUC for BISG/BIFSG across the six states we investigate in the L2 data ranges from 0.85-0.90. Further details on how we implement BISG/BIFSG for the L2 data and its performance can be found in Appendix~\ref{appsec:bisg_results}. 


\subsection{Measurement}\label{subsec:measurement_exp}

In this section, we showcase our method of bounding true disparity when race is unobserved. Given \textit{1)} model predictions on a dataset with probabilistic race labels and \textit{2)} true race labels for a small subset of that data, we 
attempt to obtain bounds on three disparity measures: demographic disparity (DD), false positive rate disparity (FPRD), and true positive rate disparity (TPRD). 


\subsubsection{Experimental Design and Comparisons.}\label{sec:exp_design_measurement}
\textbf{Setup.} To simulate measurement of fairness violations on predictions from a pre-trained model with limited access to protected attribute, we first train unconstrained logistic regression models with an 80/20 split of the available L2 data for each state. Then, in order to simulate realistic data access conditions, we measure fairness violations on a random subsample of the test set ($n=150,000$), with 1\% ($n=1,500$) of this sample including ground truth race labels to constitute the labeled subset. We do this by first checking the covariance constraints on the labeled subset, and then calculating $\widehat{D}_L$ and $\widehat{D}_P$ on the entire set of $150,000$ examples sampled from the test set. We also compute standard errors for our estimators as specified by the procedure in Appendix Section~\ref{appsec:theory}. To evaluate our method, we measure true fairness violations on the $150,000$ examples sampled from the test set, and check to see whether we do in fact bound the true fairness violations within standard error. Further information about our unconstrained models can be found in Appendix Section~\ref{appsec:turnout_performance}. We present our results in Figure~\ref{fig:l2_measurement}.



\textbf{Comparisons}. We compare our method of estimating fairness violations using probabilistic protected characteristic labels to the method described in~\citet{kallus2022assessing}, which is one of the only comparable methods in the literature. We will refer to as KDC from here on. Details of KDC and our implementation can be found in Appendix Section~\ref{appsec:kdc}. 

\begin{figure}
\floatbox[{\capbeside\thisfloatsetup{capbesideposition={right,top},capbesidewidth=4.7cm}}]{figure}[\FBwidth]
{\caption{
Comparison of our method of bounding true disparity (blue) to the method proposed in \citet{kallus2022assessing} (grey), using a logistic regression model to predict voter turnout in six states.
Only a small subset (here, $n=1,500$, i.e. 1\%) of the data contains information on true race. 
The grey dot represents true disparity. Both methods successfully bound true disparity within its 95\% standard errors, but our estimators provide 
tighter bounds.}
        \label{fig:l2_measurement}}
{\includegraphics[width=0.63\textwidth]{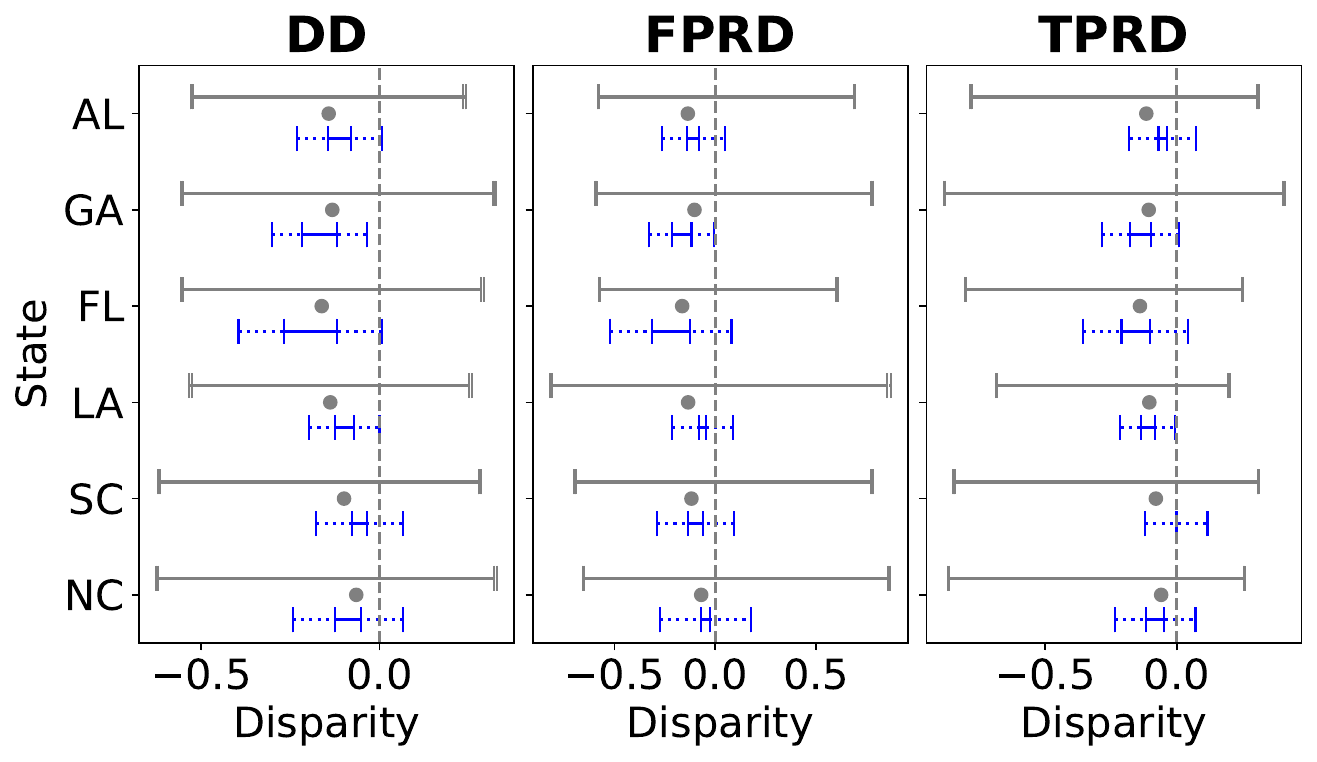}}
\end{figure}



\subsubsection{Results}
Figure~\ref{fig:l2_measurement} compares our method of estimating disparity (blue) 
with KDC (grey) for the three disparity measures and all six states. 
This figure shows estimates when training a logistic regression model, and Figure~\ref{fig:l2_measurement_rf} in the Appendix shows similar results for training random forests. Across all experiments, both KDC's and our estimators always bound true disparity. However, we observe two crucial differences: \textit{1)} our bounds are markedly tighter (3.8x smaller on average, and as much as 5.5x smaller) than KDC, and as a result \textit{2)} our bounds almost always indicate the direction of true disparity. When they do not, it is due to the standard error which shrinks with more data. By contrast, KDC's bounds consistently span[-0.5, 0.5], providing limited utility even for directional estimates.


\subsection{Training}
\label{sec:training_exp}
In this section, we demonstrate the efficacy of our approach to training fairness-constrained machine learning models. Following our algorithm in Section~\ref{sec:training_theory}, we train models with both covariance conditions necessary for the fairness bounds to hold 
and also constrain the upper bound on absolute value of disparity, $\widehat{D}_{\mu}^L$, to be below some bound $\alpha$. 
We find that 
our method \textit{1)} results in lower true disparity on the test set than using the labeled subset alone, or using prior methods to bound disparity; \textit{2)} more frequently reaches the target bound than other techniques; and \textit{3)} often incurs less of an accuracy trade-off when enforcing the same bound on disparity compared to related techniques.

\subsubsection{Experimental Design and Comparisons.}\label{sec:exp_design}
\textbf{Experimental Design.} 
We demonstrate our technique by training logistic regression models to make predictions with bounded DD, FPRD, and TPRD across a range of bounds. We include results for neural network models in Appendix Section~\ref{appsec:neural_net}. We train these models on the data from Florida within the L2 dataset, as it has the largest unconstrained disparity among the six states, see Figure~\ref{fig:l2_measurement}.
We report the mean and standard deviations of our experimental results over ten trials.
For each trial, we split our data ($n=150,000$) into train and test sets, with a 80/20 split. From the training set, we subsample the labeled subset so that it is 1\% of the total data ($n=1,500$). 
To enforce fairness constraints during training, we solve the empirical problem 3A and its symmetric analogue, which enforces negative covariance conditions and $\widehat{D}_{\mu}^L$ as a (negative) lower bound. 
We use the labeled subset to enforce adherence to the covariance conditions during training. We use the remainder of the training data, as well as the labeled subset, to enforce the constraint on $\widehat{D}_{\mu}^L$ during training.
As noted in Section~\ref{sec:training_theory}, 
our method theoretically guarantees a near-optimal, near-feasible solution \emph{on average} over $\theta^{(1)}...\theta^{(T)}$. However, following~\citet{wang2020robust}, for each of these sub-problems, we select the best iterate $\theta^{(t)}$ which satisfies the bound on $\widehat{D}_{\mu}^L$ on the training set, the covariance constraints on the labeled subset, and that achieves the lowest loss on the training set.
We report our results on the solution between these two sub-problems that is feasible and has the lowest loss. 
We present the accuracy and resulting disparity of model predictions on the test set after constraining fairness violations during training for a range of metrics (DD, FPRD, TPRD), across a range of bounds (0.04, 0.06, 0.08, 0.10) for our method as well as three comparisons, described below, in Figure~\ref{fig:l2_training}.
Further details about the experimental setup can be found in Appendix Section~\ref{appsec:training_setup}.


\textbf{Comparisons.}
We compare our results for enforcing fairness constraints with probabilistic protected attribute labels to the following methods:
\textit{1)} A model trained \emph{only} on the labeled subset with true race labels, enforcing a fairness constraint over those labels. This is to motivate the utility of using a larger dataset with noisy labels when a smaller dataset exists on the same distribution with true labels. To implement this method, we use the non-convex constrained optimization technique from \citet{chamon2022constrained} to enforce bounds on fairness violations calculated directly on ground-truth race labels, as we describe in greater detail in Appendix~\ref{appsec:csl}. 
\textit{2)} We compare with a recent method by \citet{wang2020robust} for enforcing fairness constraints on data with noisy protected attributes and a labeled auxiliary set, which is based on an extension of \citet{kallus2022assessing}'s disparity measurement method. This method guarantees that the relevant disparity metrics will be satisfied within the specified slack, which we take as a bound. However, their implementation does not consider DD -- further details on this method can be found in Appendix Section~\ref{appsec:training_wang}. 
\textit{3)} We compare with a method for enforcing fairness with incomplete demographic labels introduced by \citet{mozannar2020fair}, which essentially modifies \citet{agarwal2018reductions}'s fair training approach 
to only enforce a fairness constraint on the available demographically labeled data. This method also guarantees 
that the relevant disparity metrics will be satisfied within specified slack, which we modify to be comparable to our bound.
Details on this approach can be found in Appendix~\ref{appsec:training_mozannar}. In Appendix Section~\ref{appsec:results_oracle_naive}, we also compare to two other models: \textit{1)} an ``oracle'' model trained to enforce a fairness constraint over the ground-truth race labels on the whole dataset; and \textit{2)} a naive model which ignores label noise and enforces disparity constraints directly on the probabilistic race labels, thresholded to be in $(0,1)$.


\begin{figure}[hbt]
    \begin{center}
\includegraphics[width=\textwidth]{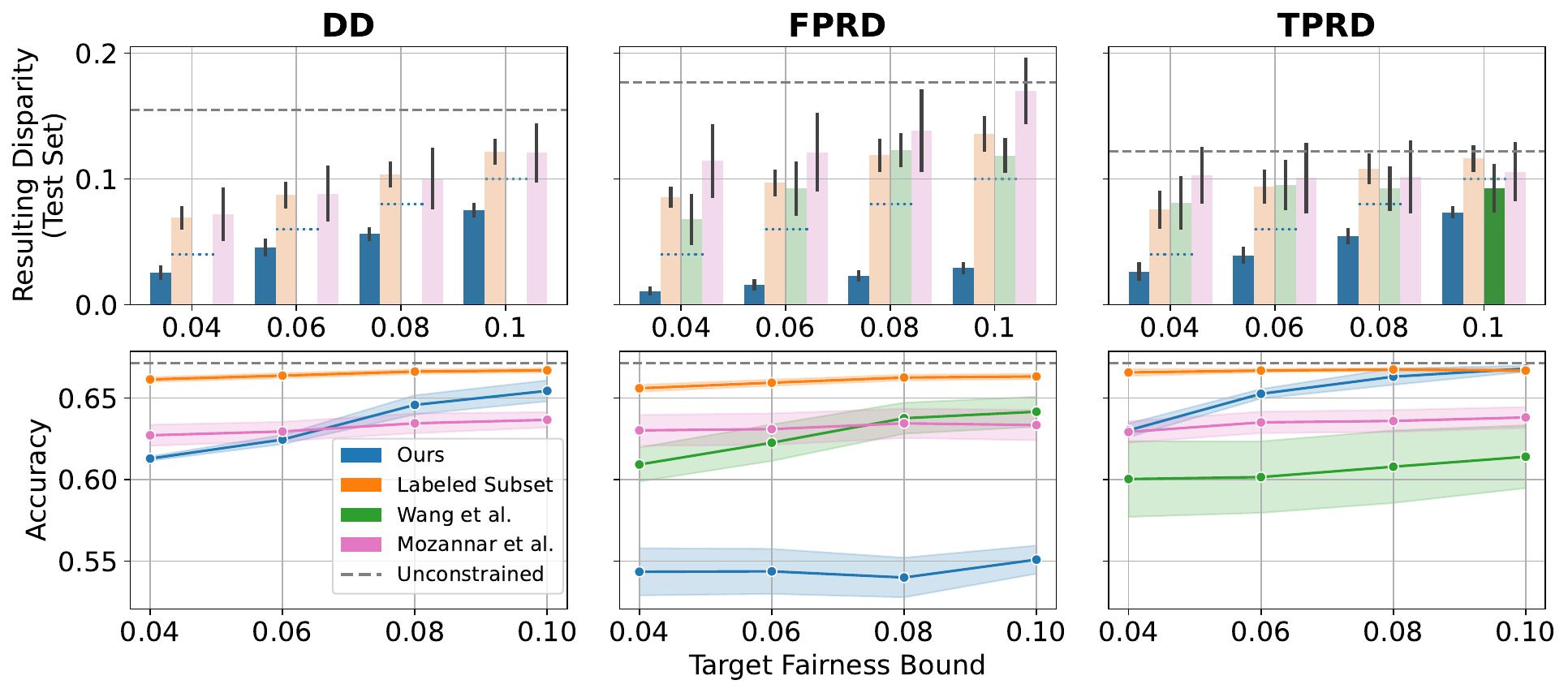}
        \caption{
        Mean and standard deviation of resulting disparity (top, y-axis) and accuracy (bottom, y-axis) on the test set after enforcing the target fairness bounds (x-axis) on our method (blue); only using the labeled subset with true labels (orange) and \citet{wang2020robust} (green) over ten trials. On the top row, we fade bars when the mean does not meet the desired bound, which is indicated by the dotted blue lines. The dashed grey line in all plots indicates disparity from the unconstrained model.
        }
        \label{fig:l2_training}
    \end{center}
    \vspace{-1em}
\end{figure}

\subsection{Results}
\label{sec:training_exp_results}
We display our results in Figure~\ref{fig:l2_training}, with additional results in Sections~\ref{appsec:addl_training} and~\ref{appsec:sim} of the Appendix.
Looking at the top row of the figure, we find that our method, in all instances, reduces disparity further than training on the labeled subset alone (blue vs. orange bars in Figure~\ref{fig:l2_training}), than using \citet{wang2020robust} (blue versus green bars in Figure~\ref{fig:l2_training}), and than using \citet{mozannar2020fair} (blue versus pink bars in Figure~\ref{fig:l2_training}). Second, our method satisfies the target fairness bound on the test set more often than the other methods (12 out of 12 times, as opposed to 0, 1, and 0 for labeled subset, Wang, and Mozannar respectively). In other words, the disparity bounds our method learns on the train set generalize better to the test set than the comparison methods. 
We note that deviations from the enforced bound on the test set, when they arise, are due to generalization error in enforcing constraints from the train to the test set, and because our training method guarantees \emph{near}-feasible solutions. 

The bottom row of the figure shows how our method performs with respect to accuracy in comparison to other methods. The results here are more variable; however, we note that this dataset seems to exhibit a steep fairness-accuracy tradeoff---yet despite our method reducing disparity much farther than all other methods (indeed, being the only metric that reliably bounds the resulting disparity in the test set), we often perform comparably or slightly better. 
For example, when mitigating TPRD, our method mitigates disparity much more than \citet{mozannar2020fair} and \citet{wang2020robust}, yet outperforms both with respect to accuracy.
In the case of FPRD, while our method does exhibit worse accuracy, these sets of experiments also exhibit the largest difference in disparity reduction between our method and the other methods, which may make such an accuracy difference inevitable. Similarly, the accuracy discrepancy between the labeled subset method and our method is reasonable given the fairness-accuracy trade-off.


\section{Related Work}
\label{sec:rel_work}
While there are many methods available for training models with bounded fairness violations~\citep{agarwal2018reductions,hardt2016equality,AIF360}, the vast majority of them require access to the protected attribute at training or prediction time. While there are other works which assume access \emph{only} to noisy protected attribute labels \citep{wang2020robust}, and \emph{no} protected attribute labels \citep{lahoti2020fairness}, or a even a labeled subset of protected attribute labels, but without an auxiliary set to generate probabilistic protected attribute estimates \citep{jung2022learning}; very few works mirror our data access setting. One exception, from which we draw inspiration, is \citet{elzayn2023}; that work studies in detail the policy-relevant question of whether Black U.S. taxpayers are audited at higher rates than non-Black taxpayers, and uses a special case of our Theorem \ref{thm:bound_general} (for measurement \emph{only}).
In this paper, we formalize and extend their technique to bound a wide array of fairness constraints, and introduce methods to \emph{train} fair models given this insight.

Within the set of techniques with a different data access paradigm, we differ from many in that we leverage information about the relationship between probabilistic protected attribute labels, ground truth protected attribute, and model predictions to measure and enforce our fairness bounds.
Thus, while we do require the covariance conditions to hold in order to enforce our fairness bounds, we note that these are requirements we can \textit{enforce} during training, unlike assumptions over noise models as in other approaches to bound true disparity with noisy labels~\citep{blum2019recovering,jiang2020identifying,celis2021fair}. Intuitively, leveraging some labeled data can allow us to have less of an accuracy trade-off when training fair models, as demonstrated with our comparison to \citet{wang2020robust}. In this case, using this data means we do not have to protect against every perturbation within a given distance to the distribution, as 
with distributionally robust optimization (DRO). Instead, need only to enforce constraints on optimization--- in our experimental setting, we see that this can lead to 
a lower fairness-accuracy trade-off.

\bibliographystyle{iclr2024_conference}
\bibliography{bib}


\clearpage
\appendix

\section{Main Proofs}
\label{appsec:proofs}
\subsection{Proof of Theorem \ref{thm:bound_general}}

First, we demonstrate the following lemma:
\begin{lemma}\label{lem:form} Suppose that $0<b<1$ almost surely and $\Ex|f(\yhat,y)|\E|$ is finite. 
Under the assumption of independent and identically distributed data with $\E$ having strictly positive probability, the asymptotic limits $D_{\mu}^P$ and $D_{\mu}^{L}$ satisfy:
\begin{align*}
    D_{\mu}^P = \frac{\Cov\left[b,\fyy|\E\right]}{\Ex[b|\E](1-\Ex[b|\E])} \qquad \text{ and }\qquad
    D_{\mu}^{L} = \frac{\Cov\left[b,\fyy|\E\right]}{\Var[b|\E]},
\end{align*}
and thus
\begin{align*}
    D_{\mu}^{P} = D_\mu^L \cdot \frac{\Var[{b|\E]}}{\Ex[b|\E](1-\Ex[b|\E])}.
\end{align*}
\end{lemma}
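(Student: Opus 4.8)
The plan is to read off each asymptotic limit directly from the closed forms of $\Dhat_\mu^P$ and $\Dhat_\mu^L$ via the strong law of large numbers applied to the sub-sample on which $\E$ occurs, and then to perform a one-line algebraic simplification; the claimed relationship between $D_\mu^P$ and $D_\mu^L$ then falls out by simply taking the ratio of the two resulting expressions.

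First I would handle the linear estimator. Dividing both the numerator and the denominator of $\Dhat_\mu^L$ by the count $n_\E := \sum_i \E_i$ turns each into an empirical average over the indices with $\E_i = 1$. Since $\Pr[\E] > 0$, we have $n_\E \to \infty$ almost surely, so the i.i.d.\ assumption together with the hypotheses $0 < b < 1$ a.s.\ and $\Ex[\,|\fyy| \mid \E\,] < \infty$ licenses the SLLN on each piece: the (normalized) numerator converges to $\Cov[b, \fyy \mid \E]$ and the denominator to $\Var[b \mid \E]$. Provided $\Var[b \mid \E] > 0$ --- which is exactly what is needed for $\Dhat_\mu^L$ to be defined --- the continuous mapping theorem gives $\Dhat_\mu^L \asympt \Cov[b,\fyy \mid \E] / \Var[b \mid \E]$, i.e.\ the stated form of $D_\mu^L$.

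Next, the probabilistic estimator. Normalizing the numerator and denominator of each of its two ratios by $n_\E$ and applying the SLLN gives $\frac{\sum_{i \in \E} b_i \fyyi}{\sum_{i \in \E} b_i} \asympt \frac{\Ex[b\,\fyy \mid \E]}{\Ex[b \mid \E]}$ and $\frac{\sum_{i \in \E} (1 - b_i)\fyyi}{\sum_{i \in \E} (1 - b_i)} \asympt \frac{\Ex[(1-b)\fyy \mid \E]}{1 - \Ex[b \mid \E]}$, where $0 < b < 1$ a.s.\ ensures $\Ex[b \mid \E] \in (0,1)$ so that neither denominator degenerates. Subtracting these limits and clearing denominators, the numerator collapses to $\Ex[b\,\fyy \mid \E] - \Ex[b \mid \E]\,\Ex[\fyy \mid \E] = \Cov[b,\fyy \mid \E]$ and the denominator to $\Ex[b \mid \E](1 - \Ex[b \mid \E])$, which yields the claimed form of $D_\mu^P$. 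Dividing the two expressions then gives $D_\mu^P = D_\mu^L \cdot \Var[b \mid \E] / \big(\Ex[b \mid \E](1 - \Ex[b \mid \E])\big)$.

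The only genuine subtlety --- and the step I would be most careful to spell out --- is the passage to the limit: one must check that conditioning on $\E$ is legitimate (which is why $\Pr[\E] > 0$ is assumed), that $n_\E \to \infty$ a.s.\ so that there is a genuinely growing i.i.d.\ sample to feed into the SLLN, and that the support and moment conditions in the hypothesis are precisely what keep all the sample means, their ratios, and the limiting denominators finite and bounded away from zero. Everything downstream of that is elementary algebra.
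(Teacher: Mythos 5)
Your proposal is correct and takes essentially the same route as the paper's proof: apply the strong law of large numbers termwise to the $n_\E$-normalized sums over the event $\E$, simplify the resulting difference of ratios to the covariance form for $D_\mu^P$, invoke the standard OLS limit (conditioned on $\E$) for $D_\mu^L$, and compare the two expressions. You are in fact somewhat more explicit than the paper about the regularity details (that $n_\E \to \infty$ almost surely and that $\Var[b\mid\E]>0$ keeps the limiting denominator nondegenerate), but the substance of the argument is identical.
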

\begin{proof}
We note that:
\begin{align*}
    \frac{1}{n_\E} \sum_{i \in \E} b_i \overset{n_{\E}\to\infty}{\longrightarrow}\Ex[b|\E]
\qquad \text{ and }\qquad    \frac{1}{n_\E} \sum_{i \in \E} b_i \fyy \overset{n_{\E}\to\infty}{\longrightarrow} \Ex[b\cdot \fyy|\E]
\end{align*}
by the strong law of large numbers. Similarly,
\begin{align*}
    \frac{1}{n_\E} \sum_{i \in \E} (1-b_i) \fyy &\overset{n_{\E}\to\infty}{\longrightarrow} \Ex[(1-b)\cdot \fyy|\E]
\\  \frac{1}{n_\E} \sum_{i \in \E} (1-b_i) &\overset{n_{\E}\to\infty}{\longrightarrow}\Ex[1-b|\E]
\end{align*}
Then diving numerators and denominators in the definition of the empirical estimator gives that:
\begin{align*}
\widehat{D}_{\mu}^P  &= \frac{\frac{1}{n_\E}  \sum_{i \in \E}b_i \fyyi}{\frac{1}{n_\E}\sum_{i \in \E} b_i} - \frac{\frac{1}{n_\E}  \sum_{i \in \E}(1-b_i) \fyyi}{\frac{1}{n_\E}\sum_{i \in \E} (1-b_i)}\\ & \overset{n_{\E}\to\infty}{\longrightarrow} \frac{\Ex[b\fyy|\E]}{\Ex[b|\E]} - \frac{\Ex[(1-b)\fyy|\E]}{\Ex[(1-b)|\E]}
\end{align*}
Combining terms and expanding out the algebra, the last term is:
\begin{align*}
 \frac{\Ex[b\fyy|\E]-\Ex[b|\E]\Ex[\fyy|\E]}{\Ex[b|\E](1-\Ex[b|\E])} = \frac{\Cov\left[b,\fyy|\E\right]}{\Ex[b|\E](1-\Ex[b|\E])}.
\end{align*}
On the other hand, the linear estimator converges asymptotically to 
\begin{align*}
    \Dhat_{\mu}^L \convergene \frac{\Cov\left[b,\fyy|\E\right]}{\Var[b|\E]}.
\end{align*}
This result can be seen by conditioning on $\E$ and then making the standard arguments for the asymptotic convergence of the OLS estimator. 
Comparing forms of the limits gives the final result. 
\end{proof}

Our key theorem follows as a corollary from the following proposition, (Proposition 1 in the main text):
\begin{prop*}

Suppose that $b$ is a prediction of an individual's protected attribute (e.g. race) given some observable characteristics $Z$ and conditional on event $\E$, so that $b=\Pr[B=1|Z,\E]$. Define $D_\mu^{P}$ as the asymptotic limit of the probabilistic disparity estimator, $\widehat{D}_p$, and $D_l$ as the asymptotic limit of the linear disparity estimator, $\widehat{D}_l$. 
Then: 
\begin{enumerate}
    \item\begin{align*} \tag{\textbf{1.1}} \label{asympt:chen1}
    D_\mu^{P} &= D_\mu - \frac{\Ex[\Cov(f(\yhat,Y),B|b,\E)]}{\Var({B|\E})}
    \end{align*}
    \item \begin{align}\tag{\textbf{1.2}}\label{asympt:reg1}
       D_\mu^{L} = D_{\mu} + \frac{\Ex[ \Cov(f(\yhat,Y),b|B,\E)]}{\text{Var}(b|\E)}
    \end{align}
\end{enumerate}
\end{prop*}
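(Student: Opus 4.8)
The plan is to build on Lemma~\ref{lem:form}, which already collapses both limits to a single object: $D_\mu^P = \Cov(b,\fyy\mid\E)\big/\big(\Ex[b\mid\E](1-\Ex[b\mid\E])\big)$ and $D_\mu^L = \Cov(b,\fyy\mid\E)\big/\Var(b\mid\E)$. So the entire task is to re-express $\Cov(b,\fyy\mid\E)$, together with the two denominators, in terms of $D_\mu$, $\Ex[\Cov(f,B\mid b,\E)]$, and $\Ex[\Cov(f,b\mid B,\E)]$. Throughout I read the outer expectations appearing in the statement as conditional on $\E$ (the only reading under which the law of total covariance applies), and I would flag this explicitly; I would also carry over from Lemma~\ref{lem:form} the standing assumptions $0<b<1$ a.s. and $\Pr[\E]>0$ that keep all the denominators nonzero.

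The one structural fact doing all the work is that $b$ is a function of $Z$, so the tower property gives $\Ex[B\mid b,\E] = \Ex[\Ex[B\mid Z,\E]\mid b,\E] = \Ex[b\mid b,\E] = b$, and since $B\in\{0,1\}$ this also yields $\Var(B\mid b,\E) = b(1-b)$. Two consequences follow at once. First, by the law of total variance $\Var(B\mid\E) = \Ex[b(1-b)\mid\E] + \Var(b\mid\E) = \Ex[b\mid\E]\big(1-\Ex[b\mid\E]\big)$, so the denominator of $D_\mu^P$ in Lemma~\ref{lem:form} is exactly $\Var(B\mid\E)$. Second, $\Cov(B,b\mid\E) = \Ex[bB\mid\E] - \Ex[b\mid\E]\Ex[B\mid\E] = \Ex[b\,\Ex[B\mid b,\E]\mid\E] - \Ex[b\mid\E]^2 = \Ex[b^2\mid\E]-\Ex[b\mid\E]^2 = \Var(b\mid\E)$.

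For \eqref{asympt:chen1} I would apply the law of total covariance conditioning on $b$ inside $\E$: $\Cov(f,B\mid\E) = \Ex[\Cov(f,B\mid b,\E)\mid\E] + \Cov\big(\Ex[f\mid b,\E],\Ex[B\mid b,\E]\mid\E\big)$, and since $\Ex[B\mid b,\E]=b$ the second term collapses (tower property again) to $\Cov(f,b\mid\E)$. Separately, a one-line computation using $B\in\{0,1\}$ gives $\Cov(f,B\mid\E) = \Ex[B\mid\E]\big(1-\Ex[B\mid\E]\big)\big(\Ex[f\mid B{=}1,\E]-\Ex[f\mid B{=}0,\E]\big) = \Var(B\mid\E)\,D_\mu$. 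Combining, $\Cov(f,b\mid\E) = \Var(B\mid\E)\,D_\mu - \Ex[\Cov(f,B\mid b,\E)\mid\E]$; dividing by $\Var(B\mid\E)$ and invoking Lemma~\ref{lem:form} gives \eqref{asympt:chen1}. For \eqref{asympt:reg1} I would instead condition on $B$: $\Cov(f,b\mid\E) = \Ex[\Cov(f,b\mid B,\E)\mid\E] + \Cov\big(\Ex[f\mid B,\E],\Ex[b\mid B,\E]\mid\E\big)$. The residual is a covariance of two functions of the Bernoulli variable $B$, hence equals $\Var(B\mid\E)\big(\Ex[f\mid B{=}1,\E]-\Ex[f\mid B{=}0,\E]\big)\big(\Ex[b\mid B{=}1,\E]-\Ex[b\mid B{=}0,\E]\big)$, i.e. $\Var(B\mid\E)\,D_\mu\,\big(\Ex[b\mid B{=}1,\E]-\Ex[b\mid B{=}0,\E]\big)$; and the same Bernoulli identity applied to $\Cov(B,b\mid\E)$ shows that last bracket times $\Var(B\mid\E)$ equals $\Cov(B,b\mid\E) = \Var(b\mid\E)$ by the second consequence above. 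So $\Cov(f,b\mid\E) = \Ex[\Cov(f,b\mid B,\E)\mid\E] + D_\mu\,\Var(b\mid\E)$, and dividing by $\Var(b\mid\E)$ yields \eqref{asympt:reg1}.

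I expect the only real obstacle to be bookkeeping rather than any deep step: keeping both applications of the law of total covariance correctly anchored inside the event $\E$, and recognizing that the ``magic'' identity $\Var(b\mid\E) = \Var(B\mid\E)\big(\Ex[b\mid B{=}1,\E]-\Ex[b\mid B{=}0,\E]\big)$ is nothing but $\Cov(B,b\mid\E)$ computed two different ways — this is precisely what converts the Bernoulli residual covariance into the clean $D_\mu\,\Var(b\mid\E)$ term and makes the statement come out in the advertised form.
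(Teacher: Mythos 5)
Your proposal is correct, and both identities check out. For \eqref{asympt:chen1} your argument is in substance identical to the paper's: both reduce to Lemma~\ref{lem:form}, identify $D_\mu$ with $\Cov(\fyy,B\mid\E)/\Var(B\mid\E)$, and apply the law of total covariance conditioning on $b$, with $\Ex[B\mid b,\E]=b$ killing the between-group term (the paper phrases this as a decomposition of $\Cov(\fyy,B-b\mid\E)$, which is the same identity rearranged). For \eqref{asympt:reg1} you use the same skeleton as the paper — total covariance of $\Cov(\fyy,b\mid\E)$ conditioning on $B$ — but you evaluate the cross term $\Cov(\Ex[\fyy\mid B,\E],\Ex[b\mid B,\E]\mid\E)$ quite differently: the paper introduces the linear projections $\fyy=\alpha+\gamma B+\nu$ and $B=b+\epsilon$ and grinds through several covariance identities ($\Cov(B,\Ex[\epsilon\mid B]\mid\E)=\Cov(b,\epsilon\mid\E)+\Var(\epsilon\mid\E)$, then $\Cov(b,\epsilon\mid\E)=0$), whereas you observe that the cross term is a covariance of two functions of a Bernoulli variable, factor it as $\Var(B\mid\E)\,D_\mu\,(\Ex[b\mid B{=}1,\E]-\Ex[b\mid B{=}0,\E])$, and close the loop with the identity $\Cov(B,b\mid\E)=\Var(b\mid\E)$. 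Your route is shorter and avoids the projection residuals entirely; the paper's route makes the connection to the regression/attenuation interpretation more explicit. Your decision to read the outer expectations as conditional on $\E$ is the right one and is what the paper implicitly does; flagging it is a genuine improvement in precision over the paper's somewhat loose notation.
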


We'll split things into separate proofs for~\eqref{asympt:chen1} and~\eqref{asympt:reg1}. We'll also first separately highlight that disparity is simply the dummy coefficient on race in a(n appropriately conditioned) regression model. This fact may be known by some readers in the context of regression analysis (especially without conditioning on a given event), but we provide proof of the general case.
\begin{lemma}
    Let $D_{\mu}$ be the disparity with function $f$ and event $\E$. Then $D_\mu$ can be written as:
    \begin{align*}
        D_{\mu} = \frac{\Cov(\obj,B|\E)}{\Var(B|\E)}.
    \end{align*}
\end{lemma}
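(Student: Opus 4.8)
The claim is that the disparity
\[
D_{\mu} = \Ex_{\D}[\fyy \mid \E, B=1] - \Ex_{\D}[\fyy \mid \E, B=0]
\]
equals $\Cov(\fyy, B \mid \E)/\Var(B\mid \E)$. The plan is to work entirely conditionally on the event $\E$ and exploit that $B$ is binary. First I would write $p \coloneqq \Pr[B=1 \mid \E]$, so that $\Ex[B\mid\E]=p$ and, since $B^2=B$, $\Var(B\mid\E)=p(1-p)$; note $p\in(0,1)$ by the standing assumption that $\E$ has positive probability and (implicitly) that both groups appear, which is needed for the denominator to be nonzero.

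Next I would expand the numerator using the tower property, conditioning on $B$ inside the conditioning on $\E$:
\[
\Cov(\fyy, B\mid\E) = \Ex[\fyy\cdot B\mid\E] - \Ex[\fyy\mid\E]\,\Ex[B\mid\E].
\]
For the first term, since $B$ is an indicator, $\Ex[\fyy\cdot B\mid\E] = p\,\Ex[\fyy\mid\E,B=1]$. For the second term, by the law of total expectation $\Ex[\fyy\mid\E] = p\,\Ex[\fyy\mid\E,B=1] + (1-p)\,\Ex[\fyy\mid\E,B=0]$, and $\Ex[B\mid\E]=p$. Substituting and writing $m_1 \coloneqq \Ex[\fyy\mid\E,B=1]$, $m_0 \coloneqq \Ex[\fyy\mid\E,B=0]$, the numerator becomes
\[
p m_1 - p\big(p m_1 + (1-p) m_0\big) = p(1-p) m_1 - p(1-p) m_0 = p(1-p)(m_1 - m_0).
\]
Dividing by $\Var(B\mid\E) = p(1-p)$ yields $m_1 - m_0 = D_\mu$, which is exactly the claim.

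This is essentially a one-line computation once the binary structure of $B$ is used, so there is no real obstacle; the only points requiring a word of care are (i) confirming $\Var(B\mid\E)=p(1-p)\neq 0$ so the ratio is well defined, which follows from both groups having positive conditional probability, and (ii) making sure the conditional expectations $m_0,m_1$ are finite, which is covered by the hypothesis $\Ex[|\fyy|\mid\E]<\infty$ carried over from the surrounding lemmas. I would present the argument in the order above: set notation for $p$, compute $\Var(B\mid\E)$, expand the covariance by conditioning on $B$, simplify using $B^2=B$ and total expectation, then divide.
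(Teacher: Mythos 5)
Your proposal is correct and follows essentially the same route as the paper's proof: expand the conditional covariance, use that $B$ is an indicator to reduce $\Ex[\fyy\, B\mid\E]$ to $p\,\Ex[\fyy\mid\E,B=1]$, apply the law of total expectation to $\Ex[\fyy\mid\E]$, and use $\Var(B\mid\E)=p(1-p)$. Your version is if anything slightly cleaner in factoring out $p(1-p)$ directly, and your explicit remarks on well-definedness of the ratio are a welcome addition the paper leaves implicit.
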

\begin{proof}
Note that by definition:
\begin{align*}
    D_{\mu} = \Ex[\obj|\E,B=1] - \Ex[\obj|\E,B=0].
\end{align*}
If the right hand side of the equation in the statement of the lemma can be written this way, we are done. But note that:
\begin{align*}
    \frac{\Cov(\obj,B|\E)}{\Var(B|\E)} = \frac{\Ex[\obj B\big|\E]-\Ex[\obj|\E]\Ex[B\big|\E]}{\Ex[B\big|\E](1-\Ex[B|\E])}.
\end{align*}
Now using the law of iterated expectations and simplifying:
\begin{align*}
    \Ex[\obj B|\E]&= \Ex[\Ex[\obj B|\E,B] \\&= \Ex[\obj B|B=1,\E]\Pr[B=1|\E] + \Ex[\obj B|B=0,\E]\Pr[B=0|\E] \\&=\Ex[\obj |B=1,\E]\Pr[B=1|\E] + \Ex[0]\Pr[B=0|\E] \\&= \Ex[\obj |B=1,\E]\Pr[B=1|\E]
\end{align*}
Moreover, since $B$ is a Bernoulli random variable, $\Pr[B=1|\E] = \Ex[B|\E]$ and
\begin{align*}
    \Var(B|\E) = \Ex[B|\E](1-\Ex[B|\E])
\end{align*}
Combining these, we can write:
\begin{align*}
    \frac{\Ex[\obj B\big|\E]\Ex[B|\E]-\Ex[\obj |\E]\Ex[B\big|\E]}{\Ex[B\big|\E](1-\Ex[B|\E])} &= \frac{\Ex[\obj |B=1,\E]-\Ex[\obj |\E]\Ex[B|\E]}{(1-\Ex[B|\E])}
\end{align*}
    This can be expanded as:
    \begin{align*}
&\frac{\Ex[\obj|B=1,\E]-\Ex[\obj | B=1,\E]\Pr[B=1|\E] - \Ex[\obj|B=0,\E]\Pr[B=0|\E]}{(1-\Ex[B|\E])}\\
& = \frac{\Ex[\obj|B=1,\E](1-\Pr[B=1|\E])-\Ex[\obj|B=0,\E](1-\Pr[B=1|E])}{(1-\Pr[B=1|\E])}
\\& = \Ex[\obj|B=1,\E]-\Ex[\obj|B=0,\E]
\end{align*}
as desired.
\end{proof}
Note that the familiar interpretation of demographic disparity being the dummy coefficient falls out from this lemma by letting $\E$ be the event ``always true'' and $\obj = Y$. 

Now we can turn to proving~\eqref{asympt:chen1}. Recall first that, by assumption:
\begin{align*}
    b&=\Pr[B=1|Z,\E] = \Ex[\ind{B=1}|Z,\E] \\&\implies b=\Ex[B|Z,\E] \ \forall Z \\&\implies \Ex[b|\E]=\Ex[\Ex[B|Z,\E]] = \Ex[B|\E]
\end{align*} by the law of iterated expectations. Moreover, if we define $\epsilon$ as $B-b$, then:
\begin{align*}\Ex[\epsilon|Z,\E] = \Ex[B|Z,\E]-\Ex[b|Z,\E]=0
\end{align*}
\begin{proof}[Proof of~\eqref{asympt:chen1}]
    Note that by Lemmas 1 and 2:  
    \begin{align*}
        D_{\mu}-D_{\mu}^{P} = \frac{\Cov\left[\obj,B|\E\right]}{\Var(B|\E)} - \frac{\Cov\left[\fyy,b|\E\right]}{\Ex[b|\E](1-\Ex[b|\E])}.
    \end{align*}
    Since $\Ex[b|\E]=\Ex[B|\E]$ and $\Var[B|\E]=\Ex[B|\E](1-\Ex[B|\E])=\Ex[b|\E](1-\Ex[b|\E])$, the denominators are the same and be collected as $\Var(B|\E)$. As for the numerators, we note that 
    \begin{align*}
        \Cov\left[\fyy,B|\E\right] - \Cov\left[\fyy,b|\E\right] = \Cov\left[\fyy,B-b|\E\right]
    \end{align*}
    by the distributive property of covariance. Recall that the law of total covariance allows us to break up the covariance of random variables into two parts when conditioned on a third. Applying this to $\fyy$ and $B-b$, with the conditioning variable being $b$, we have that:
    \begin{align*}
        \Cov\left[\fyy,B-b|\E\right] 
        &= \Ex\left[\Cov\left(\fyy,B-b\right)|\E,b\right]+\Cov\left(\Ex[\fyy|\E,b],\Ex[B-b|\E,b]\right)\\
        &= \Ex\left[\Cov\left(\fyy,B-b\right)|\E,b\right]\\
        &=\Ex\left[\Cov\left(\fyy,B\right)|\E,b\right]
    \end{align*}
    where the second equality follows because $b=\Ex[B|Z,\E]\implies \Ex[B|b,\E]=b$ and the third because $b$ is trivially a constant given $b$. Combining these together, we have that:
    \begin{align*}
        D_{\mu}-D_{\mu}^P &= \frac{\Ex\left[\Cov\left(\fyy,B\right)|\E,b\right]}{\Var[B|\E]} \\&\implies D_{\mu}^P = D_{\mu}-\frac{\Ex\left[\Cov\left(\fyy,B\right)|\E,b\right]}{\Var[B|\E]},
    \end{align*}
    as desired. 
\end{proof}

Let's do~\eqref{asympt:reg1}.
\begin{proof}[Proof of~\eqref{asympt:reg1}]
First, consider the linear projection of $\fyy$ onto $B$ given that $\E$ occurs. We can write this as:
\begin{align*}
    \fyy = \alpha + \gamma \cdot B + \nu,
\end{align*}
where it is understood that the equation holds given $\E$. Now, by the definition of linear projection, 
\begin{align*}
    \gamma = \frac{\Cov(\fyy,B|\E)}{\Var(B|\E)} = D_{\mu}
\end{align*}
where the last equality follows by Lemma 2, and by the definition of linear projection, $\Cov(B,\nu|\E)=0$.

Now, consider the linear projection of $\fyy$ onto $b$ given $\E$. Again we can write the equation:
\begin{align*}
    \fyy = \alpha' + \beta b + \eta
\end{align*}
and similarly 
\begin{align*}
   \beta = \frac{\Cov(\fyy,b|\E)}{\Var(b|\E)} = D_{\mu}^L 
\end{align*}
and $\Cov(b,\eta|\E)=0$.

Now, by applying the Law of Total Covariance to the equation above, we have:
\begin{align*}
    \beta \Var(b|\E) &= \Cov(\fyy,b|\E)\\&=\Ex[\Cov(\fyy,b|\E,B] + \Cov(\Ex[\fyy|\E,B],\Ex[b|\E,B]).
\end{align*}

We'll focus for now on the latter term. Note that by replacing $\fyy$ by $\alpha+\gamma B + \nu$, we can obtain:
\begin{align*}
\Cov(\Ex[\fyy|B,\E],\Ex[b|B,\E]) = \Cov(\gamma B+ \Ex[\nu|B],B-\Ex[\epsilon|B]\big|\E)
\end{align*}
where we've moved out the event $\E$ and used the fact that $\alpha$ is a constant and $B$ is a constant conditional on $B$ to remove them from the inner expectations. We can expand as 
\begin{align*}
\Cov\left(\gamma B+ \Ex[\nu|B,\E],B-\Ex[\epsilon|B] \big|\E\right).
\end{align*}

We can further expand this covariance term to be
\begin{align*}
    &=\gamma \Var(B|\E) - \gamma \Cov(B, \Ex(\epsilon | B) \big|\E) + \Cov(\Ex(\nu | B), B \big| \E) - \Cov(\Ex(\nu | B), \Ex(\epsilon | B) \big| \E)\\
    &= \gamma \Var(B|\E) - \gamma \Cov(B, \Ex(\epsilon | B) \big|\E),
\end{align*}

where the last equality is due to the fact that $B$ is binary so the covariance between $B$ and $\nu$ equals zero.

Next we show that the term $\Cov(B, \Ex(\epsilon | B) \big|\E)$ can be written in terms of $b$ and $\epsilon$,
\begin{align*}
    \Cov(B, \Ex(\epsilon | B) \big|\E) &= \Ex[B \Ex[\epsilon |B]] - \Ex[B] \Ex[\Ex[\epsilon |B]]\\
    &= \Ex[\Ex[B\epsilon |B]\big | \E] - \Ex[B | \E ] \Ex[\Ex[\epsilon |B] | \E]\\
    &= \Ex[B\epsilon | \E] - \Ex[B | \E] \Ex[\epsilon | \E]\\
    &= \Cov(B, \epsilon \big | \E)\\
    &= \Cov(b + \epsilon, \epsilon \big| \E)\\
    &= \Cov(b, \epsilon \big | \E) + \Var(\epsilon | \E).
\end{align*}

Plugging these results back into the original equation and using the fact that $B = b + \epsilon$, we have
\begin{align*}
    \beta \Var(b | \E) &= \Ex[\Cov(\fyy,b|\E,B] + \gamma \Var(B|\E) - \gamma \Var(\epsilon | \E) - \gamma \Cov(b,\epsilon \big| \E)\\
    &= \gamma[\Var(b | \E) + \Cov(b,\epsilon \big| \E)] + \Ex[\Cov(\fyy,b|\E,B]\\
    &= \gamma \Var(b | \E) + \Ex[\Cov(\fyy,b|\E,B],
\end{align*}

where the last equality is due to the fact that $\Ex[\epsilon | Z, \E] = 0$.
\end{proof}

\subsection{Proof of Proposition 2}

\begin{proof}
    For a \emph{fixed} ${\tilde{\theta}}$, we can apply Theorem \ref{thm:bound_general} to write that:
    \begin{align*}
        D_{\mu}^{p}(h_{\tilde{\theta}}) = D_\mu(h_{\tilde{\theta}}) - \frac{\Ex[\Cov(f(h_{\tilde{\theta}},Y),B|b,\E]}{\Var[B|\E]},
    \end{align*}
    where the expectation in the numerator is over the distribution of the data. Now, if ${\tilde{\theta}}$ is drawn from a distribution $\boldsymbol{\theta}$ (in particular, $\boldsymbol{\theta}$  corresponding to $\theta_t$ with $t$ being drawn from $1...T$) that is independent of the data, we can treat the quantities as random variables drawn from a two step data-generating process. In our setting (as in classical, but not all, learning settings), the  distribution of future data is assumed not to depend on our selected model. Then by the linearity of expectations, we have that\begin{align*}
        \Ex_{\tilde{\theta}\sim \boldsymbol{\theta}}\left[D_{\mu}^{p}(h_{\tilde{\theta}})\right] - \Ex_{\tilde{\theta}\sim \boldsymbol{\theta}}\left[D_\mu(h_{\tilde{\theta}})\right] = \Ex_{\tilde{\theta}\sim \boldsymbol{\theta}}\left[\frac{\Ex[\Cov(f(h_{\tilde{\theta}},Y),B|b,\E]}{\Var[B|\E]}\right].
    \end{align*}
    A similar statement can be made for the relationship between $\Ex_{\tilde{\theta}\sim \boldsymbol{\theta}_T}\left[D_{\mu}^{p}(h_{\tilde{\theta}})\right]$ and $\Ex_{\tilde{\theta}\sim \boldsymbol{\theta}_T}\left[D_\mu(h_{\tilde{\theta}})\right]$. 
\end{proof}
\subsection{Standard Errors}\label{appsec:standard_errors}
Here, we discuss the calculation of standard errors; these arguments are more general, but substantially similar, version of those made in \cite{elzayn2023}. As shown in the proof of Theorem \ref{thm:bound_general}, $\widehat{D}_{\mu}^l$ and $\widehat{D}_{\mu}^p$ converge to their asymptotic limits, $D_{\mu}^l$ and $D_{\mu}^p$, respectively; however, given that we observe only a finite sample, our estimates $\widehat{D}_\mu^l$ and $\widehat{D}^p$ are subject to uncertainty  whose magnitude depends on the sample size of the data. 

Since the $\widehat{D}_{\mu}^l$ is simply the linear regression coefficient, its distribution is well-studied and well known. In particular, under the classical ordinary least squares (OLS) assumptions of normally distributed error, $\widehat{\beta}\sim \mathcal{N}\left(\beta,\frac{\sigma^2}{n s_b^2}\right)$ where $s_b^2$ is the sample variance of $b$; under mild technical conditions, central limit theorems can be invoked to show that as the size of data increases, $\widehat{\beta}$ follows a distribution that is increasingly well-approximated by said normal distribution \citep{shalizi2015truth}. Note that, since as shown in Lemma \ref{lem:form}
\begin{align*}
    D_\mu^L = \frac{\Cov(\fyy,b|\E)}{\Var[b|\E}, \qquad \qquad D_{\mu}^P = \frac{\Cov(\fyy,b|\E)}{\Ex[b|\E](1-\Ex[b|\E])},
\end{align*}
it follows that 
\begin{align*}
    D_{\mu}^P = D_{\mu}^L \cdot \frac{\Var[b|\E]}{\Ex[b|\E](1-\Ex[b|E])};
\end{align*}
analogously, by expanding the definitions of the sample estimators, we can easily see that:
\begin{align*}
    \widehat{D}_{\mu}^P = \widehat {D}_{\mu}^L = \frac{\frac{1}{n_\E}\sum_{i \in \E}(b_i-\bar{b}^\E)^2}{\bar{b}^\E(1-\bar{b}^\E)}.
\end{align*}
Then by Slutsky's theorem, we can state that:
\begin{align*}
    \widehat{D}_{\mu}^P \overset{n\to\infty}{\longrightarrow} \widehat{D}_{\mu}^L \frac{\Var[b|\E]}{\Ex[b|\E](1-\Ex[b|E])}.
\end{align*}
As a consequence, the distribution of $\widehat{D}_{\mu}^P$ is a scaled version of the distribution of $\widehat{D}_{\mu}^L$, and in particular \begin{align*} \frac{\widehat{D}_{\mu}^P-D_{\mu}^P}{\Var{\widehat{D}_\mu^L \sqrt{\frac{\Var[b|\E]}{\Ex[b|\E](1-\Ex[b|E])}}}} \overset{n\to\infty}{\longrightarrow} \mathcal{N}\left(0,1\right).
\end{align*}
Thus, in practice, we can estimate the variance of $\widehat{D}_\mu^L$ as if it were the usual OLS estimator and then estimate $\Var[b|\E]$ and $\Ex[b|\E]$ to scale it appropriately. 

\subsection{Obtaining the probabilistic prediction} 
\subsubsection{BIFSG}
Recall that conceptually, $b$ functions as a probabilistic confidence score we have that an individual has $B=1$. A perfectly calibrated $b$ will thus have $\Ex[B|b]=b$, and our main theorems assume that we have access to this. In practice, however, $b$ must be estimated; in this work, we focus on the commonly used \citep{fiscella2006use, voicu2018using, zhang2018assessing, kallus2022assessing} Bayesian Imputations with First Names, Surnames, and Geography (BIFSG). In BIFSG, we make the naive conditional independence assumption that the proxy features are independent conditional on the protected characteristic. In the case of BIFSG, this amounts to assume that:
\begin{align*}
    \Pr[F,S,G|B] = \Pr[F|B]\Pr[S|B]\Pr[G|B],
\end{align*}
where the random variable $F$ is first name, $S$ is surname, and $G$ is geography . By applying Bayes' rules to this assumption, we can obtain that:
\begin{align*}
    \Pr[B|F,S,G] = \frac{\Pr[F,S,G|B]}{\Pr[F,S,G]} = \frac{\Pr[F|B]\Pr[S|B]\Pr[G|B]}{\Pr[F,S,G]}.
\end{align*}
The right-hand side of this equation is fairly easy to estimate because it requires knowing only marginals rather than joint distributions (the denominator can be normalized away by noting that we must have that $\Pr[B=1|F,S,G]$ and $\Pr[B=0|F,S,G]$ must sum to 1), and these marginals are often obtainable in the form of publicly available datasets. Note that, BIFSG can be written in multiple forms by applying Bayes' rule again to the individual factors (e.g. replacing $\Pr[F|B]$ with $\Pr[B|F]\Pr[F]/\Pr[B]$, which may be convenient depending on the form of auxiliary data available.

For our setting, we leverage the census and home mortgage disclosure act (HMDA) data, as mentioned, to estimate $b$ from publicly available data. We provide quantitative details on our estimates in Appendix \ref{appsec:data}. We note also that since $b$ is continuous, we will discretize into equally sized bins whenever we need to compute quantities conditional on $b$. 

\subsubsection{Impact of Miscalibration}
Throughout the theoretical work, we have assumed that we have $b=\Pr[B=1|Z]$ - i.e. that $b$ is \emph{perfectly calibrated}. In reality, this is a quantity that is estimated, and will thus contain some uncertainty, including bias due to the fact that the dataset which it is estimated on (e.g. the census for the U.S. as a whole) may not be fully representative of the relevant distribution (i.e. the distribution of individuals to whom the model will be applied, which may be a particular subset). This could result in $\emph{miscalibration}$; when this happens, it could be that applying our method with our miscalibrated $b$ results in failing to bound disparity (both in measuring alone, and in training). 

Ultimately, miscalibration is only a real problem insofar as it causes the method to fail. For small amounts of miscalibration, the method tends to succeed anyway -- e.g. in our setting, we do observe that our estimates are not perfectly calibrated, but we still achieve good results. For larger, or unknown, miscalibration, there are two paths that can be taken. The first is to conduct a ``recalibration" exercise, and obtain a modified $b$ that more closely matches the distribution of interest; this can be as simple as fitting a linear regression of $B$ on $b$ in the labeled dataset and replace $b$ with the predictions of this regression. Alternatively, given an assumed bound on the magnitude of the miscalibration, Theorem \ref{thm:bound_general} can be extended to incorporate its effect. In practice, recalibration is more straightforward to do empirically, but the theoretical method can also be used for sensitivity analysis; see \cite{elzayn2023} for their discussion of the recalibration approach as well as the effect on their special-case bounds. 

Note also that, in settings where $\E$ is affected by the modeling choice $h$ - i.e. when the fairness metric involves conditioning on model predictions, as in the case of positive predictive value (PPV) - it may be the case that a perfect or well-calibrated $b$ for one model may be poorly-calibrated for another. That is, it may be that among observations, we find that that our estimate $|b(Z)-\Pr[B|Z,\E(h_\theta)]|$ is small while our estimate of $|b(Z)-\Pr[B|Z,\E(h_{\theta'})|$ is large. In this case, we can introduce a recalibration step in-between iterations, although this deviates from the theoretical assumptions that ensure convergence. Note that a sufficiently expressive model over a sufficiently powerful set of proxy features should be able obtain good calibration overall events $\E$; this suggests that another path forward in such a setting may be in investing in alternative, more powerful (e.g. machine-learned) models of $b$.

\subsection{Fairness Metrics}
\label{appsec:fairness_metrics}
As noted, many fairness metrics can be written in the form required by our formulation. For concreteness, we provide a table based on~\cite{narayanan2018translation,verma2018fairness} summarizing the choice of $f$ and $\E$ that correspond to the many of the most prominent definitions that can be written in our formulation . 
\begin{center}
\begin{tabular}{lcc}
\toprule
\textbf{Metric} & $\mathbf{\fhxy}$ & $\mathbf{\E}$ \\
Accuracy & $\mathbf{1}[h\neq y]$ & $\{\text{true}\}$ \\
Demographic Parity & $\mathbf{1}[h=1]$ & $\{\text{true}\}$ \\
True Positive Rate Parity & $\mathbf{1}[h\neq y]$ & $\{y=1\}$\\
False Positive Rate Parity & $\mathbf{1}[h\neq y]$ & $\{y=0\}$\\
True Negative Rate Parity & $\mathbf{1}[h\neq y]$ & $\{y=0\}$\\
False Negative Rate Parity & $\mathbf{1}[h \neq y]$ & $\{y=1\}$\\
\toprule
\bottomrule
\end{tabular}
\end{center}

\section{Mathematical Formulation of Fair Learning Problem}\label{appsec:theory}
\subsection{Theoretical Problem}
We begin by discussing the \emph{theoretical} problems - i.e. abstracting away from the sample of data and considering the problems we are trying to solve. 
\subsubsection{One-sided bound}
We first consider the case of imposing a one-sided bound on disparity, i.e. requiring that $D_{\mu} \leq \alpha$ but allowing $D_\mu <-\alpha$; certainly this will not be desirable in all situations, but we can use it as a building block for the two-sided bound as well. 

We begin by formalizing the ideal problem - that is, the problem we would solve if we had access to ground truth protected class. This is simply to minimize the expected risk subject to the constraint that - whichever disparity metric we have adopted - disparity is not ``too high''. This is the: 
\begin{problem}[Ideal Problem]\label{prob:ideal_asymmetric}
Given individual features $X$, labels $Y$, a loss function $L$, a model class $\mathcal{H}$, a disparity metric $\mu$, and a desired bound on disparity $\alpha$, find an $h$ to:
    \begin{align*}
        \min_{h \in \mathcal{H}}\Ex[L(h(X),Y)] \text{ s.t. } D_{\mu}(h) \leq \alpha,
    \end{align*}
    where $D_{\mu}(h)$ is the $\mu$-disparity obtained by $h$. 
\end{problem}
The ideal problem is not something we can solve because we cannot directly calculate $D_\mu$ over the dataset, since it requires the ground truth protected class label $B$. But the Theorem \ref{thm:bound_general} suggests an alternative and feasible approach: using the linear estimate of disparity as a proxy bound. That is, if the linear estimator is an upper bound on the disparity, and the linear estimator is below $\alpha$, then disparity is below $\alpha$ too. 

Formally, we would solve following problem: 
\begin{problem}[Bounded Problem Direct]\label{prob:asymmetric_direct}
Given individual features $X$, labels $Y$, a loss function $L$, a model class $\mathcal{H}$, a disparity metric $\mu$, 
 a desired on disparity $\alpha$, and a predicted protected attribute proxy $b$, find an $h$ to:
    \begin{align*}
        \min_{h \in \mathcal{H}}\Ex[L(h(X),Y)] &\text{ s.t. } D_{\mu}^L \leq \alpha \\
        &\text{ and } D_{\mu}\leq D_{\mu}^L
    \end{align*}
\end{problem}

Notice that any feasible solution to Problem \ref{prob:asymmetric_direct} must satisfy the constraints of Problem \ref{prob:ideal_asymmetric}, i.e. we must have that $D_{\mu}(h) \leq \alpha$. The gap between the performance of these two solutions can be regarded as a ``price of uncertainty''; it captures the loss we incur by being forced to use our proxy to bound disparity implicitly rather than being able to bound it directly. We explore this price by comparing to an ``oracle'' which can observe the ground truth on the full dataset and perform constrained statistical learning.


As in Problem 2, we cannot directly observe $D_\mu$, so the second constraint is not one that we can directly attempt to satisfy. But we know that it holds exactly in 
 the conditions under which Theorem \ref{thm:bound_general} applies. Therefore, we can replace that constraint with the covariance conditions:

\begin{problem}[Fair Problem - Indirect]\label{prob:asymmetric_indirect}
Given individual features $X$, labels $Y$, a loss function $L$, a model class $\H$, a disparity metric $\mu$ (with associated event $\E$ and function $\fhxy$), a desired maximum disparity $\alpha$, and a predicted proxy $b$, find an $h$ to:
\begin{align*}
    \min_{h \in \H} \Ex[L(h(X),Y)] & \text { s.t. } D_\mu^L \leq \alpha \\ & \text { and }
    \Ex[\Cov(\fhxy,b|B,\E)] \geq 0
\end{align*}
\end{problem}
And indeed, these problems are equivalent:
\begin{prop}
Problems \ref{prob:asymmetric_indirect} and \ref{prob:asymmetric_direct} are equivalent.
\end{prop}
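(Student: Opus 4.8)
The plan is to show that the two problems have identical objectives and identical feasible sets, so that any optimizer of one is an optimizer of the other with the same optimal value. The objective $\min_{h \in \H}\Ex[L(h(X),Y)]$ and the first constraint $D_\mu^L \leq \alpha$ appear verbatim in both problems, so everything reduces to a single claim: for every $h \in \H$, the second constraint of Problem~\ref{prob:asymmetric_direct}, namely $D_\mu(h) \leq D_\mu^L(h)$, holds if and only if the second constraint of Problem~\ref{prob:asymmetric_indirect}, namely $\Ex[\Cov(\fhxy, b \mid B, \E)] \geq 0$, holds.

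First I would invoke Proposition~\ref{prop:estimators}, specifically identity \eqref{asympt:reg1}, applied with the model $h$ held fixed (so that $\fyy = \fhxy$ and $\E = \E(h)$). This gives
\begin{equation*}
    D_\mu^L(h) = D_\mu(h) + \frac{\Ex[\Cov(\fhxy, b \mid B, \E)]}{\Var(b \mid \E)},
\end{equation*}
equivalently $D_\mu^L(h) - D_\mu(h) = \Ex[\Cov(\fhxy, b \mid B, \E)] / \Var(b \mid \E)$. Next I would note that under the standing assumption $0 < b < 1$ with $b$ non-degenerate conditional on $\E$ — so $\Var(b \mid \E) > 0$, which is in any case required for the linear estimator $\Dhat_\mu^L$ to be well-defined — the denominator is strictly positive. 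Hence the sign of $D_\mu^L(h) - D_\mu(h)$ agrees with the sign of $\Ex[\Cov(\fhxy, b \mid B, \E)]$, and in particular $D_\mu(h) \leq D_\mu^L(h) \iff \Ex[\Cov(\fhxy, b \mid B, \E)] \geq 0$. Thus the feasible sets coincide, the objectives coincide, and the two problems are equivalent.

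The only real subtlety — and the step I would be most careful about — is the legitimacy of applying Proposition~\ref{prop:estimators} in the learning setting: the proposition is stated for a fixed distribution over $(X, \y, \yhat)$, whereas here $\yhat = h(X)$ and the conditioning event $\E = \E(h)$ may themselves depend on the model. The resolution is that for each fixed $h \in \H$ the triple $(X, \y, h(X))$ induces such a distribution, so \eqref{asympt:reg1} applies verbatim with that $h$ held fixed, and the equivalence of the two constraints is then a pointwise statement over $h$ that lifts to the feasible sets. A minor bookkeeping point is to record explicitly (as a hypothesis inherited from the setup) that $0 < b < 1$ and $\Var(b \mid \E) > 0$ uniformly over the models under consideration, so that every quantity in \eqref{asympt:reg1} is well-defined; with that in hand the argument is essentially immediate.
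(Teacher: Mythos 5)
Your proof is correct and follows essentially the same route as the paper: both arguments reduce the claim to the observation that the objectives and the constraint $D_\mu^L \leq \alpha$ are shared, and that $D_\mu \leq D_\mu^L$ holds if and only if $\Ex[\Cov(\fhxy,b|B,\E)] \geq 0$. If anything, your version is slightly more careful than the paper's one-line proof, since you ground the iff in the exact identity of Proposition~\ref{prop:estimators} (with the strict positivity of $\Var(b|\E)$ made explicit, and the fixed-$h$ instantiation spelled out), whereas the paper attributes the biconditional directly to Theorem~\ref{thm:bound_general}, which as stated only gives one direction under both covariance hypotheses.
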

\begin{proof}
 Theorem \ref{thm:bound_general} directly says that $D_\mu^L \geq D_\mu \iff \Ex[\Cov(\fhxy,b|B,\E)] \geq 0$. Hence if $h$ satisfies the constraints of Problem \ref{prob:asymmetric_indirect} iff it satisfies those of Problem \ref{prob:asymmetric_direct}. Since the objectives are also the same, the problems are equivalent. 
\end{proof}
As written, Problem \ref{prob:asymmetric_indirect} is still using the population distributions; we will discuss its empirical analogue below. 

\subsubsection{Two-sided bound}
The two-sided bound requires that $|D_{\mu}|\leq \alpha$; this may be more common in practice. Again, we begin by considering the ideal problem:
\begin{problem}[Ideal Symmetric Problem]\label{prob:ideal_symmetric}
Given individual features $X$, labels $Y$, a loss function $L$, a model class $\mathcal{H}$, a disparity metric $\mu$, and a desired bound on disparity $\alpha$, find an $h$ to:
    \begin{align*}
        \min_{h \in \mathcal{H}}\Ex[L(h(X),Y)] \text{ s.t. }  |D_{\mu}(h)| \leq \alpha,
    \end{align*}
    where $D_{\mu}(h)$ is the $\mu$-disparity obtained by $h$. 
\end{problem}
As with Problem \ref{prob:asymmetric_direct}, we cannot directly bound disparity, since we do not have it, but we do have the disparity estimator. This leads to the following problem:
\begin{problem}[Symmetric Problem Direct ]\label{prob:symmetric_direct}
Given individual features $X$, labels $Y$, a loss function $L$, a model class $\mathcal{H}$, a disparity metric $\mu$, 
 a desired on disparity $\alpha$, and a predicted protected attribute proxy $b$, find an $h$ to:
    \begin{align*}
        \min_{h \in \mathcal{H}}\Ex[L(h(X),Y)] &\text{ s.t. } |D_{\mu}^L| \leq 
|\alpha| \\
        &\text{ and } |D_{\mu}|\leq |D_{\mu}^L|
    \end{align*}
\end{problem}

Unfortunately, we don't have any theory about putting an absolute value bound on disparity, and indeed, because the weighted and linear disparity estimators are positive scalar multiples of one another, we cannot hope to use one as a positive upper bound and the other as a negative lower bound. But notice that if we were to find the best solution when $D_{\mu}^L \in [0,\alpha]$, and the best solution when $D_{\mu}^{L} \in [-\alpha,0]$, then we would cover the same range as $[-\alpha,\alpha]$. 

One attempt to apply this principle would be to solve the following two subproblems:
\setcounter{problem}{6}
\setcounter{subproblem}{0}

\begin{subproblem}\label{prob:symmetric_indirect_wrong_a}
    \begin{align*}
        \min_{h \in \mathcal{H}}\Ex[L(h(X),Y)] &\text{ s.t. } D_{\mu}^L \leq \alpha \\
        &\text{ and } \Ex[\Cov(\fhxy,b|B,\E)] \geq 0 
    \end{align*}

\end{subproblem}

\begin{subproblem}\label{prob:symmetric_indirect_wrong_b}
    \begin{align*}
        \min_{h \in \mathcal{H}}\Ex[L(h(X),Y)] &\text{ s.t. } -\alpha \leq D_{\mu}^L  \\
        &\text{ and } \Ex[\Cov(\fhxy,b|B,\E)] \geq 0 
    \end{align*}
\end{subproblem}
\setcounter{subproblem}{0}

And take:
\begin{align*}
    h_5^* = \argmin_{h_{6a}^*,h_{6b}^*} \Ex[L(h(X),Y)].
\end{align*}
But this does not even guarantee a \emph{feasible}, let alone optimal, solution to Problem \ref{prob:symmetric_direct}. To see this, note that there is nothing prevent $h_{6a}^*$ to be not simply $\leq \alpha$, but in fact $<-\alpha$, and vice versa.  In particular, what went wrong is that we did not find the two best solutions over $[-\alpha,0]$ and $[0,\alpha]$, but rather the two best over $[-\infty,\alpha]$ and $[-\alpha,\infty]$, which is no constraint at all. 

To get around, this, though, we can solve the following two problems instead:
\setcounter{problem}{7}
\begin{subproblem}\label{prob:symmetric_indirect_a_app}
    \begin{align*}
        \min_{h \in \mathcal{H}}\Ex[L(h(X),Y)] &\text{ s.t. } D_{\mu}^L \leq \alpha \\
        &\text{ and } \Ex[\Cov(\fhxy,b|B,\E)] \geq 0 \\
        &\text{ and } \Ex[\Cov(\fhxy,B|b,\E)] \geq 0 
    \end{align*}

\end{subproblem}

\begin{subproblem}\label{prob:symmetric_indirect_b_app}
    \begin{align*}
        \min_{h \in \mathcal{H}}\Ex[L(h(X),Y)] &\text{ s.t. } -\alpha \leq D_{\mu}^L  \\
        &\text{ and } \Ex[\Cov(\fhxy,b|B,\E)] \leq 0 \\
        &\text{ and } \Ex[\Cov(\fhxy,B|b,\E)] \leq 0 
    \end{align*}
\end{subproblem}
\setcounter{subproblem}{0}

Why are these different? Notice that imposing both covariance constraints in \ref{prob:symmetric_indirect_a} enforces that $D_{\mu}^p \leq D_{\mu}\leq D_{\mu}^L$; since $D_{\mu}^p = D_{\mu}^L \frac{\Var{b}}{\E[b](1-\E[b])}$ -- i.e. $D_{\mu}^p$ is always an attenuated version of $D_{\mu}^l$ -- this can \emph{only} be the case if all three terms are nonnegative. Similarly, \ref{prob:symmetric_indirect_b} enforces that $\D_{\mu}^p \geq D_{\mu} \geq D_{\mu}^l$; this similarly ensures that all three terms are nonpositive. Since these terms also include the bound on the linear estimator, they thus ensure that if we take:
\begin{align*}
    h \in \argmin_{h_{7a}^*, h_{7b}^*} \Ex[L(h(X),Y)],
\end{align*}
we will indeed obtain a feasible solution to Problem \ref{prob:symmetric_direct}. As in Problem \ref{prob:asymmetric_indirect}, there may again be a suboptimality gap since we have effectively imposed more constraints to the original problem. 

\subsection{Solving the Empirical Problems}
In this section, we use recent results in constrained statistical learning to formulate and motivate empirical problems that we can solve which obtain approximately feasible and performant solutions to the problems above. We summarize here the conceptual basis at a high level, providing a discussion of the rationale behind Theorem 2 in the main text,  drawing heavily on \cite{chamon2022constrained}, and refer interested readers to said work as well as \cite{chamon2020probably} for a fuller and more detailed discussion of the constrained statistical learning relevant to our setting and \cite{cotter2019optimization} for more general discussion of non-convex optimization via primal-dual games. 

\subsubsection{Relating our Formulation}
We begin by describing the relationship between our problem of interest and that considered in \cite{chamon2022constrained}. The (parameterized version of the) problem in \cite{chamon2022constrained} is the following:
\begin{problem}[Parameterized Constrained Statistical Learning (P-CSL) from \cite{chamon2022constrained}]\label{prob:csl}
\begin{align*}
    P^* = \min_{\theta \in \Theta} \Ex_{(x,y) \sim \mathcal{D}_0}\left[\ell_0(f_{\theta}(x),y)\right] \text{ s.t. } \Ex_{(x,y) \sim \mathcal{D}_i} \left[ \ell_i(f_{\theta}(x,y)\right]\leq c_i, \ i=1...m
\end{align*}
\end{problem}
That is, they aim to minimize some expected loss subject to some constrained on other expected losses, with loss functions that may vary and be over different distributions. Our problem, i.e. Problem \ref{prob:asymmetric_indirect} can be seen as a special case of this, though our framing is different. To see the correspondence, consider applying the following to Problem \ref{prob:csl}:
\begin{enumerate}
    \item Take $\mathcal{D}_i$ to be the restriction of $\mathcal{D}$ to $\E$
    \item Take $\ell_0$ to be the loss function of interest, e.g. $\mathbf{1}[h \neq y]$ for accuracy
    \item Take $\ell_1 = \fhxy$ and $c_1$ as $\alpha$
    \item Take $\ell_2 = \fhxy\cdot B - \overline{\fhxy}^{B}\bar{b}^{B}$  and $c_2 =0$
    \item  Take $\ell_3= \fhxy\cdot b-\overline{\fhxy}^{b}\bar{B}^b$  and $c_3 = 0$
\end{enumerate}
Then we arrive at Problem \ref{prob:asymmetric_indirect}.

\subsubsection{Moving to the empirical problem}

The problems described above relate to the population distribution, but we only have samples from this distribution. This is, of course, the standard feature of machine learning situations; the natural strategy in such a setting is to simply solve the empirical analogue - i.e. to replace expectations over a distribution with a sample average over the realized data. Instantiating this and focusing on Problem \ref{prob:symmetric_indirect_a} 
 (since the other problems can be solved analogously and/or using it as a subproblem) we could write the following empirical problem.
 \begin{problem}\label{prob:empirical_constrained}
     \begin{align*}\min_{h \in \mathcal{H}} \frac{1}{n} \sum_{i \in n_\dat} L(h(X_i),Y_i) & \text{ s.t. } \widehat{D}_{\mu}^L \leq \alpha 
     \\ 
     & \text{ and } 0 \leq - \frac{1}{n_{\datlabeled}} \sum_{i \in \datlabeled} \left[ \left(f(h(X_i), Y_i)- \overline{f(h(X_i)),Y_i}^{B_i}\right)(b_i-\bar{b}^{B_i}) \right]  
     \\
    & \text{ and } 0 \leq - \frac{1}{n_{\datlabeled}} \sum_{i \in \datlabeled}\left[\left(f(h(X_i),Y_i)-\overline{f(h(X_i)),Y_i}^{b_i}\right)(B_i-\bar{B}^{b_i})\right]
    \end{align*}.
 \end{problem}

 Problem \ref{prob:empirical_constrained} is not, in general, a convex optimization problem; if it were, the standard machinery and solutions of convex optimization, i.e. formulating the dual problem and recovering from it a primal solution via strong duality, could be applied. However, as shown in \cite{chamon2022constrained}, under some conditions, there exists a solution to the empirical dual problem that obtains nearly the same objective value as the primal population problem. In other words, rather than applying strong duality as a consequence of problem convexity, \cite{chamon2022constrained} directly prove a relationship between the primal and the dual under some conditions. These conditions are that:
 \begin{enumerate}
         \item The losses $\ell_i(\cdot, y)$ are Lipschitz continuous for all $y$
    \item Existence of a family of funtions $\zeta_i(N,\delta)\geq 0$ that decreases monontically in $N$ and bounds the difference between the sample average and population expectatoin for each loss function
    \item There is a $\nu\geq0$ so that for each $\Phi$ in the closed convex hull of $\mathcal{H}$, there is a $\theta$ such that 
    \item The problem is feasible
 \end{enumerate}
We briefly discussing these conditions. For 1), we note that Lipschitz continuity requires existence of scalar such that $|f(x)-f(x')|\leq M|x-y|$, which will be true for bounded features when using sample averages. 2) simply requires that we are in a situation where more data is better, and is implied by the stronger condition we assume of $\mathcal{H}$ being of finite VC-dimension. 3) asks that our hypothesis class is rich enough to cover the space finely enough (how fine will determine the quality of the solution), which is met for reasonable model classes. 4), is simply a technical requirement ensuring that there exists at least some solution, is analogous to Slater's criterion in numerical optimization. 

Thus, we can leverage the described guarantees to assert that solving the empirical dual would Yet this initial result, while positive, is one of existence; to actually find a solution requires a solution. To do so, one can construct an empirical Lagrangian from the constrained empirical problem, and this can be solved by running a game between primal player, who selects a model to minimize loss, and a dual player, who selects dual parameters in an attempt to maximize it. If we construct this empirical dual in our settings, it is as in Equation \ref{eqn:emp-langn}; Algorithm \ref{alg:primal-dual} provides a primal-dual learner that instantiates this idea of a game.

\begin{algorithm}
\caption{Primal-dual algorithm for probabilistic fairness}
\label{alg:primal-dual}
\SetKwInOut{Input}{Input}
\SetKwInOut{Output}{Output}
\SetKwInOut{Initialize}{Initialize}
\SetKw{return}{return}
\SetKwInOut{Define}{Define}
\Input{Labeled subset $\datlabeled$, unlabeled data $\datunlabeled$, $\theta$-oracle, number of iterations $T \in \mathbb{N}$, step size $\eta >0$ }
\Define{$h_{\theta^{(t)}}$ as the model parameterized by $\theta^{(t)}$}
\Initialize{$\mu_L^{(1)}\gets 0$;\,\, $\mu_{b|B}^{(1)} \gets 0$;\,\, $\mu_{B|b}^{(1)} \gets 0$\\}
    \For{ $t=1 \dots T$}{
        $\theta^{(t)} \gets \argmin_{\theta} \widehat{\lagrange}(\theta,\mu^{(t)})$
        \\ $\mu_{b|B}^{(t+1)}\gets \mu_{b|B}^{(t)} + \eta \widehat{C}_{f,b|B} (h_{\theta^{(t)}})$;\,\,
        $\mu_{B|b}^{(t+1)}\gets \mu_{B|b}^{(t)} + \eta \widehat{C}_{f,B|b} (h_{\theta^{(t)}})$\\
        $\mu_{L}^{(t+1)} \gets \mu_L^{(t)} + \eta \left(\widehat{D}_{L}(h_{\theta^{(t)}}-\alpha\right)$
    }
        \return{$<\theta^{(1)},\dots,\theta^{(T)}>$}
\end{algorithm}
\subsection{Theoretical Guarantees}
 If either all of the losses are convex, or:
\begin{enumerate}
  \setcounter{enumi}{5}
    \item The outcome of interest Y takes values in a finite set
    \item The conditional random variables $X|Y$ is are non-atomic
    \item The closed convex hull of $\mathcal{H}$ is \emph{decomposable}
\end{enumerate}
Then the primal-dual algorithm \ref{alg:primal-dual} performs well. In the classification setting, which we focus on, Item 5) is trivially true. Item 6) asks that it not be the case that any of the distribution over which losses are measured induce an  outcomes induce an atomic distribution; this mild regularity condition prevents pathological cases that would be impossible to satisfy. For 7) \emph{Decomposability} is a technical condition stating that for a given function space, it closed in a particular sense: for any two function $\Phi,\Phi'$ and any measurable set $\chi$, the function that is $\Phi$ on $\chi$ and $\Phi'$ on its complement is also in the function space; many machine learning methods can be viewed from a functional analysis as optimizing over decomposable function space.

As we have shown that our problem can be written as a case of the CSL problem, and Algorithm 1 is a specialization of the primal-dual learner analyzed in \cite{chamon2022constrained}, Theorem 3 in the same applies, again with appropriate translation. In particular, the promise is that when an iterate is drawn uniformly at random, the expected losses (over the distribution of the data and this draw) for the constraints are bounded by the constraint limit $c_i$ plus the family of functions at the datasize mention in Assumption 2, plus  $2 C/(\eta T)$, where $T$ is number of iterations, $\eta$ is the learning rate, and $C$ is a constant; at the same time, the expected loss (again over both the data and drawing the iterate) is bounded by the value of primal plus several problem-specific constants that capture the difficult of the learning problem and meeting the constraints, as well as said monotonically decreasing function of the data capturing the rate of convergence. Our Theorem \ref{thm:rand_alg_feas} can be obtained by applying a standard result from statistical learning theory and collecting/re-arrange/hide problem-specific constants. 
In this section, we discuss our approach to learning a fair model using the probabilistic proxies and a small subset of labeled data. To do so, we leverage recent results in constrained statistical learning. 

\subsection{Closed-form Solution to Fair Learning Problem for Regression Setting}
In this appendix we provide a closed-form solution to the primal problem Problem~\ref{prob:emp_problem} for the special case of linear regression with mean-squared error losses and demographic parity as the disparity metric. We express the constraints in matrix notation and show that the constraints are linear in the parameter $\beta$. Thus, we are able to find a unique, closed-form solution for $\beta$ by solving the first-order conditions. Given a choice of dual variables, it can be interpreted as a regularized heuristic problem with particular weights; while there are no guarantees that this will produce a performant or even feasible solution, it may be useful when applying the method in its entirety is computationally prohibitive. 

We define the following notation for our derivation. Let $n$ denote the number of observations and $p$ the number of features in our dataset. Then let $X \in \mathbb{R}^{n \times p}, y \in \mathbb{R}^{n \times 1},\beta \in \mathbb{R}^{p \times 1}, b \in \mathbb{R}^{n \times 1}$, and $B \in \{0,1\}^{n \times 1}$. For $j= 0,1$, let $B_{j} = \{ i: B_{i} = j\}$ and $n_{j} = |B_{j}|$ denote the set of observations for which the observed protected feature $B = j$ and the size of the corresponding set, respectively. Since we consider demographic parity as the disparity metric of interest, we denote the disparity metric as $f(\hat{Y}, Y) =  \hat{Y}$.

For ease of exposition, we restate the empirical version of the constrained optimization problem for linear regression and demographic parity.
\begin{subproblem}\label{app:emp_problem}
    \begin{align*}
    &\min_{\beta}\,\,  (y - X\beta)^{\top}(y - X\beta)\\
    &\text{ s.t. } \widehat{D}_{\mu}^L \leq \alpha, \\
    & \Ex[\Cov(\widehat{Y},b|B)] \geq 0,\\
    & \Ex[\Cov(\widehat{Y},B|b)] \geq 0\\
 \end{align*}
\end{subproblem}

As discussed in Section~\ref{subsec:notation}, the linear disparity metric $\widehat{D}_{\mu}^{L}$ is the coefficient of the probabilistic attribute $b$ in a linear regression of $\hat{Y}$ on $b$. Thus, $\widehat{D}_{\mu}^{L}$ can be expressed as
\begin{align*}
   \widehat{D}_{\mu}^{L} &= (b^{\top} b)^{-1}(b^{\top}X\beta).
\end{align*}
The covariance of $\hat{Y}$ and $b$ conditional on $B$ can be written as
\begin{align}~\label{appeqn:cov_Yhat_b_B}
    \operatorname{Cov}(\hat{Y}, b | B) &= \mathbb{E}(b^{\top}X\beta | B) - \mathbb{E}(X\beta | B) \mathbb{E}(b |B)
\end{align}

We expand the first term on the right-hand side of Equation~\ref{appeqn:cov_Yhat_b_B}, considering the case where $B=1$.
\begin{align*}
    \mathbb{E}(b^{\top}X\beta | B=1) &= \frac{1}{n_{1}} \sum_{i \in B_{1}} b_{i} X_{i}\beta\\
    &= \frac{1}{n_{1}} \sum_{i \in B_{1}} \sum_{j=1}^{p} b_{i} X_{ij}\beta_{j}\\
    &= \frac{1}{n_{1}} \sum_{j=1}^{p} \sum_{i \in B_{1}} b_{i} X_{ij}\beta_{j}\\
    &= \frac{1}{n_{1}} \sum_{j=1}^{p} \beta_{j} \sum_{i \in B_{1}} b_{i} X_{ij}.
\end{align*}
Collecting the second summation as the vector $v_{1j} = \frac{1}{n_{1}} \sum_{i \in B_{1}} b_{i} X_{ij}$, we can write the expression for $\mathbb{E}(b^{\top}X\beta | B=1)$ as
\begin{align*}
    \mathbb{E}(b^{\top}X\beta | B=1) &= \sum_{j=1}^{p} \beta_{j} v_{1j} = \beta^{\top}v_{1},
\end{align*}
where $v_{1} = (v_{1j})_{j=1}^{p}$.

For the second term on the right-hand side of Equation~\ref{appeqn:cov_Yhat_b_B} we can rewrite the summation in a similar manner. Again focusing on the case where $B=1$,
\begin{align*}
    \mathbb{E}(X\beta | B) \mathbb{E}(b |B) &= \left( \frac{1}{n_{1}} \sum_{i \in B_{1}} X_{i}\beta \right) \left( \frac{1}{n_{1}} \sum_{i \in B_{1}} b_{i} \right)\\
    &= \left( \frac{1}{n_{1}} \sum_{i \in B_{1}} \sum_{j=1}^{p} X_{ij}\beta_{j} \right) \left( \frac{1}{n_{1}} \sum_{i \in B_{1}} b_{i} \right)\\
    &= \bar{b}_{1} \frac{1}{n_{1}} \sum_{i \in B_{1}} \sum_{j=1}^{p} X_{ij}\beta_{j}.
\end{align*}
We again collect the second summation and write it as $w_{1j} = \frac{1}{n_{1}} \sum_{i \in B_{1}} X_{ij}$ and then we can write $\mathbb{E}(X\beta | B) \mathbb{E}(b |B)$ as
\begin{align*}
    \mathbb{E}(X\beta | B) \mathbb{E}(b |B) & = \bar{b}_{1} \beta^{\top}w_{1},
\end{align*}
where $w_{1} = (w_{1j})_{j=1}^{p}$.

Now we can write Equation~\ref{appeqn:cov_Yhat_b_B} in matrix notation as,
\begin{equation}
    \operatorname{Cov}(\hat{Y}, b | B) = \beta^{\top}v_{1} - \bar{b}_{1} \beta^{\top}w_{1} + \beta^{\top}v_{0} - \bar{b}_{0} \beta^{\top}w_{0},
\end{equation}
where $v_{0}, w_{0}$ and $\bar{b}_{0}$ are defined equivalently for the set $B_{0}$. Finally we take the expectation of this covariance term to get,
\begin{equation}
     \Ex(\operatorname{Cov}(\hat{Y}, b | B)) =  \frac{n_{1}}{n}\left( \beta^{\top}v_{1} - \bar{b}_{1} \beta^{\top}w_{1} \right) + \frac{n_{0}}{n} \left( \beta^{\top}v_{0} - \bar{b}_{0} \beta^{\top}w_{0} \right)
\end{equation}

We now consider the covariance of $\hat{Y}$ and $B$ conditional on $b$ which can be written as
\begin{align}~\label{appeqn:cov_Yhat_B_b}
    \operatorname{Cov}(\hat{Y}, B | b) &= \mathbb{E}(B^{\top}X\beta | B) - \mathbb{E}(X\beta | b) \mathbb{E}(B |b).
\end{align}

The steps for expressing this conditional covariance in matrix notation are similar to the first covariance term, however, now we are summing over the continuous-valued variable $b$. Let $k \in [0,1]$ denote the value of $b$ we are conditioning on and let $G_{k} = \{i: b_{i} = k\}$, $n_{k} = |G_{k}|$ denote the set of observations with $b = k$ and the size of the set, respectively.

Once again we expand the first term on the right-hand side of Equation~\ref{appeqn:cov_Yhat_B_b}, this time considering the general case where $b = k$,
\begin{align*}
    \mathbb{E}(B^{\top}X\beta | B) &= \frac{1}{n_{k}} \sum_{j=1}^{p} \beta_{j} \sum_{i \in G_{k}} B_{i} X_{ij} = \beta^{\top} v_{k}.
\end{align*}
Here we define $v_{k} = (v_{kj})_{j=1}^{p}$ and $v_{kj} = \frac{1}{n_{k}} \sum_{i \in G_{k}} B_{i}X_{ij}$.
Following a similar process for the second term, we can express the term as
\begin{equation*}
    \mathbb{E}(X\beta | b) \mathbb{E}(B |b) = \bar{B}_{k}\beta^{\top}w_{k},
\end{equation*}
where $w_{k} = (w_{kj})_{j=1}^{p}$ and $w_{kj} = \frac{1}{n_{k}} \sum_{i \in G_{k}} X_ij$.
Combining the two terms together we write Equation~\ref{appeqn:cov_Yhat_B_b} as
\begin{equation}
     \operatorname{Cov}(\hat{Y}, B | b) = \sum_{k} \beta^{\top} v_{k} - \bar{B}_{k} \beta^{\top} w_{k}.
\end{equation}
For the last step we take the expectation of the conditional covariance term to get,
\begin{equation}
    \Ex(\operatorname{Cov}(\hat{Y}, B | b)) =   \sum_{k} \frac{n_{k}}{n} \left( \beta^{\top} v_{k} - \bar{B}_{k} \beta^{\top} w_{k} \right).
\end{equation}

Now we can write the empirical Lagrangian of Problem~\ref{app:emp_problem} as
\begin{align*}
    &\widehat{\lagrange}(\beta, \vec{\mu}) = (y - X\beta)^{\top}(y - X\beta) - \mu_{L} \left( (b^{\top} b)^{-1}(b^{\top}X\beta)\right)\\
    &+ \mu_{b|B} \left( \frac{n_{1}}{n}\left( \beta^{\top}v_{1} - \bar{b}_{1} \beta^{\top}w_{1} \right) + \frac{n_{0}}{n} \left( \beta^{\top}v_{0} - \bar{b}_{0} \beta^{\top}w_{0} \right) \right)\\
    &+ \mu_{B|b} \left( \sum_{k} \frac{n_{k}}{n} \left( \beta^{\top} v_{k} - \bar{B}_{k} \beta^{\top} w_{k} \right) \right).
\end{align*}


Solving for $\beta$ we get the solution,
\begin{align*}
    \beta^{*} = \frac{1}{2} (X^{\top}X)^{-1} \Big[ &2X^{\top}y + \mu_{L} \left( (b^{\top} b)^{-1}(b^{\top}X)\right)\\ 
    &- \mu_{b|B} \left( \frac{n_{1}}{n}\left( v_{1} - \bar{b}_{1} w_{1} \right) + \frac{n_{0}}{n} \left( v_{0} - \bar{b}_{0} w_{0} \right) \right)\\
    &-\mu_{B|b} \left( \sum_{k} \frac{n_{k}}{n} \left( v_{k} - \bar{B}_{k} w_{k} \right) \right) \Big].
\end{align*}


\section{Data}\label{appsec:data}

\subsection{L2 Data Description}\label{appsec:data_desc}
We select seven features as predictors in our model based on data completeness and predictive value: gender, age, estimated household income, estimated area median household income, estimated home value, area median education, and estimated area median housing value. While L2 provides a handful of other variables that point to political participation (e.g., interest in current events or number of political contributions), these features suffer from issues of data quality and completeness. For instance, only 15\% of voters have a non-null value for interest in current events. We winsorize voters with an estimated household income of greater than \$250,000 (4\%) of the dataset. Table~\ref{tab:char_l2} shows the distribution of these characteristics, as well as the number of datapoints, for each of the states we consider. In general, across the six states, a little more than half of voters are female, and the average age hovers at around 50. There is high variance across income indicators, though the mean education level attained in all states is just longer than 12 years (a little past high school). Voting rates range from 53\% in Georgia to 62\% in North Carolina, while Black voters comprise a minority of all voters in each state, anywhere from 16\% in Florida to 35\% in Louisiana and Georgia.
\begin{table}[ht]
\footnotesize
\centering
\resizebox{\textwidth}{!}{

\begin{tabular}{p{2.45cm}llllll}
\toprule
Feature &            NC &            SC &            LA &            GA &            AL &             FL \\
                           & (n=6,305,309) & (n=3,191,254) & (n=2,678,258) & (n=6,686,846) & (n=3,197,735) & (n=13,703,026) \\
\midrule
                Gender (F) &          0.54 &          0.54 &          0.55 &          0.53 &          0.54 &           0.53 \\
                           &         (0.5) &         (0.5) &         (0.5) &         (0.5) &         (0.5) &          (0.5) \\
\midrule
                       Age &         49.62 &          52.2 &         50.16 &         48.24 &         50.27 &          52.17 \\
                           &       (18.76) &       (18.69) &       (18.29) &       (18.07) &       (18.44) &        (18.89) \\
\midrule
Est. Household &     89,788.54 &     82,172.22 &     80,770.79 &     90,622.61 &     79,919.66 &       90,145.4 \\
(HH) Income &   (56,880.78) &   (53,886.64) &   (54,579.77) &   (57,699.76) &   (52,237.42) &    (56,786.94) \\
 \midrule
Est. Area Me- &     76,424.55 &      69,666.4 &     68,068.86 &      78,377.2 &     69,070.63 &      74,547.99 \\
 dian HH Income &   (32,239.45) &    (25,911.0) &   (29,779.93) &   (35,941.68) &   (27,226.34) &    (29,820.33) \\
 \midrule
Est. Home  &    300,802.36 &    233,354.36 &    199,286.06 &     273,424.9 &     201,901.9 &     360,533.81 \\
Value &  (202,634.22) &  (155,221.32) &  (123,564.26) &   (176,273.9) &   (126,255.0) &    (243,854.1) \\
 \midrule
Area Median &         12.83 &         12.64 &         12.36 &         12.72 &         12.51 &          12.65 \\
Education Year &        (1.13) &        (0.98) &        (0.92) &        (1.12) &        (0.99) &         (0.97) \\
 \midrule
 Area Median &    206,312.82 &    193,172.13 &    170,521.45 &    206,253.25 &     162,925.8 &     237,245.18 \\
Housing Value &  (106,274.59) &  (107,225.93) &   (81,184.86) &  (112,142.54) &   (81,467.58) &   (118,270.22) \\
 \midrule
  \midrule
Black &          0.22 &          0.26 &          0.32 &          0.33 &          0.27 &           0.14 \\
Vote in 2016 &          0.61 &          0.57 &          0.63 &          0.52 &          0.55 &           0.57 \\
\bottomrule
\end{tabular}
}
\caption{Distribution of features used for L2 across all six states: from left to right, North Carolina, South Carolina, Louisiana, Georgia, Alabama, and Florida. Each cell shows the mean of each feature and the standard deviation in parentheses. The last two rows show the proportion of observations that are black, and voted in the 2016 General Election.}
\label{tab:char_l2}
\end{table}

\subsection{Race Probabilities}\label{appsec:bisg_results}
The decennial Census in 2010 provides the probabilities of race given common surnames, as well as the probabilities of geography (at the census block group level) given race. In order to incorporate BIFSG, we also use the dataset provided by \cite{voicu2018using} which has the probabilities of common first names given race.

We default to using BIFSG for all voters but use BISG when a voter's first name is rare since we do not have data for them. Consequently, we only use geography instead of BISG when both one's first name and surname are rare. On average, around 70\% of people's race across the six states were predicted using BIFSG, 10\% using BISG, and 18\% using just geography; $<2\%$ of observations were dropped because we could not infer race probabilities using any of the three options.

\begin{table}[ht]
\centering
\begin{tabular}{lrrrrr}
\toprule
State  &  Accuracy &  Precision &  Recall &  AUC \\
\midrule
   NC &     0.83 &       0.77 &    0.30 & 0.85 \\
   SC &      0.81 &       0.83 &    0.35 & 0.86 \\
   LA &       0.82 &       0.87 &    0.52 & 0.89 \\
   GA &      0.80 &       0.85 &    0.49 & 0.88 \\
   AL &      0.84 &       0.89 &    0.45 & 0.90 \\
   FL &      0.89 &       0.80 &    0.33 & 0.86 \\
\bottomrule
\end{tabular}
\caption{Accuracy, precision, recall (thresholded on 0.5), and AUC for BI(FS)G for all six states considered in L2.}
\label{tab:l2_bisg_results}
\end{table}

\begin{figure}[hbt]
    \begin{center}
\includegraphics[width=\textwidth]{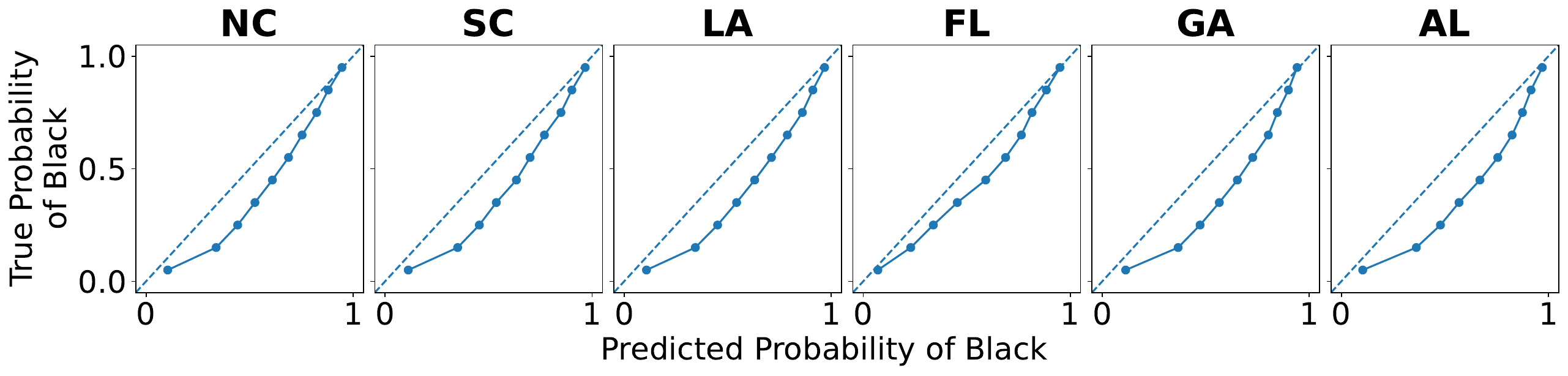}
        \caption{Calibration plots showing predicted probability of Black (x-axis) versus actual proportion of Black (y-axis).
        }
        \label{fig:l2_calibration}
    \end{center}
\end{figure}

Table~\ref{tab:l2_bisg_results} shows results for our BI(FS)G procedure with respect to true race. Accuracy and precision range from 80-90\%, but recall is much lower at around 30-50\%. Note, however, that we evaluate these metrics by binarizing race probabilities; in our estimators, we use raw probabilities instead, which provide a decent signal to true race. For instance, AUC hovers at 85-90\%, while Figure~\ref{fig:l2_calibration} shows that our predicted probabilities are generally well-calibrated to true probability of Black (although BIFSG tends to overestimate the probability of Black).

\section{Details on Measurement Experiments}\label{appsec:addl_measurement}

\subsection{Voter Turnout Prediction Performance}\label{appsec:turnout_performance}

Table~\ref{tab:pred_results_l2} shows results for voter turnout prediction on logistic regression and random forest models. In general, predicting voter turnout with the features given in L2 is a difficult task. Accuracy and precision hovers at around 70\% throughout all experiments, while recall for logistic regression ranges from 71-82\% and random forests perform slightly better at 80-90\%. This result is in line with previous literature on predicting turnout, which suggest that ``whether or not a person votes is to a large degree random'' \citep{matsusaka1999voter}. Note again that our predictors rely solely on demographic factors of voters because those are the most reliable data L2 provides us.

\begin{table}[ht]
\centering
\begin{tabular}{llrrrr}
\toprule
State & Model &  Accuracy &  Precision &  Recall &  AUC \\
\midrule
   NC &    LR &      0.72 &       0.75 &    0.81 & 0.75 \\
    &    RF &      0.72 &       0.72 &    0.89 & 0.76 \\
   SC &    LR &      0.67 &       0.69 &    0.77 & 0.71 \\
    &    RF &      0.67 &       0.67 &    0.86 & 0.71 \\
   LA &    LR &      0.70 &       0.73 &    0.84 & 0.72 \\
    &    RF &      0.70 &       0.71 &    0.91 & 0.73 \\
   GA &    LR &      0.69 &       0.70 &    0.71 & 0.75 \\
    &    RF &      0.69 &       0.68 &    0.78 & 0.75 \\
   AL &    LR &      0.67 &       0.69 &    0.74 & 0.72 \\
    &    RF &      0.67 &       0.67 &    0.80 & 0.72 \\
   FL &    LR &      0.67 &       0.69 &    0.76 & 0.71 \\
    &    RF &      0.67 &       0.67 &    0.85 & 0.72 \\
\bottomrule
\end{tabular}
\caption{Accuracy, precision, recall, and AUC for voter turnout prediction for all six states considered in L2. We evaluate two different model performances for turnout prediction: logistic regression (LR) and random forests (RF).}
\label{tab:pred_results_l2}
\end{table}

\subsection{The KDC Method}\label{appsec:kdc}
\cite{kallus2022assessing} similarly propose a method of finding the tightest possible set of true disparity given probabilistic protected attributes. A subtle difference between KDC and our method is their assumptions around the auxiliary dataset. While we consider the case where the test set (with predicted outcomes and race probabilities) subsumes the auxiliary data (which contains true race), KDC mainly considers settings where the marginal distributions $\mathbb{P}(B, Z)$ and $\mathbb{P}(Y, \hat{Y}, Z)$ are learned from two completely independent datasets -- in particular, to estimate $\mathbb{P}(B|Z)$ and $\mathbb{P}(\hat{Y}, Y|Z)$. 
Therefore, in order to produce a fairer comparison between the two methods, we instead reconfigure KDC to incorporate all the data available by treating the auxiliary data as a subset of our test set\footnote{Note that a component in calculating the variance of the KDC estimators is $r$, the proportion of datapoints from the marginal distribution $\mathbb{P}(Y, \hat{Y}, Z)$ to the entire data. Without considering this independence assumption in our calculation, $r=1$, but this loosely goes against the assumption that $r$ is closer to 0 in Section 7 of~\cite{kallus2022assessing}. For simplicity, we attenuate the multiplicative terms in the variance calculations of Equations 25 and 26 to give KDC the tightest bounds possible. However, as will be seen in Figure~\ref{fig:l2_measurement}, KDC's incredibly large bounds are mostly attributed to its point estimates rather than their variances, which are quite small.}; doing so only strengthens KDC because we pass in more information to learn both marginal distributions. However, their main method does not leverage information on $\mathbb{P}(Y, Z|B)$, as we do, so their bounds are notably wider. We also implement the KDC estimators as originally proposed in Figure~\ref{fig:l2_kallus_addl} but the results do not change substantially\footnote{In Appendix A.5, \cite{kallus2022assessing} do in fact propose an estimator where the independence assumption is violated (i.e., precisely the setting we consider where we have race probabilities in our entire data), but it suffers from two key limitations: \textit{a)} we are only provided estimators for DD and none other disparity measure, and \textit{b)} we implemented the DD estimator and it failed to bound true disparity in both applications we consider -- see Figure~\ref{fig:l2_kallus_addl}.}.

\begin{figure}[hbt]
    \begin{center}
\includegraphics[width=\textwidth]{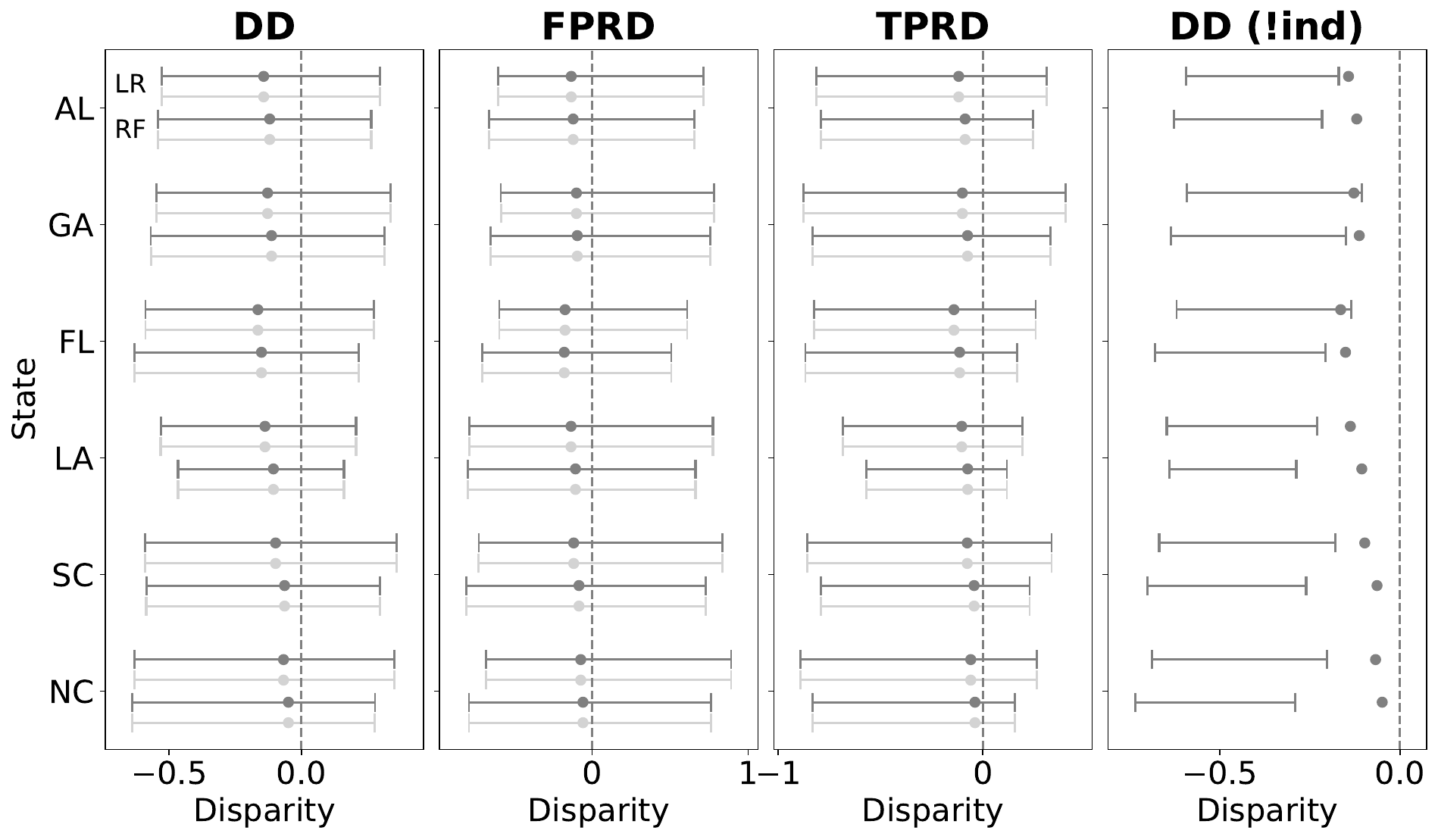}
        \caption{Comparison of different KDC implementations. In dark grey, we have our implementation that violates the independence assumption in \cite{kallus2022assessing}. In light grey, we have KDC's original implementation with the independence assumption -- nothing substantively changed. The top and bottom pairs of each state correspond to the estimators from logistic regression (LR) and random forest (RF) models, respectively. \cite{kallus2022assessing} additionally proposes estimators for estimating DD where the independence assumption is violated but they rarely bound true disparity (right subfigure), so we omit these results in our main experiments. 
        }
        \label{fig:l2_kallus_addl}
    \end{center}
\end{figure}

\subsection{Random Forest}\label{appsec:rf_measurement}

We also run experiments on bounding disparity when voter turnout is predicted on random forest models, as seen in Figure~\ref{fig:l2_measurement_rf}. We observe similar results to logistic regression in that our methods always bound true disparity within 95\% confidence intervals, and with bounds that are markedly tighter than KDC's. While our bounds are always within 5 p.p. and the same sign as true disparity, KDC is ranges from -0.5 to 0.5.

\begin{figure}[hbt]
    \begin{center}
\includegraphics[width=\textwidth]{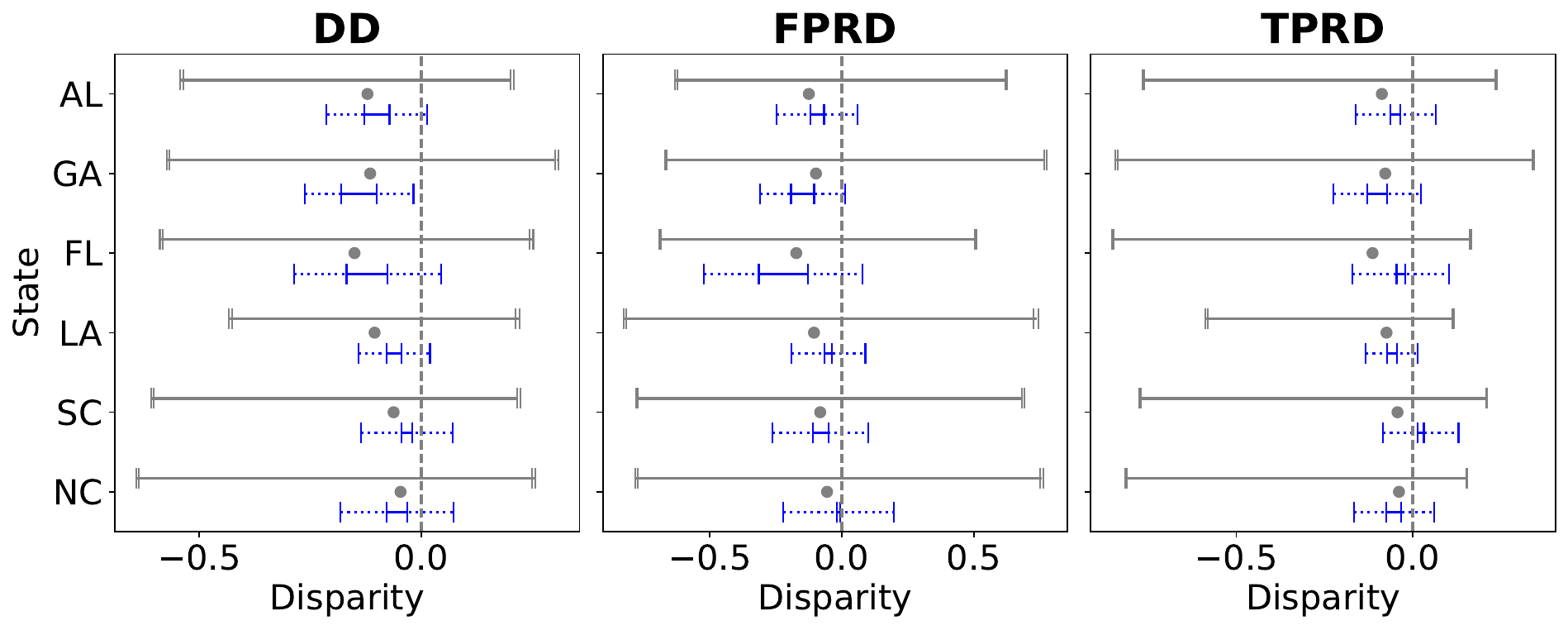}
        \caption{Comparison of our method of bounding true disparity (blue) to the method proposed in~\cite{kallus2022assessing} (grey), using a random forest model to predict voter turnout on L2 data in six states. We evaluate three disparity measures: demographic disparity (DD), false positive rate disp. (FPRD), and true positive rate disp. (TPRD). The grey dot represents true disparity. Both methods always bound true disparity within their 95\% standard errors. 
        }
        \label{fig:l2_measurement_rf}
    \end{center}
\end{figure}


\section{Details on Training Experiments}\label{appsec:addl_training}

\subsection{Experimental Setup}\label{appsec:training_setup}
As noted in the main text, to enforce fairness constraints during training, we solve the empirical version of Problem~\ref{prob:symmetric_indirect_a} and its symmetric analogue, which enforces negative covariance conditions and $\widehat{D}_{\mu}^L$ as a (negative) lower bound. For both of these problems we run the primal-dual algorithm described in Algorithm~\ref{alg:primal-dual} for $T$ iterations and then select the iteration from these two problems with the lowest loss on the training data while satisfying the constraints on the training and labeled subset.

We use the \cite{chamon2022constrained} method with two different model types under the hood, neural networks and logistic regression. Both types of models are implemented in Pytorch. Our neural network models consist of a single hidden layer of 8 nodes, with a ReLu activation. 

\subsection{CSL (Chamon et al.)}\label{appsec:csl}
We implement our constrained problem using the official Pytorch implementation provided by~\cite{chamon2022constrained}\footnote{\url{https://github.com/lfochamon/csl}} for both a logistic regression model as well as a shallow neural network. We run the non-convex optimization problem for 1,000 iterations with a batch size of 1,024 and use Adam \citep{kingma2014adam} for the gradient updates of the primal and dual problems with learning rates 0.001 and 0.005, respectively.
We provide further explanation of the mathematical background to the \cite{chamon2022constrained} method in Appendix B above.

\subsection{The Method of Wang et al.}\label{appsec:training_wang}
\cite{wang2020robust} propose two methods to impose fairness with noisy labels: \textit{1)} a distributionally robust optimization approach and \textit{2)} another optimization approach using robust fairness constraints, which is based on \cite{kallus2022assessing}. We use code provided by \cite{wang2020robust}\footnote{\url{https://github.com/wenshuoguo/robust-fairness-code}} to implement only the second method because it directly utilizes the protected attribute probabilities and yields better results. 

We tune the following hyperparameters: $\eta_\theta \in \{0.001, 0.01, 0.1\}$ and $\eta_\lambda \in \{0.25, 0.5, 1, 2\}$, which correspond to the descent step for $\theta$ and the ascent step for $\lambda$ in a zero-sum game between the $\theta$-player and $\lambda$-player, see Algorithm 1 and 4 of \cite{wang2020robust}. Finally, we also tune $\eta_w \in \{0.001, 0.01, 0.1\}$, which is the ascent step for $w$ (a component in the robust fairness criteria), see Algorithm 3 of \cite{wang2020robust}. In order to choose the best hyperparameters, we use the same data as outlined in Section~\ref{sec:exp_design} (80/20 train/test split), but use a validation set on 30\% of the training data (i.e., 24\% of the entire data). Note that as implemented in the codebase, \cite{wang2020robust} chooses the hyperparameter that results in the lowest loss while adhering to the fairness constraint with respect to \textbf{true race}. Since we assume access to true race on a small subset (1\%) of the data, we only evaluate the fairness constraint on 1\% of the validation set.

\subsection{The Method of Mozannar et al.}\label{appsec:training_mozannar}
\cite{mozannar2020fair} primarily focus on the setting of training a fair model with differentially private demograpghic data, which poses assumptions which are infeasible for our setting---however, the 
authors do propose a potential extension of their method to handle a case that matches ours: training a fair moodel with incomplete demographic data. The authors do not discuss this in detail or provide the code for this extension, so we modify the code \cite{mozannar2020fair} provide for their paper to implement the extension of their approach, detailed in Section 6 of their paper that is relevant for our setting. This involves using Fairlearn’s\footnote{\url{https://fairlearn.org/}} exponentiated gradient method changed so that it will only update for its fairness-related loss on data points in the labeled subset, but allows classification loss to be calculated over the entire training set. 

We note that Mozannar’s method guarantees fairness violation 2(epsilon + best gap) \citep{agarwal2018reductions} on their test set where epsilon is set by the user, but gives no method of approximating best\_gap. Thus, we set epsilon = $\alpha/2$ (i.e. assume best gap=0) in our experiments in order to come as close as possible to their method providing similar fairness bounds to ours on the test set.

\subsection{Satisfying Constraints and Bounding True Disparity on Training Set}\label{appsec:l2_feasibility_bounding}

In Table~\ref{tab:summarized_bound_satifaction}, we present a summarized description of the results of various training runs in terms of the satisfaction of covariance constraints on the labeled subset and meeting our desired bound $\alpha$ on $\widehat{D}_{\mu}^L$ on the training set, as well as whether or not we actually bound true disparity on the test set.
Specifically, we show the rates at which we meet different constraints and the rate at which we actually bound disparity on the test set.

As we see in Table~\ref{tab:summarized_bound_satifaction}, we meet our desired covariance condition on the labeled subset, which we use to enforce these conditions, approximately 81\% of the time, and we satisfy the condition that bounds $\widehat{D}_{\mu}^L$ by $\alpha$ approximately 75\% of the time. This is a side effect of the near-feasibility of the method by \cite{chamon2020probably}.

Note that not satisfying the $\widehat{D}_{\mu}^L$ condition does \emph{not} mean that we do not bound true disparity; it only means that we do not bound true disparity \emph{within our desired bounds}. It is still possible that we bound true disparity, on a slightly larger bound---which we see is the case in almost 97\% of instances on the test set. 

We note that we mistakenly misrepresented our results in the main paper, and will fix the error to match our full results in this Appendix as soon as we have access to updating the main draft: we stated that we always meet the covariance constraint on the \emph{training} set, when in fact the relevant test is the \emph{labeled subset}, and we meet the bounds approximately 81\% of the time. We also stated that we always bound the true disparity on the test set, when in fact we bound it approximately 97\% of the time.
\begin{table}[H]
    \centering
    \begin{tabular}{rrr}
\toprule
 \% Cov Match (aux) &  \% $D_l$ (train) < Disp. bound &  \% Estimators bound true disp. (test) \\
\midrule
             80.83 &                        75.0 &                                 96.67 \\
\bottomrule
\end{tabular}
    \caption{We present the rate of satisfaction of the covariance bounds in the labeled sunbset, as well as the rate at which we satisfy the bound on $\widehat{D}_{\mu}^L$, our linear estimate of disparity, on the training set, and the rate at which we bound true disparity within our error bounds on the test set. We note that lack of satisfaction of our bound on the training and labeled subset simply means that the \cite{chamon2022constrained} method was only able to find a \emph{near} feasible solution for certain rounds of certain problems.}
    \label{tab:summarized_bound_satifaction}
\end{table}

\subsection{Results on Oracle and Naive}\label{appsec:results_oracle_naive}
\begin{figure}
    \centering
    \includegraphics[width=\textwidth]{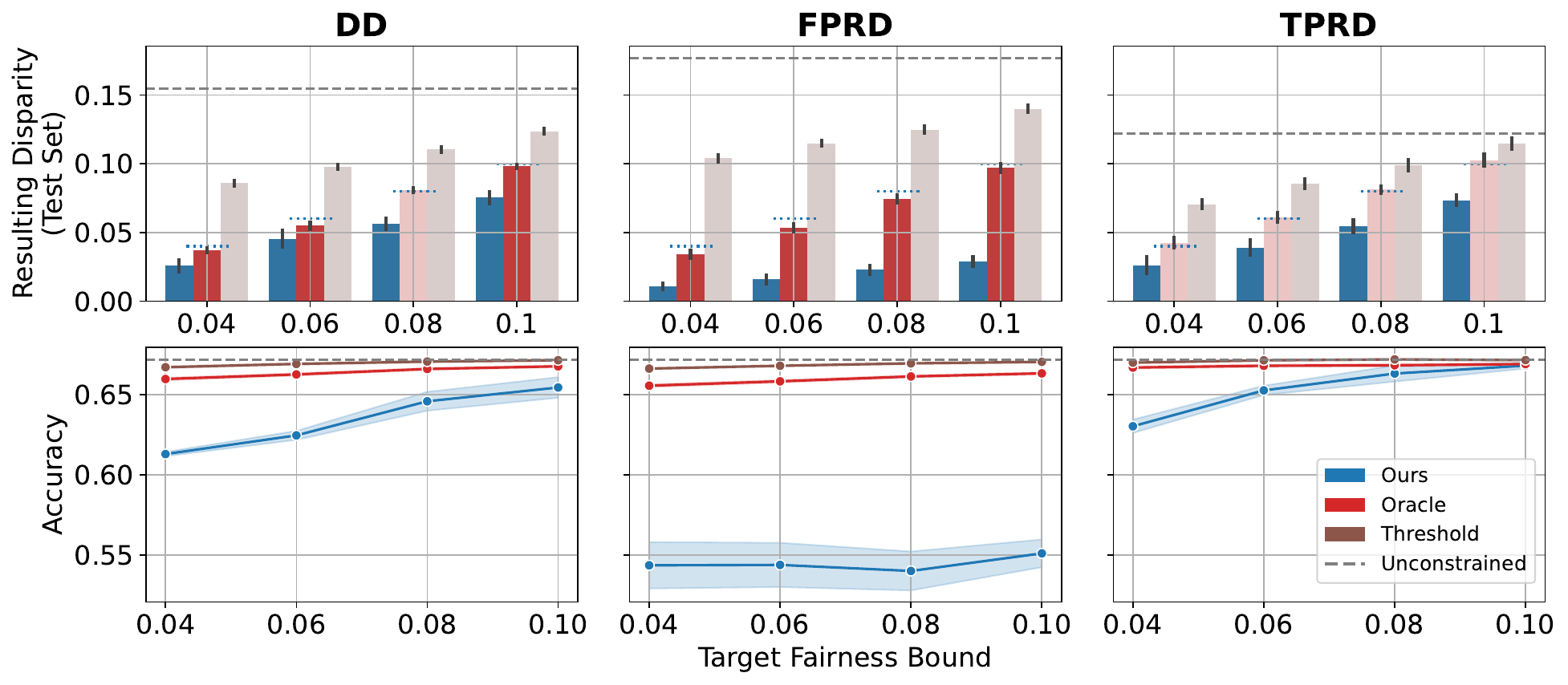}
    \caption{Mean and standard deviation of resulting disparity (top, y-axis) and accuracy (bottom, y-axis) on the test set after enforcing the target fairness bounds (x-axis) on our method (blue); using ground truth race on the entire data, i.e., ``oracle'' model (red); and using only the estimated race probabilities, thresholded to be binary (brown) over ten trials. On the top row, we fade bars when the mean does not meet the desired bound, which is indicated by the dotted blue lines. The dashed grey line in all plots indicates disparity from the unconstrained model.}
    \label{fig:training_lr_addl}
\end{figure}

In Figure~\ref{fig:training_lr_addl}, we present the mean and standard deviation of the resulting disparity and on the test set, as well as classifier accuracy on the test set, of experiments with our method compared to an oracle model, that has access to ground truth race on the \emph{whole} dataset and uses these to enforce a constraint directly on ground truth disparity during training, as well as a naive model which simply enforces a constrained directly on the observed disparity of the noisy labels, without any correction. (Namely, in this technique, we simply threshold the probabilistic predictions of race on 0.5 to make them binary, and use as race  labels.) As a whole, we perform relatively comparably to the oracle, except on FPRD. We always outperform the naive method in terms of reducing disparity, which is to be expected. We typically perform within 2 percentage points of accuracy from the oracle, (except for the 0.04 and 0.06 bounds on DD and the 0.04 bound on TPRD). We suggest the accuracy results in this figure show the fairness-accuracy trade-off in this setting: when we dip below the oracle in terms of accuracy, it is most often because we are bounding disparity lower than the oracle is (e.g., on the 0.04 bounds in DD or TPRD). And, while we do not outperform the naive method in terms of accuracy, we consistently out-perform it in terms of disparity.

\subsection{Results on Neural Network Models}\label{appsec:neural_net}
We describe the outcome of our shallow neural network experiments in Figure\ref{fig:training_nn}. We describe details on the optimization of the neural networks in Section~\ref{appsec:training_setup}. We note that for these experiments, we do not compare to \cite{wang2020robust} as they do not provide a built-in way to work with neural networks in their code. Although we do not reach the desired disparity bounds as often when using neural networks, we consistently out-perform all methods except for the oracle on disparity reduction, as the guarantees on the labeled subset do not generalize, and enforcing constraints on the thresholded labels do not take the protected attribute label noise into account.
In this case, we also outperform the labeled subset on accuracy. This may be because it takes a larger amount of data to effectively train a neural network than logistic regression models, so the accuracy does not saturate with the labeled subset. 
\begin{figure}
    \centering
    \includegraphics[width=\textwidth]{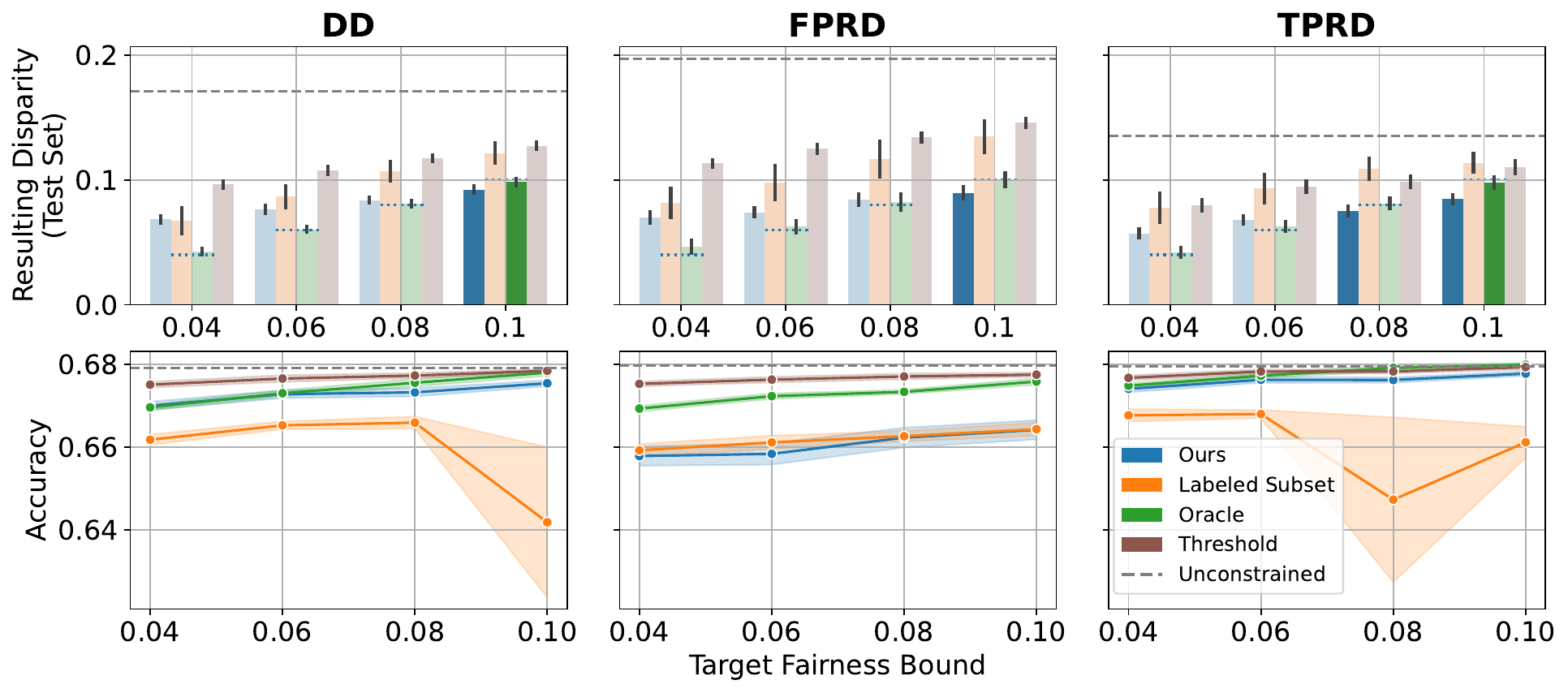}
    \caption{Mean and standard deviation of resulting disparity (top, y-axis) and accuracy (bottom, y-axis) on the test set after enforcing the target fairness bounds (x-axis) on our method (blue); only using the labeled subset with true labels (orange); using ground truth race on the entire data, i.e., ``oracle'' model (red); and using only the estimated race probabilities, thresholded to be binary (brown) over ten trials. On the top row, we fade bars when the mean does not meet the desired bound, which is indicated by the dotted blue lines. The dashed grey line in all plots indicates disparity from the unconstrained model.}
    \label{fig:training_nn}
\end{figure}

\section{Additional Experiments: COMPAS}
\label{appsec:HMDA}

In this section, we present a suite of additional experiments we run on the COMPAS \citep{angwin2016machine} dataset. The COMPAS algorithm is used by parole officers and judges across the United States to determine a criminal's risk of recidivism, or re-committing the same crime. In 2016, ProPublica released a seminal article \citep{angwin2016machine} detailing how the algorithm is systematically biased against Black defendants. The dataset used to train the algorithm has since been widely used as benchmarks in the fair machine learning literature.

\subsection{Data Description}
\label{appsec:compas_data_desc}
We use the eight features used in previous analyses of the dataset as predictors in our model: the decile of the COMPAS score, the decile of the predicted COMPAS score, the number of prior crimes committed, the number of days before screening arrest, the number of days spent in jail, an indicator for whether the crime committed was a felony, age split into categories, and the score in categorical form. We process the data following \cite{angwin2016machine}, resulting in $n=6,128$ datapoints. Table~\ref{tab:compas_features} outlines the feature distribution of the dataset.

\begin{table}[h]
\centering
\begin{tabular}{ll}
\toprule
Feature & COMPAS \\
 & ($n$=6,128) \\
  \midrule
Decile Score & 4.41 \\
 & (2.84) \\
  \midrule
Predited Decile Score & 3.64 \\
 & (2.49) \\
  \midrule
\# of Priors & 3.23 \\
 & (4.72) \\
  \midrule
\# of Days Before Screening Arrest & -1.75 \\
 & (5.05) \\
  \midrule
Length of Stay in Jail (Hours) & 361.26 \\
 & (1,118.60) \\
  \midrule
Crime is a Felony & 0.64 \\
 & (0.48) \\
  \midrule
Age Category & 0.65 \\
 & (0.82) \\
  \midrule
Risk Score in 3 Levels & 1.08 \\
 & (0.66) \\
 \midrule
 \midrule
Black & 0.51 \\
Two Year Recidivism & 0.45 \\
\bottomrule
\caption{Distribution of features used for COMPAS. Each cell shows the mean of each feature and the standard deviation in parentheses. The last two rows show the proportion of observations that are Black and who recidivized within two years.}\label{tab:compas_features}
\end{tabular}
\end{table}

\subsection{Race Probabilities}
We generate estimates of race (Black vs. non-Black) based on first name and last name using a LSTM model used in ~\cite{zhu2023weak} that was trained on voter rolls from Florida. The predictive performance and calibration of these estimates is displayed in Table~\ref{tab:compas_bisg_results} and Figure~\ref{fig:compas_calibration}, respectively. In general, the results are quite reasonable; accuracy is at 73\% while the AUC is 86\%. The probabilities are somewhat calibrated, although the LSTM model tends to overestimate the probability of Black.

\begin{table}[ht]
\centering
\begin{tabular}{rrrr}
\toprule
Accuracy &  Precision &  Recall &  AUC \\
\midrule
0.73 & 0.86 & 0.56 & 0.86 \\ 
\bottomrule
\end{tabular}
\caption{Accuracy, precision, recall (thresholded on 0.5), and AUC for predicting probability a person is Black in the COMPAS dataset.}
\label{tab:compas_bisg_results}
\end{table}

\begin{figure}[ht]
    \centering
    \includegraphics[width=0.3\textwidth]{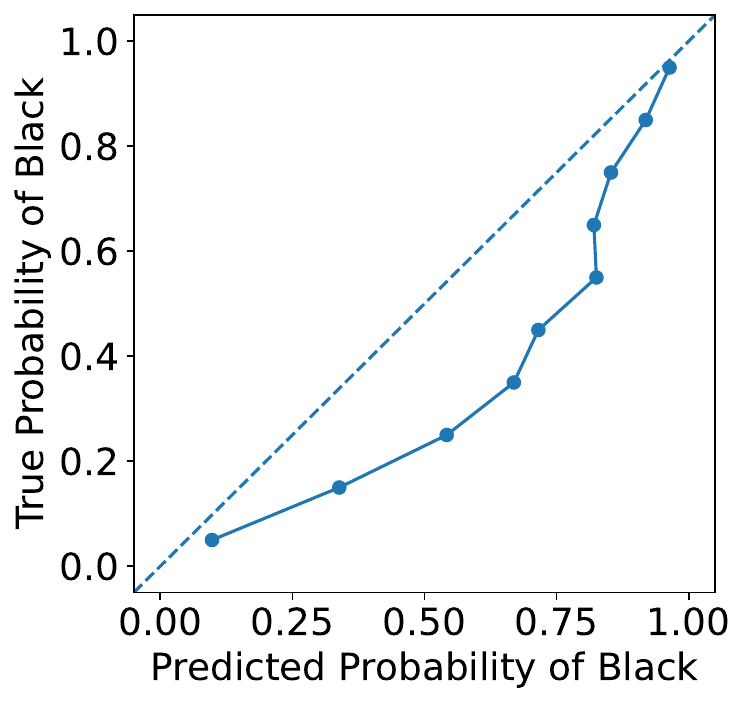}
            \caption{Calibration plot showing the predicted probability a person in the dataset is Black (x-axis) versus the actual proportion of Black people in the dataset (y-axis) for COMPAS.}
    \label{fig:compas_calibration}
\end{figure}

\subsection{Measurement Experiments}\label{appsec:compas_measurement}

We first compare our method of bounding disparity to that of KDC. We train an unconstrained logistic regression model with a 80/20 split on the data, i.e., $n=1,226$ in the test set. Then, we construct the labeled subset by sampling 50\% of the test set ($n=613$) and use that to check out covariance constraints. We also compute $\hat{D}_L$ and $\hat{D}_P$ with standard errors on the entire test set, as specified by the procedure in Appendix Section~\ref{appsec:addl_measurement}.  

\begin{figure}[h]
    \centering
    \includegraphics[width=0.9\textwidth]{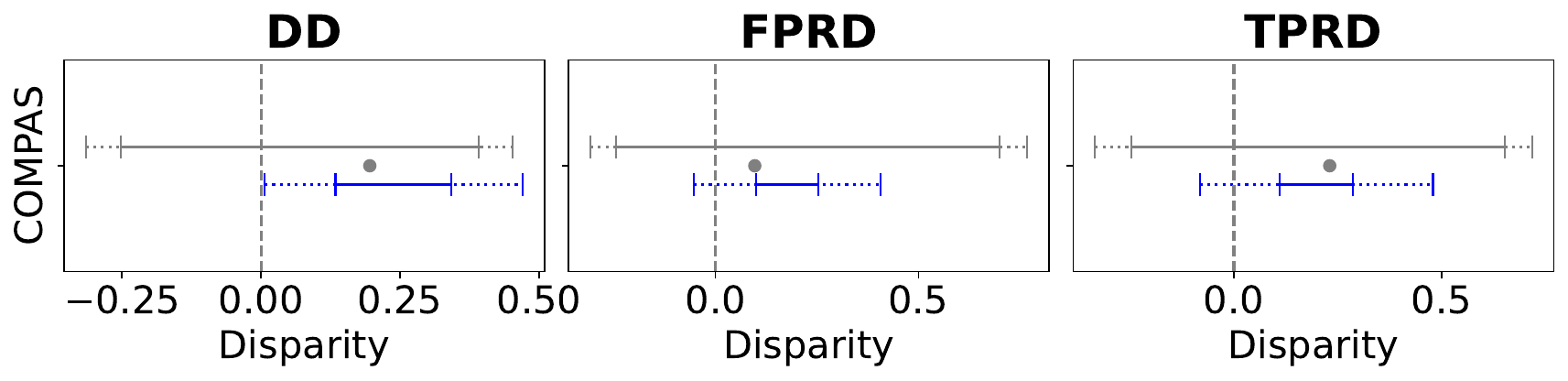}
            \caption{Comparison of our method of bounding true disparity (blue) to the method proposed in~\cite{kallus2022assessing} (grey), using a logistic regression model to predict two-year recidivism on the COMPAS dataset. We compare results across three disparity measures: demographic disparity (DD), false positive rate disp. (FPRD), and true positive rate disp. (TPRD). The grey dot represents true disparity. Both methods always bound true disparity within the 95\% standard errors.}

    \label{fig:compas_measurement}
\end{figure}

Our main results are displayed in Figure~\ref{fig:compas_measurement}. Similar to the L2 data, our bounds are consistently tighter than KDC, albeit to a lesser extent in this case since the COMPAS dataset is significantly smaller. Despite this fact, we emphasize that, unlike KDC, our estimators are always within the same sign as the true disparity, barring the standard errors which shrink as the data grows larger.

\begin{table}[h]
\centering
\begin{tabular}{rrrr}
\toprule
Accuracy &  Precision &  Recall &  AUC \\
\midrule
0.69 & 0.69 & 0.57 & 0.74 \\ 
\bottomrule
\end{tabular}
\caption{Accuracy, precision, recall (thresholded on 0.5), and AUC for predicting two-year recidivism on the COMPAS dataset using a logistic regression model.}
\label{tab:compas_prediction}
\end{table}

\subsection{Training Experiments}\label{appsec:compas_train}

We compare our training method to \cite{wang2020robust}, \cite{mozannar2020fair} and a baseline where we directly enforce disparity constraints on only the labeled subset. We run 10 trials -- each corresponding to different seeds -- and report the mean and standard deviation of the accuracy and disparity on the test set in Figure~\ref{fig:training_compas}. For each trial, we split our data ($n=6,128$) into train and test sets, with a 80/20 split. From the training set, we subsample the labeled subset so that it is 10\% of the total data (around $n=613$). We chose a higher proportion of the data compared to L2 to adjust for the smaller dataset. The remaining details are as described in Section~\ref{sec:exp_design}. Note that the resulting disparities for the unconstrained model differ among the three fairness metrics. On DD and TPRD, the unconstrained model resulted in a 0.28-0.29 disparity, but it drops to 0.21 for FPRD. We adjusted our target fairness bounds accordingly.

\begin{figure}[ht]
    \centering
    \includegraphics[width=\textwidth]{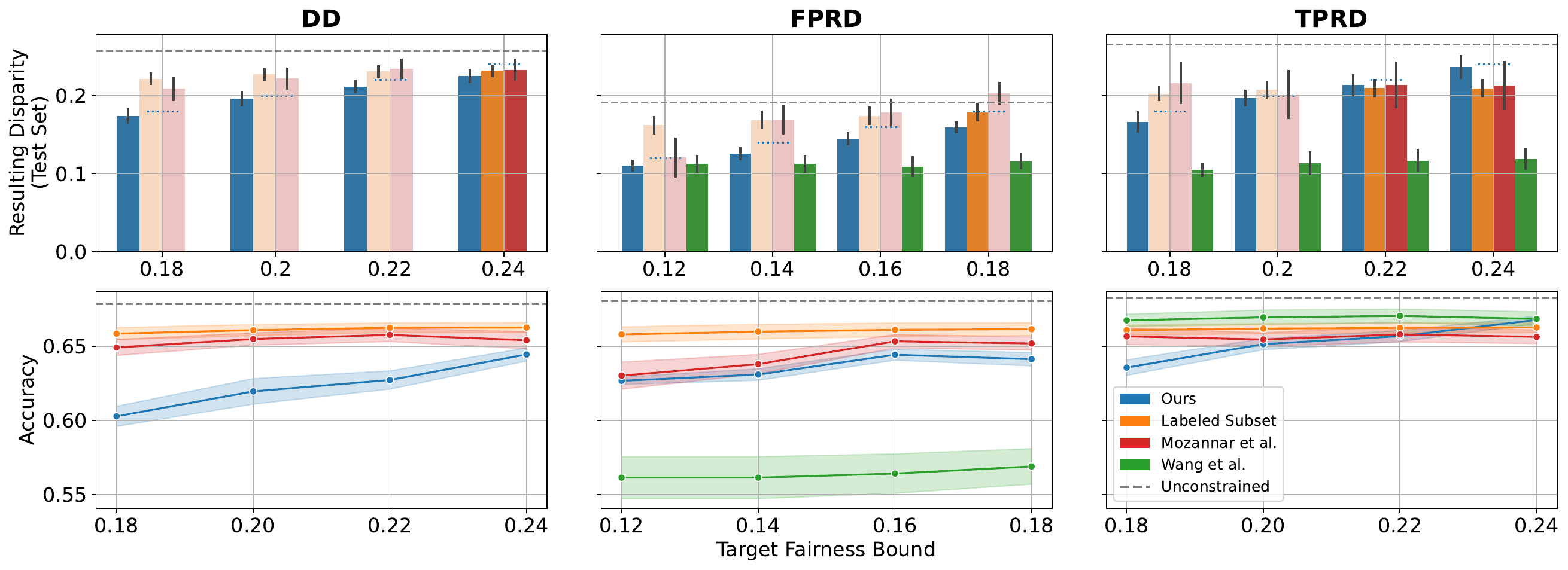}
    \caption{Mean and standard deviation of resulting disparity (top, y-axis) and accuracy (bottom, y-axis) on the test set after enforcing the target fairness bounds (x-axis) on our method (blue); Wang et al.'s method (green); Mozannar et al.'s method (red) and only using the labeled subset with true labels (orange). 
    On the top row, we fade bars when the mean does not meet the desired bound, which is indicated by the dotted blue lines. 
    The dashed grey line in all plots indicates disparity from the unconstrained model.}
    \label{fig:training_compas}
\end{figure}

In Figure~\ref{fig:training_compas}, we see that our method again is able to consistently meet the desired disparity bound across all experiments, as opposed to the Mozannar et al. method (red) or the labeled subset method (orange), which only meet the constraint 3 out of 12 times each. While the Wang et al. method does meet the disparity bound at each experiment where the comparison is possible (i.e., excluding DD), in the case of FPRD, there is a steep accuracy cost. In the case of DD, our method has worse accuracy bounds likely due to actually meeting the disparity bounds (the accuracy is comparable in the experiment where all three method reach the DD constraint, i.e. 0.24). In TPDR an FPRD, our method performs largely comparably to the other methods, with the exception of the low accuracy of Wang et al. in FPRD. 

\section{Additional Experiments: Simulation Study}\label{appsec:sim}
We note that the utility of our method is often dependent upon the size of the subset of the data labeled with the protected attribute---if this subset is relatively large, then (depending on the complexity of the learning problem) it may be sufficient to train a model using the available labeled data. Symmetrically, if the labeled subset is exceedingly small, the enforcement of the covariance constraints during training may not generalize to the larger dataset.
To characterize the regimes under which our method may be likely to perform well relative to others, we empirically study simulations that capture the essence of the situation. We study the utility of our method in comparison to only relying on the labeled subset to train a model along two axes: data complexity, which we simulate by adjusting the number of features, and size of the labeled subset. 

Overall, we find that there exists a regime, even in simple problems, where there is insufficient data for the labeled subset to effectively bound disparity to the desired threshold. We find that the more complex the data is, the larger this regime is---with the most complex setting in our simulations (50 features) suggesting that the labeled subset technique does not converge even when the size of the labeled subset is 10,000 samples, or 20\% of the overall dataset.

\subsection{Simulation Design} 
In this section, we describe the design of our simulation used for additional experiments. While stylized, our simulation has the advantage that we can vary key features of the setting like the dimensionality and distribution of the data, the size of the labeled and unlabeled datasets, the complexity of the relationship between the features and the outcome, and so on. To be useful, however, we must be able to ensure that the key conditions of our method are met by the data-generating process.
To ensure this while also allowing for the tuneability and flexibility we require, we
settle on a hierarchical model specified by parameterized components that are individually simple but can serve as building blocks.  In particular, the model building blocks consist of:
\begin{itemize}
    \item Primitive features $Z_1,...,Z_m$ 
    \item Conditional probability $b$ of being Black a function of $Z_1...Z_m$
    \item Realized status as Black or not $B$ drawn from Bernoulli($b$)
        \item Downstream features $X_1,...X_p$, a function of $Z_1,...,Z_m$ and $B$
    \item Score for outcome $P(Y)$, a function of downstream features $X_1...X_p$
    \item Outcome $Y$,which is an indicator of $P(Y)$ at threshold $\tau$ with some noise probability of being flipped $0\leftrightarrow 1$
\end{itemize}
 
 The primitive features $Z_1,...,Z_p$ intuitively represent the variables that correspond to proxies in BIFSG, e.g. geographic locations. They serve a dual role: first, as in BIFSG, they give rise to the probability that an individual is Black. Second, since the secondary features $X$ are a function of $Z$, they affect the distribution of these features; thus downstream, they affect $P(Y)$ and ultimately $Y$, but do not directly enter into $P(Y)$ or $Y$ themselves. This corresponds to how geography and other variables which are correlated to race may also be correlated to many learning-relevant features, even when not directly entering causing the outcome of interest themselves. Note that in addition to primitives affecting $P(Y)$ through each $X$, we allow for $B$ to affect $P(Y)$. This corresponds to how there may be associations between group membership and features which affect the outcome of the interest via the downstream features even if the group status is not directly relevant tot he outcome of interest. 

These relationships are not fully specified by the description in the text above, of course, so we provide details of the selected functional forms in Table \ref{tab:feature_desc_simulations}. Figure 11 
also summarizes the features and their associative relationships visually. This visualization, along with the language of directed acyclic graphs (DAGs),  allows us to more easily reason about whether the covariance conditions are likely to be satisfied in our model, at least for the underlying outcome. 


\definecolor{offwhite}{HTML}{F2EDED}
\tikzset{
    > = stealth,
    every node/.append style = {
        text = black
    },
    every path/.append style = {
        arrows = ->,
        draw = black,
        fill = black
    },
    hidden/.style = {
        draw = black,
        shape = circle,
        inner sep = 5pt
    }
}

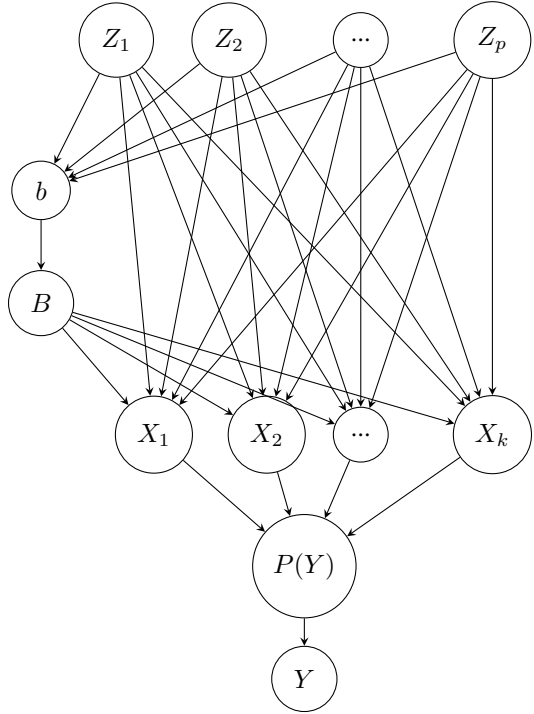
\begin{figure}[H]\label{fig:graphical_model}
\centering

\begin{tikzpicture}

    \node[hidden]  (Z1) at (0,0) {$Z_1$};
    \node[hidden]  (Z2) at (1.5,0) {$Z_2$};
    \node[hidden]  (Zdot) at (3.25,0) {...};
    \node[hidden]  (Zp) at (5,0) {$Z_p$};
    \node[hidden]  (b) at (-1,-2) {$b$};
    \node[hidden]  (B) at (-1,-3.5) {$B$};
    \node[hidden]  (Py) at (2.5,-7) {$P(Y)$};
    \node[hidden]  (Y) at (2.5,-8.5) {$Y$};
    \node[hidden]  (X1) at (0.5,-5.25) {$X_1$};
    \node[hidden]  (X2) at (2,-5.25) {$X_2$};
    \node[hidden]  (Xdots) at (3.25,-5.25) {...};
    \node[hidden]  (Xk) at (5,-5.25) {$X_k$};
        \path (Z1) edge (b);
        \path (Z2) edge (b);
        \path (Zdot) edge (b);
        \path (Zp) edge (b);
        \path (Z1) edge (X1);
        \path (Z2) edge (X1);
        \path (Zdot) edge (X1);
        \path (Zp) edge (X1);
        \path (Z1) edge (X2);
        \path (Z2) edge (X2);
        \path (Zdot) edge (X2);
        \path (Zp) edge (X2);
        \path (Z1) edge (Xdots);
        \path (Z2) edge (Xdots);
        \path (Zdot) edge (Xdots);
        \path (Zp) edge (Xdots);
        \path (Z1) edge (Xk);
        \path (Z2) edge (Xk);
        \path (Zdot) edge (Xk);
        \path (Zp) edge (Xk);
        \path (X1) edge (Py);
        \path (X2) edge (Py);
        \path (Xdots) edge (Py);
        \path (Xk) edge (Py);
        \path (Py) edge (Y);
        \path (b) edge (B);
        \path (B) edge (X1);
        \path (B) edge (X2);
        \path (B) edge (Xdots);
        \path (B) edge (Xk);

\end{tikzpicture}
\caption{A heuristic depiction of the data generating process for our simulations. Nodes indicate random variables, and edges indicate (causal) relationships between nodes. Importantly, relationships are not necessarily linear.}
\end{figure}

\begin{table}
    \centering
    \begin{tabular}{c|c|c}
         Feature & Interpretation  &  Functional Form  \\ 
         \hline
         $Z_j$ & Primitive Feature& $Z_j \sim U[0,1]$, $j=1,...m$ \\
         $X_i$ & Secondary Feature & $X_i = \sum_{k=1}^{h_k} c_i  X_i^k, i=1,...p$\\
        $h_k$ & Degree  & $h_k \sim U\{0,1,2,3\}$\\
        $c_i$ & Coefficients & $c_i \sim U[0,1]$, $i=1,...p$ \\
        $b$ & Probability Black & $b=\max\{0,\min\{1,\tilde{b}\}\}$, \\
        & & $\tilde{b} \sim \begin{cases}
            \mathcal{N}(0.1,.04) & \frac{1}{m}\sum_{j=1}^{m} Z_j  \leq \tau_b\\
            \mathcal{N}(0.9,.04) & \frac{1}{m}\sum_{j=1}^{m} Z_j  > \tau_b
        \end{cases}$\\ 
        $\tau_b$ & Threshold on $b$  & $\frac{1}{2} +1.2\sqrt{1/(12m)}$
        \\
        & (based Irwin-Hall distribution) & \\ 
        $B$ & Indicator for Black & $B \sim \text{Bernoulli}(b)$ \\
        $\tilde{P}(Y)$ & Score of Outcome & $\tilde{P}(Y) = \sum_{i} \left[d_i X_i^k + d_{iB} B\right]$ \\
        $P(Y)$ & Normalized Score of Outcome & $P(Y) = \frac{\tilde{P}(Y) - \min(\tilde{P}(Y))}{\max(\tilde{P}(Y)) - \min(\tilde{P}(Y))}$\\
        $Y$ & Realized Outcome & $Y \sim \begin{cases}  \text{Bernoulli}(0.1) & P(Y) \leq \tau \\\text{Bernoulli} (0.9) & P(Y)> \tau \\\end{cases}$
        \\
        $d_i$ & Coefficients for features $X$ & $d_i \sim U[0,1]$\\
        $d_{iB}$ & Coefficients for indicator for Black & $d_{iB} \sim U[0,u_B]$\\
         
    \end{tabular}
    \caption{Description of several variables we use in our simulation study and their functional forms. For ease of notation, we omit the index denoting individuals in the dataset. Unspecified constants were selected by inspection to match key indicators across scenario and are specified in Table 8.}
    \label{tab:feature_desc_simulations}
\end{table}




\begin{figure}
    \centering
    \includegraphics[width=\textwidth]{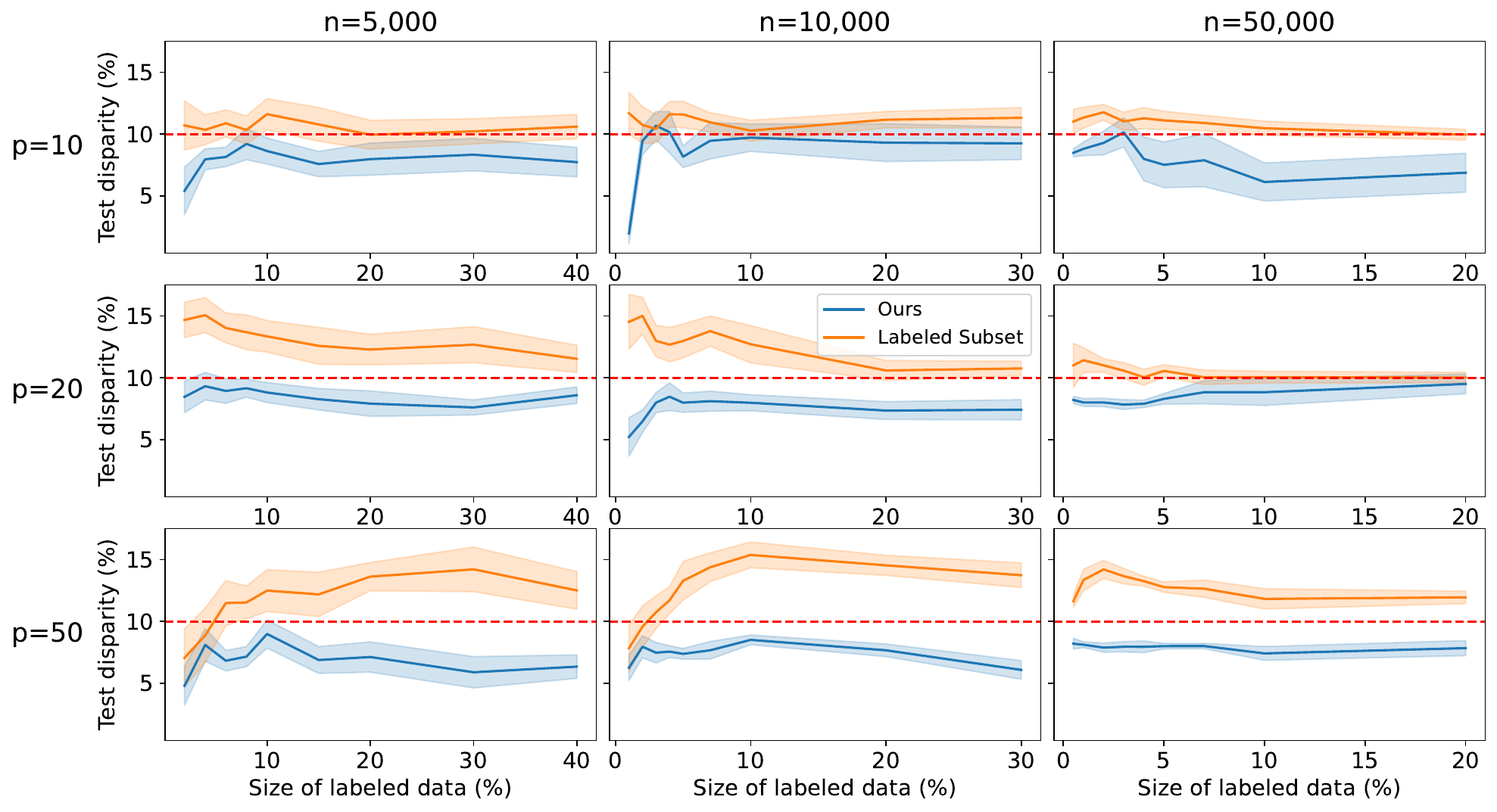}
    \caption{We present a three by three figure showing the test disparity of the our disparity reduction method when compared with relying on only the labeled subset to reduce disparity by directly enforcing a constraint on the protected attribute labels. The rows correspond to datasets of increasing sizes (number of features from 10 to 50), indicating problems of increasing complexity. The columns correspond to the size of the overall dataset, ranging from 5,000 to 50,000 samples. The x-axis shows the percentage of the total dataset is decicated to the labeled subset, and the y-axis denotes the percentage disparity between the two groups calculated on the test set. The blue graphs correspond to our method, and the orange to the labeled subset method. The red dashed line is the desired disparity bound.}
    \label{fig:sims_disparity}
\end{figure}

\begin{figure}
    \centering
    \includegraphics[width=\textwidth]{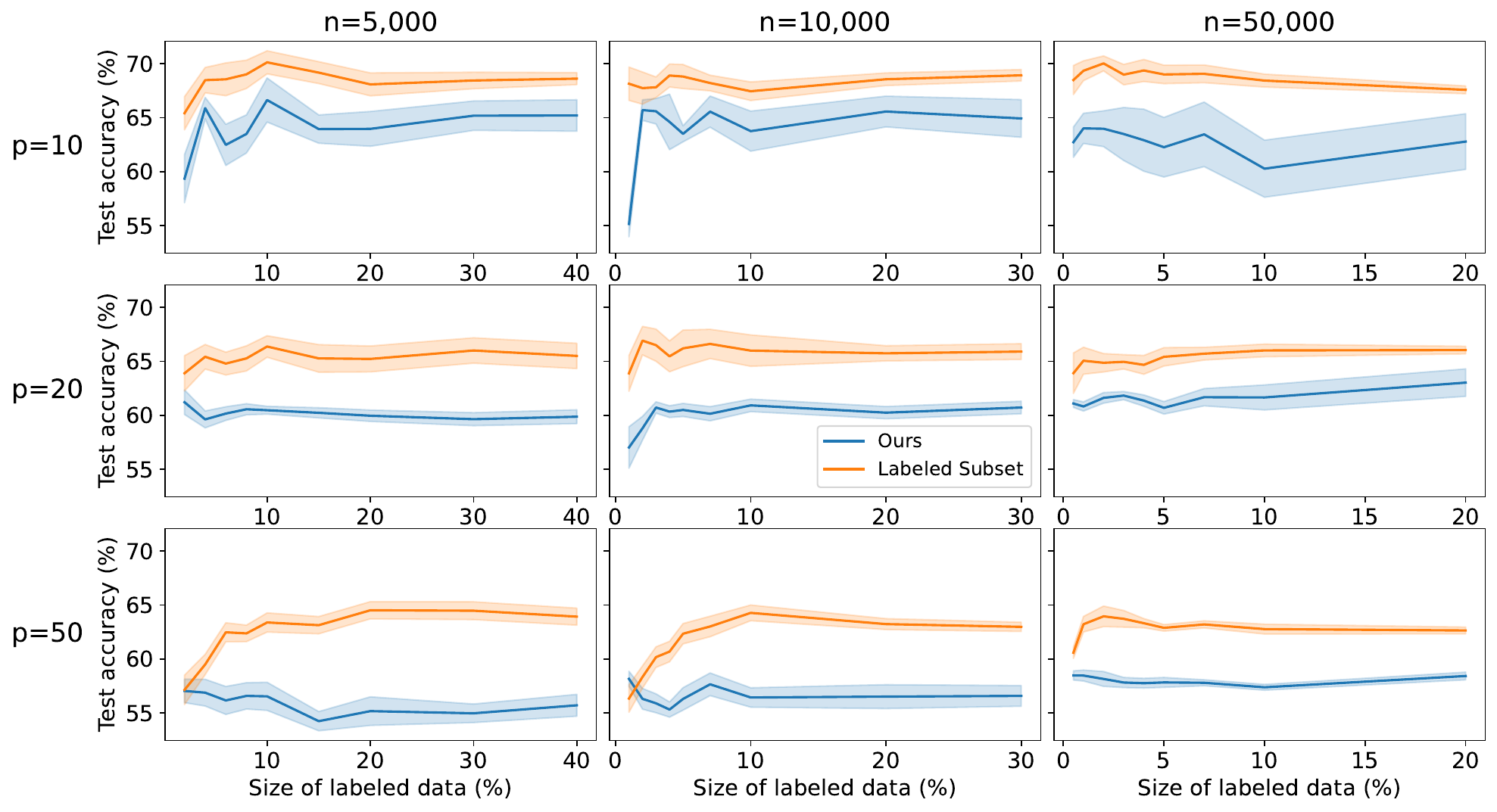}
    \caption{We present a three by three figure showing the test accuracy of the models created using our disparity reduction method when compared with relying on training models only on the labeled subset and reducing disparity by directly enforcing a constraint on the protected attribute labels. The rows correspond to datasets of increasing sizes (number of features from 10 to 50), indicating problems of increasing complexity. The columns correspond to the size of the overall dataset, ranging from 5,000 to 50,000 samples. The x-axis shows the percentage of the total dataset is dedicated to the labeled subset, and the y-axis denotes the test accuracy of the models. The blue graphs correspond to our method, and the orange to the labeled subset method.}
    \label{fig:sims_accuracy}
\end{figure}

\subsection{Experimental Setup}
Following the notation above, we have $p$ to be the number of features $X$ in our data, and let $n$ be the number of datapoints. We run experiments for $p \in \{10, 20, 50\}$ and $n \in \{5000, 10000, 50000\}$. For each $p$, we fix the parameters in the data generation process and realize 50,000 datapoints. Refer to Table~\ref{tab:params_sims} for a list of parameter values, which differ slightly for each $p$ to control demographic disparity on the dataset at around 0.25-0.28. For experiments $n \in \{5000, 10000\}$, we simply randomly subsample from the 50,000 dataset.

\begin{table}[h]
\centering
\begin{tabular}{lllll}
\toprule
$p$ & $m$ & $\tau$ & $u_B$ \\
\midrule
10 & 4 & 0.4 & 0.05 \\ 
20 & 5 & 0.4 & 0.1 \\ 
50 & 10 & 0.425 & 0.2 \\ 
\bottomrule
\end{tabular}
\caption{List of parameters in the data generation process for each $p$, the number of secondary features $X$ in the data. $m$ corresponds to the number of primitive features $Z$, $\tau$ is the threshold for $P(Y)$, while $u_B$ is the upper bound for the uniform distribution to generate $d_{iB}$, see Table~\ref{tab:feature_desc_simulations}.}
\label{tab:params_sims}
\end{table}

The last dimension we tune is the size of the labeled subset (measured by the percentage of $n$), which from hereon we refer to as $e$. For each $n$, we specified slightly different $e$ as outlined in Table~\ref{tab:n_by_k}. This is to account for the fact that, for instance, one might need 40\% of 5,000 datapoints with protected attribute labels to learn a predictor that reaches the target disparity bound. On the other hand, using 20\% of 50,000 datapoints might be more than enough, especially considering the exponentially higher costs to query thousands of people's protected attributes.

\begin{table}[h]
\centering
\begin{tabular}{ll}
\toprule
$n$ & $e$ \\
\midrule
5,000 & $\{2, 4, 6, 8, 10, 15, 20, 30, 40\}$ \\ 
10,000 & $\{1, 2, 3, 4, 5, 7, 10, 20, 30\}$ \\ 
50,000 & $\{0.5, 1, 2, 3, 4, 5, 7, 10, 20\}$ \\ 
\bottomrule
\end{tabular}
\caption{Suite of experiments varying percentage of the data taken as labeled subset ($e$) by the size of the full dataset ($n$).}
\label{tab:n_by_k}
\end{table}

We prototype these simulation experiments on demographic parity. For each experiment, we split the data 80/20 into train/test data, then repeat 10 times with different seeds. We run both our method and the labeled subset method, evaluating disparity and accuracy on the test set.

\subsection{Results}
We present our results in Figures~\ref{fig:sims_disparity} and~\ref{fig:sims_accuracy}. In Figure~\ref{fig:sims_disparity}, we see that while increasing the size of the labeled subset can sometimes lead to a regime where training on the labeled subset alone can produce a model which comes close to (or in one case--$n=50,000$, $p=10$, reaches) the desired disparity bound, for the most part, even with a large labeled subset, the mean of the disparity over 10 trials is above the desired disparity threshold. Meanwhile, our method stays below the desired disparity threshold across all nine experiments. 

As we can see by looking at the rows from top to bottom, the complex (i.e., more features in the data) the problem is, the more data is necessary for the labeled subset to get close to the desired disparity bound. 
Thus, our simulation experiment sheds light on the fact that model applications with small amounts of labeled data, and more complex data, are particularly well-suited for our method. 
\end{document}